\author{%
  Evgenii Chzhen\\
  CNRS, Université Paris-Saclay\\
  \texttt{evgenii.chzhen@cnrs.fr} \\
  \And
  Mohamed Hebiri\\
  Université Gustave Eiffel\\
  \texttt{mohamed.hebiri@univ-eiffel.fr}
  \AND
  Gayane Taturyan\\
  IRT SystemX, Université Gustave Eiffel,\\
  Université Paul-Sabatier\\
  \texttt{gayane.taturyan@univ-eiffel.fr}
}
\definecolor{green}{rgb}{0.0, 0.425, 0.0}
\definecolor{red}{rgb}{0.8, 0.0, 0.0}
\definecolor{blue}{rgb}{0.01, 0.28, 1.0}
\definecolor{yellow}{rgb}{0.98, 0.93, 0.36}
\definecolor{orange}{rgb}{0.8, 0.33, 0.0}
\renewcommand{\enspace}{\,}
\newcommand{\eqdef}{\stackrel{\mbox{\scriptsize \rm def}}{=}}
\renewcommand{\leq}{\leqslant}
\renewcommand{\geq}{\geqslant}
\renewcommand{\phi}{\varphi}
\renewcommand{\epsilon}{\varepsilon}
\definecolor{high}{RGB}{213,94,0}
\definecolor{greenish}{RGB}{94,213,0}
\definecolor{cite}{RGB}{230,159,0}
\newcommand{\bX}{{\boldsymbol X}}
\newcommand{\bp}{{\boldsymbol p}}
\newcommand{\bx}{{\boldsymbol x}}
\newcommand{\by}{{\boldsymbol y}}
\DeclareMathOperator*{\argmin}{arg\,min\,}
\newcommand{\Exp}{\mathbb{E}} 
\newcommand{\Expf}{\mathbf{E}}
\newcommand{\Prob}{\mathbb{P}} 
\newcommand{\bbR}{\mathbb{R}} 
\newcommand{\bbN}{\mathbb{N}} 
\newcommand{\data}{\mathcal{D}}
\newcommand{\class}[1]{\mathcal{#1}} 
\newcommand{\enscond}[2]{\left\{#1 \,:\, #2\right\}}
\newcommand{\ens}[1]{\left\{#1 \right\}}
\newcommand{\ind}[1]{\mathbb{I}{\left\{#1\right\}}}
\newcommand{\scalar}[2]{\left\langle#1,\,#2\right\rangle}
\newcommand{\ie}{{\em i.e.,~}}
\newcommand{\wrt}{{\em w.r.t.~}}
\newcommand{\iid}{{\rm i.i.d.~}}
\DeclareMathOperator{\Var}{Var}
\DeclareMathOperator{\trainlabeled}{train}
\DeclareMathOperator{\trainunlab}{unlabeled}
\DeclareMathOperator{\test}{test}
\newcommand{\bt}{\boldsymbol{t}}
\newcommand{\blambda}{\boldsymbol{\lambda}}
\newcommand{\btau}{\boldsymbol{\tau}}
\newcommand{\bnu}{\boldsymbol{\nu}}
\newcommand{\bsigma}{\boldsymbol{\sigma}}
\newtheorem{theorem}{Theorem}[section]
\newtheorem{lemma}{Lemma}[section]
\newtheorem{assumption}{Assumption}[section]
\newtheorem{remark}{Remark}[section]
\newtheorem{claim}{Claim}[section]
\newcommand{\bg}{\boldsymbol{g}}
\renewcommand{\hat}[1]{\widehat{#1}}
\DeclareMathOperator{\supp}{supp}
\newcommand{\bepsilon}{\boldsymbol{\epsilon}}
\newcommand{\bgamma}{\boldsymbol{\gamma}}
\renewcommand{\d}{\mathrm{d}\,}
\newcommand{\hatY}{\hat{\class{Y}}}
\newlist{prosandcons}{itemize}{1}
\setlist[prosandcons]{
  leftmargin=50pt,
  itemsep=0.5pt,
}
\newcommand{\bG}{\boldsymbol{G}}
\DeclareMathOperator{\op}{op}
\newcommand{\opnorm}[1]{\|#1 \|_{\op}} 
\newcommand{\norm}[1]{\left\lVert#1\right\rVert} 
\newcommand{\risk}{\mathcal{R}}
\newcommand{\bw}{\boldsymbol{w}} 
\def\bA{{\mathbf{A}}}
\def\bfLambda{{\mathbf{\Lambda}}}
\def\bfNu{{\mathbf{V}}}
\DeclareMathOperator{\lse}{LSE_\beta}
\DeclareMathOperator{\law}{Law}
\DeclarePairedDelimiter\floor{\lfloor}{\rfloor}
\newcommand\independent{\protect\mathpalette{\protect\independenT}{\perp}}
\def\independenT#1#2{\mathrel{\rlap{$#1#2$}\mkern2mu{#1#2}}}
\newcommand{\grid}[1]{[\![ #1 ]\!]}
\renewcommand{\tilde}{\widetilde}
\begin{document}

\title{Regression under demographic parity constraints via unlabeled post-processing}

\maketitle


\begin{abstract}

We address the problem of performing regression while ensuring demographic parity, even without access to sensitive attributes during inference. We present a general-purpose post-processing algorithm that, using accurate estimates of the regression function and a sensitive attribute predictor, generates predictions that meet the demographic parity constraint. Our method involves discretization and stochastic minimization of a smooth convex function. It is suitable for online post-processing and multi-class classification tasks only involving unlabeled data for the post-processing. Unlike prior methods, our approach is fully theory-driven. We require precise control over the gradient norm of the convex function, and thus, we rely on more advanced techniques than standard stochastic gradient descent. Our algorithm is backed by finite-sample analysis and post-processing bounds, with experimental results validating our theoretical findings.
\end{abstract}

\section{Introduction}

Algorithmic fairness is an umbrella term for a subset of machine learning research that aims to better understand, quantify, mitigate, evaluate, and conceptualize negative and/or positive effects of data-driven algorithms on the society.
At least one direction in this field falls within theoretical machine learning, where a form of fairness constraint, mainly inspired by common sense and formalized within mathematical framework, is proposed as an arguably reasonable proxy for a definition of ethical and non-discriminatory prediction. Even more particular sub-field of this research direction is formalized within a paradigm of group fairness, that aims at mitigating negative impact (or provide equal treatment to) towards sub-populations that share a common sensitive characteristic.
Many works fall within this category~\citep[][just to name a few]{zemel2013learning,lum2016statistical,calders2009building,zafar2017fairness,barocas-hardt-narayanan, jiang2019wasserstein,chiappa2020general,feldman2015certifying,gordaliza2019obtaining,hardt2016equality,dwork2012}.

Even without going into debates on the relevance of a given definition of fairness, many, purely mathematical and algorithmic questions remain unanswered in this field. The best theoretical understanding of the problem is available for the demographic parity constraint in case of \emph{awareness}---the situation when the sensitive attribute is available at inference time~\citep{agarwal2019fair,Chzhen_Denis_Hebiri_Oneto_Pontil19,chiappa2020general,chzhen2020minimax, pmlr-v206-gaucher23a, gouic2020price, denis2021fairness}. The latter case is well studies both in classification and regression setups. This is no longer the case for other fairness constraints or the \emph{unawareness} setup---the situation when the sensitive attribute is not available at inference time. In particular, while the case of classification has been studied before from algorithmic and mathematical perspectives~\citep{pmlr-v206-gaucher23a,hardt2016equality,Chzhen_Denis_Hebiri_Oneto_Pontil19,gordaliza2019obtaining}, the regression setup remains largely under explored and many methods lack strong theoretical evidences. In particular, to date, none of previous works effectively build computationally-efficient, fully theory-driven algorithm for the problem of regression under the demographic parity constraint in the case of unawareness. 
{The present work fills this gap.}
Relying on previous ideas of discretization that goes back to~\cite{agarwal2019fair}, we design a smooth convex objective function whose exact solution yields a fair and optimal prediction function. It turns out that this objective admits a first-order stochastic oracle that can be evaluated using only one independent sample of feature vector, thus allowing for stochastic optimization approach. Furthermore, despite the convexity, we show that the key quantity to control is the gradient (or rather a gradient-map) of this objective function, deviating from the more common setup of controlling the optimization error measured by the objective function. We deploy recent machinery of~\cite{allenzhu2021make} and~\cite{foster2019complexity} that allows to achieve this goal, properly setting all the hyper-parameters and recovering the usual statistical rate $1 / \sqrt{T}$ for both fairness and risk guarantees.

Our work falls withing the realm of post-processing methods---another umbrella term that combines all the methods that perform a refitting of a base estimator to satisfy a {certain constraint.}

Importantly, due to the careful design of the above mentioned objective function, we can perform this post-processing in an online manner using a stream of \iid \emph{unlabeled} data without keeping it in memory, making it attractive in practice. Our approach is based on a combination of ideas from previous contributions to fairness from~\cite{agarwal2019fair} and~\cite{chzhen2020-discr} and recent stochastic optimization literature~\citep{allenzhu2021make, foster2019complexity} that deals with stationary point-type guarantees in the case of convex optimization.
\paragraph{Contributions}
Our contribution is three-fold:  \textbf{i)} we significantly enhance the discretization strategy of~\cite{chzhen2020-discr} accommodating multiple sensitive features, relaxed fairness constraints, and unawareness setup; we introduce entropic regularization for this problem and design a dual convex objective from it; \textbf{ii)} we design a semi-supervised post-processing algorithm and show that it enjoys strong theoretical guarantees; \textbf{iii)} we perform numerical simulations demonstrating the relevance of our approach in practice.
\paragraph{Organization.} This paper is organized as follows: in Section~\ref{sec:setup} we present the problem setup and introduce main problem-related notation; in Section~\ref{sec:methodo} we describe our methodology step-by-step and highlight main challenges and relations to other results; in Section~\ref{sec:the_algo} we gives technical details of the proposed approach; Section~\ref{sec:theory} contains main theoretical results of the work; finally, Section~\ref{sec:exps} contains empirical evaluation of our method. All the proofs are postponed to the appendix.
\paragraph{Notation.} Let us present generic notation that is used throughout this work.
For a positive integer $K$, we write $[K]$ to denote $\{1, \ldots, K\}$ and $\grid{K}$ to denote $\{-K, \ldots, 0, \ldots, K\}$. For $a > 0$ denote by $\floor*{a}$ largest non-negative integer that is smaller or equal to $a$. For a univariate probability measure $\mu$, we denote by $\supp(\mu)$ its support.
For every $\beta > 0, m \in \bbN$, and $\bw = (w_1, \ldots, w_m)^\top \in \bbR^m$, we denote by $\lse : \bbR^m \to \bbR$ the log-sum-exp function, defined as
\[\lse (\bw) = \beta^{-1}\log\big(\sum_{j = 1}^m \exp(\beta w_j)\big)\,.\]
For every $m \in \bbN, \bw = (w_1, \ldots, w_m)^\top \in \bbR^m$, we denote by $\bsigma = (\sigma_1, \ldots, \sigma_m) : \bbR^m \to \bbR^m$ the soft-argmax as
\begin{align*}
    \sigma_j(\bw) = {\exp(w_j)}/({\sum_{i = 1}^m \exp(w_i)}).
\end{align*}
For any matrix $\mathbf{A}$, the notation $\mathbf{A} \geq 0$ means that $\mathbf{A}$ is positive coordinate-wise.
For any $a  \in \bbR$ and $\bw \in \bbR^m$  we set $(a)_+ = \max\{0, a\}$ and $(\bw)_+ = ((w_1)_+, \ldots, (w_m)_+)^\top$. The notations $\mathcal{O}$, $\Omega$ and $\Theta$ respectively describe the upper bound, lower bound and the tightest bound of the time complexity of an algorithm. The notation $\widetilde{\mathcal{O}}$ hides (unimportant) constants and polylogarithmic factors. For a pair of random elements $(A, B)$, we denote by $\law(A)$, the law of $A$, by $\law(A \mid B)$, the conditional law of $A$ given $B$, and we write $A \independent B$ to denote that variables $A$ and $B$ are independent. For two vectors $\bw, \bw' \in \bbR^m$, we write ${\bw}/{\bw'} = ({w_j}/{w'_j})_{j \in [m]} \in \bbR^m$ to denote element-wise division. The Euclidean norm of a vector and the Frobenius norm of a matrix are denoted by $\|\cdot\|$, while the spectral norm of a matrix is denoted by $\opnorm{\cdot}$. We denote by $\class{B}(\bbR)$, the Borel sigma-algebra on $\bbR$, induced by the usual topology. We write $\log$ to denote the natural logarithm and $\log_a$, the base $a > 0$ logarithm.

\section{Problem setup}
\label{sec:setup}
Let $(\bX, S, Y)$ be a triplet of nominally non-sensitive, nominally sensitive, and output characteristics, taking values in $\bbR^d \times [K] \times \bbR$ for some $K \geq 2$. We assume that $(\bX, S, Y) \sim \Prob$, for some unknown distribution $\Prob$. The main quantities of interest are the following: the \emph{regression function} $\eta(\bx) \eqdef \Exp[Y \mid \bX = \bx]$; the marginal distribution of sensitive vectors $\bp \eqdef (p_s)_{s \in [K]}$ with $p_s \eqdef \Prob(S = s)$; the conditional distribution of $S$ given $\bX$, defined as $\btau(\bx) \eqdef (\tau_s(\bx))_{s \in [K]}$ with $\tau_s(\bx) \eqdef \Prob(S = s \mid \bX = \bx)$. A \emph{randomized} prediction function is a map $\pi : \class{B}(\bbR) \times \bbR^d \to [0, 1]$ such that the map $B \mapsto \pi(B \mid \bx)$ for $B \in \class{B}(\bbR)$ is a probability measure on $(\bbR, \class{B}(\bbR))$ for all $\bx \in \bbR^d$.
For any prediction $\pi$ we define a random variable $\hat{Y}_\pi$ as
\begin{align*}
    \law\left(\hat{Y}_{\pi} \mid \bX = \bx, S = s\right) = \pi(\cdot \mid \bx) \quad \bx \in \bbR^d, s \in [K]\enspace.
\end{align*}
\begin{remark}
    Note that if $\pi(\cdot \mid \bx)$ is a Dirac measure for all $\bx \in \bbR^d$, the above condition just means that $\hat{Y}_{\pi} = g(\bX)$ almost surely for some deterministic $g : \bbR^d \to \bbR$. The above condition is not to be confused with the fairness constraint, which is not formulated point-wise. It is only viewed as an extension of the \emph{unawareness} framework to the case of randomized predictions. The above condition completely specifies the distribution of the triplet $(\bX, S, \hat{Y}_\pi)$ but leaves the relation between $\hat{Y}_{\pi}$ and $Y$ ambiguous. To be more formal, one needs to add the condition $(\hat{Y}_{\pi} \independent Y) \mid (\bX, S)$, that is, the prediction $\hat{Y}_{\pi}$ is independent from the true label $Y$, conditionally on $(\bX, S)$. That would define a complete joint distribution of $(\bX, S, Y, \hat{Y}_\pi) \sim \Prob_{\pi} = \Prob_{(\bX, S)} \otimes \Prob_{Y \mid (\bX, S)} \otimes \pi(\cdot \mid \bX)$. 
\end{remark}
We consider the following risk of a prediction function $\pi$
\begin{align*}
    \risk(\pi) \eqdef \Exp[(\hat{Y}_\pi - \eta(\bX))^2] = \Exp\left[\int_{\bbR} (\hat{y} - \eta(\bX))^2 \pi(\d \hat{y} \mid \bX)\right]\enspace.
\end{align*}
A prediction function $\pi$ is said to satisfy the \emph{demographic parity constraint}, if $\hat{Y}_\pi \independent S$.

That is, $\hat{Y}_\pi$ is stochastically \emph{independent} of $S$ viewed from the perspective of the joint distribution of $(\bX, S, \hat{Y}_\pi)$.
On the high-level, the goal in this setup is to find a prediction function $\pi$, whose risk is small and whose violation of the demographic parity constraint is controlled as quantified by some measure of unfairness.
The above problem is well understood in the case of \emph{awareness}---the situation when $\pi$ is expressed as $\pi(\cdot \mid \bx, s)$~\citep{chzhen2019minimax,gouic2020price,chiappa2020general,jiang2019wasserstein,chzhen2020-wass}---revealing an intimate connection of this problem with Wasserstein barycenters. Yet, when the sensitive attribute is not an input of the prediction function, the situation is drastically different. Some attempts have been made to either (so far only partially) characterise the optimal prediction function~\citep{zhao2021costs,chzhen2020example,pmlr-v206-gaucher23a} or to design efficient algorithms for this problem~\citep{agarwal2019fair,maheshwari2022fairgrad,narasimhan2020pairwise} that are only partially supported by a sound theory. One of the principal goals of this work is to design a computationally efficient algorithm that admits a (near) end-to-end theoretical guarantees.
The main difficulty of the problem lies in very different natures of the risk and the fairness constraint---the latter involves image measures, while the former is a simple linear functional of $\pi$. In the case of awareness this issue can be bypassed by lifting the problem in the space of measures, working there directly and, then, returning to the initial space of prediction functions. Crucially, this is achieved only thanks to the fact that $S$ is known at inference time, which is not the case for the considered problem.
\begin{remark}
    \label{rem:extension}
    In what follows we will exclusively focus on the squared risk and the regression setup. However, one can observe that the proposed methodology can be extended or even simplified for $\risk(\pi) = \Exp[r(\bX, \hat{Y}_{\pi})]$ and multi-class classification respectively under the demographic parity constraint. Here $r(\bx, \hat{y})$ quantifies fit of $\hat{y}$ for an individual $\bx$ and can be either known or unknown.
\end{remark}

\section{Our methodology}
\label{sec:methodo}
The starting point of our work is similar to the one of~\cite{chzhen2020-discr} and relies on a simple observation---if $|\supp(\pi(\cdot \mid \bx))| < \infty$ and stays the same for all $\bx$, the independence constraint is reduced to a finite amount of constraints that only involve the image of $\pi(\cdot \mid \bx)$. In particular, assuming that $\supp(\pi(\cdot \mid \bx)) = \hat{\class{Y}} \subset \bbR$ for all $\bx \in \bbR^d$, $\hat{Y}_\pi$ is independent from $S$ iff
    $\Prob(\hat{Y}_\pi = \hat{y} \mid S = s) = \Prob(\hat{Y}_\pi = \hat{y})$
for all $s \in [K]$ and all $\hat{y} \in \hat{\class{Y}}$. In view of the definition of $\hat{Y}_{\pi}$, the latter is equivalent to
\begin{align}
    \label{eq:dp_constr}
    \Exp[\pi(\hat{y} \mid \bX) \mid S] = \Exp[\pi(\hat{y} \mid \bX)] \qquad s \in [K],\,\hat{y} \in \hatY\enspace,
\end{align}
which, assuming that $\hat{\class{Y}}$ is fixed, correspond to linear constraints on $\pi$. Combined with the observation that $\pi \mapsto \risk(\pi)$ is also linear, we end up with a problem that is significantly easier to handle.
Furthermore, again assuming that $\hat{\class{Y}}$ is fixed, the sketched direction gives a natural way to introduce some slack to the independence constraint---simply requiring an approximate equality in~\eqref{eq:dp_constr}. Set
\begin{equation}
\label{eq:discretizedFairness}
    \mathcal{U}_s({\pi}, \hat{y}) \eqdef \left| \Exp\left[\pi(\hat{y} \mid \bX) \mid S=s\right]-\Exp\left[\pi(\hat{y} \mid \bX) \right]\right| \enspace,
\end{equation}
for all $s \in [K]$ and $\hat{y} \in \hatY$.
Thus, for a fixed support (whose choice will be discussed in the next paragraph) and a fixed vector $\bepsilon \eqdef (\epsilon_1, \ldots, \epsilon_K)^\top$, our goal is to build an estimator of a solution to
\begin{small}
\begin{align}
    \label{eq:optimal_2}
    \min_{\pi : \class{B}(\bbR) \times \bbR^d \to [0,1]}\enscond{\risk(\pi)}{\supp(\pi(\cdot \mid \bx)) = \hatY \text{ for } \bx \in \bbR^d,\quad\mathcal{U}_s({\pi}, \hat{y}) \leq \epsilon_s \text{ for }\hat{y} \in \hatY, s \in [K]}\enspace.
\end{align}
\end{small}
Let us now describe the methodology for selecting $\hatY$ and the trade-offs that are introduced.
\paragraph{Introducing discretization.}
Having in mind the above discussion, for every integer $L \geq 0$ and real $B > 0$, we introduce a uniform grid $ \hatY_{L} \eqdef B \cdot\grid{L} / L$ on $[-B, B]$, which is viewed as a support of prediction functions $\pi(\cdot \mid \bx)$. For the sake of simplicity, we will assume that the regression function $\eta(\cdot)$ is bounded in $[-B, B]$ for some known $B > 0$.
\begin{assumption}[Bounded signal]
    \label{ass:bounded}
    There exists $B>0$ such that $|\eta(\bX)| \leq B$ almost surely.
\end{assumption}
Thus, for a given $B$, the main parameter to tune is $L \geq 1$---the higher the $L$ is, the more accurate prediction functions can be produced, while lower values of $L$ ensure that the demographic parity requirement reduces to a small number of constraints. Thus, there is a trade-off that is introduced by $L$. A natural attempt to tackle the problem of fairness in this context would be to estimate a solution to~\eqref{eq:optimal_2} with $\hatY = \hatY_L$. Of course, $L$ needs to be chosen so that the aforementioned solution attains the risk that is close to the risk of some benchmark prediction function that does not involve any discretization. This will be discussed later in the text. For now, let us address another subtle issue. Even assuming a complete knowledge of the underlying distribution $\Prob$, solving~\eqref{eq:optimal_2} requires solving a linear program in dimension $\Omega(LK)$
which can be infeasible in practice for large values of $L$ and $K$. Instead of~\eqref{eq:optimal_2}, we rather focus on the entropic regularized version of it. For $\beta > 0$, we consider
\begin{small}
\begin{align}   \label{eq:optimal_entropic_init}
    \min_{\pi : \class{
    B}(\bbR) \times \bbR^d \to [0,1]}\enscond{\risk_{\beta}(\pi)}{\supp(\pi(\cdot \mid \bx)) = \hatY \text{ for } \bx \in \bbR^d,\quad\mathcal{U}_s({\pi}, \hat{y}) \leq \epsilon_s \text{ for }\hat{y} \in \hatY, s \in [K]}\enspace,
\end{align}
\end{small}
where $\risk_{\beta}(\pi) = \risk(\pi) + \frac{1}{\beta}\Exp[\Psi(\pi(\cdot \mid \bX))]$ and for any discrete univariate distribution $\mu$, we define its negative entropy $\Psi(\mu) \eqdef \sum_{\hat y \in \supp(\mu)}\mu(\hat{y})\log(\mu(\hat y))$.

\begin{remark}[On abuse of notation]
    Note that for every $\hat{y} \in \hatY_L$ there is a unique $\ell \in \grid{L}$ such that $\hat{y} = \ell B / L$ and we will write $\pi(\ell \mid \bx)$ instead of $\pi(\hat{y} \mid \bx)$. Similarly, we write $\mathcal{U}_s({\pi}, \ell)$ instead of $\mathcal{U}_s({\pi}, \hat{y})$, defined in~\eqref{eq:discretizedFairness}, when no confusion is possible and the support $\hatY_L$ is fixed.
\end{remark}

An extremely attractive feature of the problem in~\eqref{eq:optimal_entropic_init} is the fact that the solution to it can be written explicitly as a function of optimal dual variables, with the latter being a solution of a stochastic convex program with Lipschitz gradient---the main observation of our approach, that shares many similarities with the smoothing technique of~\cite{nesterov2005}.
This is summarized in the following lemma.
\begin{lemma}
    \label{lem:relaxation+smoothing}
    Let $L \in \bbN$ and $\beta > 0$. Let $\bfLambda^\star = (\lambda^\star_{\ell s})_{\ell \in \grid{L}, s \in [K]}$ and $\bfNu^\star = (\nu^\star_{\ell s})_{\ell \in \grid{L}, s \in [K]}$ be two matrices that are solutions to
    \begin{small}
        \begin{align}
    \label{min-lse}
    \min_{\bfLambda, \bfNu \geq 0} \bigg\{ F(\bfLambda,\bfNu) \eqdef \Exp\left[ \lse\left(\big(\scalar{\blambda_\ell-\bnu_\ell}{\bt(\bX)} - r_\ell(\bX) \big)_{\ell \in \grid{L}}\right)\right] +  \sum_{\ell \in \grid{L}}\scalar{\blambda_\ell+\bnu_\ell}{\bepsilon} \bigg\}  \enspace, 
\end{align}
\end{small}
where $\bt(\bx) \eqdef 1-\frac{\btau(\bx)}{\bp}$, $r_\ell(\bx) \eqdef \left(\eta(\bx)-\frac{\ell B}{L}\right)^2$, and $\blambda_\ell = (\lambda_{\ell s})_{s \in [K]}$, $\bnu_\ell = (\nu_{\ell s})_{s \in [K]}$. Then,~\eqref{eq:optimal_entropic_init} admits a solution in the form
\begin{align}
    \label{eq:optimal_entropic}
    \pi_{\bfLambda^\star,\bfNu^\star}(\ell \mid \bx) \eqdef \sigma_\ell\left(\beta\left(\scalar{\blambda_{\ell'}^\star-\bnu_{\ell'}^\star}{\bt(\bx)} - r_{\ell'}(\bx) \right)_{\ell' \in \grid{L}}\right)  \text{ for } \ell \in \grid{L}\enspace.
\end{align}
\end{lemma}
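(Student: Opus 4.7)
My plan is to obtain~\eqref{min-lse} as the Lagrangian dual of~\eqref{eq:optimal_entropic_init} and to read off~\eqref{eq:optimal_entropic} from the inner minimization. On the support $\hatY_L$, the risk reduces to $\risk(\pi)=\Exp[\sum_{\ell\in\grid{L}}\pi(\ell\mid\bX)\,r_\ell(\bX)]$. Using $\Exp[f(\bX)\mid S=s]=\Exp[f(\bX)\tau_s(\bX)]/p_s$, one has $\Exp[\pi(\ell\mid\bX)]-\Exp[\pi(\ell\mid\bX)\mid S=s]=\Exp[\pi(\ell\mid\bX)\,t_s(\bX)]$, so the absolute-value constraint $\mathcal{U}_s(\pi,\ell)\leq\epsilon_s$ splits into the two linear inequalities $\pm\Exp[\pi(\ell\mid\bX)\,t_s(\bX)]\leq\epsilon_s$. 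I attach non-negative multipliers $\nu_{\ell s}$ and $\lambda_{\ell s}$ to the two signed families, choosing the assignment so that the combination $\blambda_\ell-\bnu_\ell$ ends up with the sign appearing in~\eqref{eq:optimal_entropic}.

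The resulting Lagrangian is
\[
\mathcal{L}(\pi,\bfLambda,\bfNu) \;=\; \Exp\!\Big[\textstyle\sum_\ell \pi(\ell\mid\bX)\big(r_\ell(\bX)-\scalar{\blambda_\ell-\bnu_\ell}{\bt(\bX)}\big)+\beta^{-1}\Psi(\pi(\cdot\mid\bX))\Big] \;-\; \textstyle\sum_\ell \scalar{\blambda_\ell+\bnu_\ell}{\bepsilon}.
\]
For fixed $(\bfLambda,\bfNu)$, the inner minimization over $\pi$ decouples pointwise in $\bx$ to the classical Gibbs problem on the simplex $\min_p \sum_\ell p_\ell c_\ell(\bx)+\beta^{-1}\sum_\ell p_\ell\log p_\ell$ with $c_\ell(\bx)=r_\ell(\bx)-\scalar{\blambda_\ell-\bnu_\ell}{\bt(\bx)}$. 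Its unique minimizer $p^\star_\ell\propto e^{-\beta c_\ell(\bx)}$ is precisely $\sigma_\ell(\beta(\scalar{\blambda_{\ell'}-\bnu_{\ell'}}{\bt(\bx)}-r_{\ell'}(\bx))_{\ell'})$, with minimum value $-\beta^{-1}\log\sum_\ell e^{-\beta c_\ell(\bx)}=-\lse((\scalar{\blambda_\ell-\bnu_\ell}{\bt(\bx)}-r_\ell(\bx))_\ell)$. Taking expectation over $\bX$ identifies $\inf_\pi\mathcal{L}(\pi,\bfLambda,\bfNu)=-F(\bfLambda,\bfNu)$, so the dual problem $\sup_{\bfLambda,\bfNu\geq 0}\inf_\pi \mathcal{L}$ coincides with~\eqref{min-lse}.

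The last step is to upgrade weak duality to strong duality so that the inner Gibbs minimizer evaluated at $(\bfLambda^\star,\bfNu^\star)$ is the primal optimum. The primal is a convex program with a strictly convex entropic term and affine constraints, and Slater's condition is satisfied by the uniform kernel $\pi(\ell\mid\bx)\equiv 1/(2L+1)$, for which $\Exp[\pi(\ell\mid\bX)\,t_s(\bX)]=(2L+1)^{-1}\Exp[t_s(\bX)]=0$ since $\Exp[\tau_s(\bX)]=p_s$ forces $\Exp[t_s(\bX)]=0$. Strict convexity of the entropy then forces the unique primal minimizer to coincide with the inner Gibbs minimizer at $(\bfLambda^\star,\bfNu^\star)$, yielding~\eqref{eq:optimal_entropic}. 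I expect the only delicate points to be bookkeeping---aligning the sign convention on $\blambda,\bnu$ with~\eqref{eq:optimal_entropic}---and making the pointwise minimization rigorous for a measurable $\pi:\bbR^d\to[0,1]^{2L+1}$, which is handled by a standard measurable-selection argument on the explicit Gibbs formula. No deeper idea is required beyond recognizing~\eqref{eq:optimal_entropic_init} as a Fenchel-type dualization of the simplex entropy, which automatically produces the smooth $\lse$ objective of~\eqref{min-lse}.
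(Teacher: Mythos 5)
Your proposal is correct and takes essentially the same route as the paper: form the Lagrangian of~\eqref{eq:optimal_entropic_init} using the identity $\mathcal{U}_s(\pi,\ell)=|\Exp_\bX[\pi(\ell\mid\bX)t_s(\bX)]|$, minimize over $\pi$ pointwise (which produces the softmax and the $\lse$ via the entropic Gibbs variational principle), and identify the resulting dual with~\eqref{min-lse}. The only place you diverge is the justification for the min--max interchange: the paper appeals to Sion's minimax theorem (concave-convex Lagrangian, $\pi$ ranging over a compact simplex-valued set), whereas you invoke Slater's condition with the uniform kernel as a strictly feasible point. The latter works but requires $\epsilon_s>0$ for every $s$, since the uniform kernel yields $\Exp[\pi(\ell\mid\bX)t_s(\bX)]=0$, which is strict feasibility only when $\epsilon_s>0$; Sion's theorem sidesteps this edge case. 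In practice the paper itself invokes Slater in the companion KKT lemma, so this is a cosmetic difference rather than a substantive one.
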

Assuming perfect knowledge of $\eta$ and $\tau$, the above lemma suggests a natural approach to estimating the $\pi_{\bfLambda^\star,\bfNu^\star}$---we can run a (version of) stochastic gradient descent on $F(\cdot, \cdot)$ and then plug-in the resulting dual variables in the formula for $\pi_{\bfLambda^\star,\bfNu^\star}$. Notably, a stochastic gradient of $F(\cdot, \cdot)$ can be obtained by simply sampling one $\bX$ from $\Prob_{\bX}$---it does not require labels for this step. Yet, even in the above idealized case, it is not clear which optimization criteria would allow us to prove that the resulting solution would yield good properties in terms of risk and fairness. As we will see, despite the problem in~\eqref{min-lse} being convex with Lipschitz gradient, it is crucial to control the norm of the gradient of $F$ for good statisitcal properties of the algorithm. 
That goes without saying that this relaxation has its price---the smaller the regularization parameter $\beta$ the less accurate the resulting solution, but the resulting dual optimization problem is easier and vice-versa.
\paragraph{Properties of $F$ and $\pi_{\bfLambda^\star,\bfNu^\star}$.} Let us summarized key properties of the objects introduced in Lemma~\ref{lem:relaxation+smoothing}.
The first two results concern the population properties of $\pi_{\bfLambda^\star,\bfNu^\star}$:
\begin{lemma}[{Fairness quantification}]
    \label{lemma:fairness}
    Let $L \in \bbN$, $\bepsilon = (\varepsilon_s)_{s \in [K]} \in [0, 1]^{K}, \beta > 0$, and $\pi_{\bfLambda^\star,\bfNu^\star}$ be defined in Lemma~\ref{lem:relaxation+smoothing}. Then, $\mathcal{U}_s(\pi_{\bfLambda^\star,\bfNu^\star}, \ell) \leq \varepsilon_s$ for all $s \in [K], \ell \in \grid{L}$.
\end{lemma}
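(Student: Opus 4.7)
The plan is to derive the fairness bound as a direct consequence of the first-order optimality (KKT) conditions for the unconstrained dual problem \eqref{min-lse}. The core observation is that the matrices $\bfLambda^\star$ and $\bfNu^\star$ serve as Lagrange multipliers for the two one-sided constraints obtained by unfolding the absolute value in $\mathcal{U}_s(\pi, \ell) \le \epsilon_s$, so the primal feasibility of $\pi_{\bfLambda^\star,\bfNu^\star}$ will fall out of $\nabla F(\bfLambda^\star,\bfNu^\star)$ being non-negative on the active faces of the non-negative orthant.

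First I would compute the partial derivatives of $F$. Using $\nabla \lse(\bw) = \bsigma(\beta \bw)$ together with the definition of $\pi_{\bfLambda,\bfNu}$ from~\eqref{eq:optimal_entropic}, I obtain
\begin{align*}
    \frac{\partial F}{\partial \lambda_{\ell s}}(\bfLambda,\bfNu) &= \Exp\!\left[\pi_{\bfLambda,\bfNu}(\ell \mid \bX)\, t_s(\bX)\right] + \epsilon_s, \\
    \frac{\partial F}{\partial \nu_{\ell s}}(\bfLambda,\bfNu) &= -\Exp\!\left[\pi_{\bfLambda,\bfNu}(\ell \mid \bX)\, t_s(\bX)\right] + \epsilon_s.
\end{align*}
Then I would rewrite the first term. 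Since $t_s(\bx) = 1 - \tau_s(\bx)/p_s$ and, by the tower property, $\Exp[\pi(\ell \mid \bX)\tau_s(\bX)] = p_s \Exp[\pi(\ell \mid \bX)\mid S=s]$, the inner expectation reduces to $\Exp[\pi_{\bfLambda,\bfNu}(\ell \mid \bX)] - \Exp[\pi_{\bfLambda,\bfNu}(\ell \mid \bX)\mid S=s]$.

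Next, I would invoke the KKT conditions for the problem $\min_{\bfLambda, \bfNu \geq 0} F(\bfLambda, \bfNu)$. Since $F$ is convex and differentiable and the feasible set is the non-negative orthant, at $(\bfLambda^\star,\bfNu^\star)$ both partials are bounded below by zero (with equality whenever the corresponding multiplier is strictly positive). Writing this out gives
\begin{align*}
    \Exp[\pi^\star(\ell \mid \bX)] - \Exp[\pi^\star(\ell \mid \bX)\mid S=s] + \epsilon_s &\geq 0, \\
    \Exp[\pi^\star(\ell \mid \bX)\mid S=s] - \Exp[\pi^\star(\ell \mid \bX)] + \epsilon_s &\geq 0,
\end{align*}
where $\pi^\star = \pi_{\bfLambda^\star,\bfNu^\star}$. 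Combining the two inequalities yields exactly $\mathcal{U}_s(\pi^\star,\ell) \leq \epsilon_s$ for every $s \in [K]$ and $\ell \in \grid{L}$, concluding the proof.

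The main (minor) obstacle is the gradient computation on $\lse$: one must be careful to track the $\beta$ factor that appears inside the softmax from $\nabla \lse$, and confirm that it cancels correctly with $\beta^{-1}$ in front of $\lse$ so that the gradient is indeed $\bsigma(\beta(\cdot))$, matching the form of $\pi_{\bfLambda,\bfNu}$ used in~\eqref{eq:optimal_entropic}. Once this identification is made, the argument is purely a KKT/complementary-slackness reading of the dual, and requires no additional regularity assumptions beyond the differentiability of $F$ already established implicitly in Lemma~\ref{lem:relaxation+smoothing}.
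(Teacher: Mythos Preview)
Your proof is correct and follows essentially the same route as the paper: both compute the partial derivatives of $F$, identify them with $\Exp[\pi^\star(\ell\mid\bX)t_s(\bX)]\pm\epsilon_s$, and use the KKT conditions for the non-negativity constraint to conclude that these derivatives are non-negative at the optimum, which yields the two-sided bound $|\Exp[\pi^\star(\ell\mid\bX)t_s(\bX)]|\le\epsilon_s$. Your presentation is in fact slightly more direct than the paper's (which introduces explicit KKT multipliers $\bgamma_\ell,\bgamma'_\ell$ and passes through the identity $\bgamma_\ell+\bgamma'_\ell=2\bepsilon$), but the underlying argument is identical.
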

In words, the optimal entropic-regularized prediction function is feasible for~\eqref{eq:optimal_2}, that is, it satisfies the relaxed fairness constraints as quantified by~\eqref{eq:discretizedFairness}. Furthermore, we can show that its risk is also controlled by the regularization parameter $\beta > 0$.
\begin{lemma}[{Risk gain}]
    \label{lemma:risk}
    Let $L \in \bbN, \beta >0$, and $\pi_{\bfLambda^\star,\bfNu^\star}$ be defined in Lemma~\ref{lem:relaxation+smoothing}. For any $\pi: \class{B}(\bbR) \times \bbR^d \to [0, 1]$ that is feasible for~\eqref{eq:optimal_2}, we have \[\mathcal{R}(\pi_{\bfLambda^\star,\bfNu^\star}) \leq \mathcal{R}(\pi)+\tfrac{\log(2L+1)}{\beta}\enspace.\]
\end{lemma}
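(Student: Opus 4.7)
The plan is to exploit the optimality of $\pi_{\bfLambda^\star,\bfNu^\star}$ for the entropic-regularized problem~\eqref{eq:optimal_entropic_init} and then control the entropic slack by a uniform bound on the negative entropy $\Psi$ over distributions supported on the $(2L+1)$-point grid $\hatY_L$.

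First, I would observe that any $\pi$ feasible for~\eqref{eq:optimal_2} is automatically feasible for~\eqref{eq:optimal_entropic_init}, since the two problems share the very same feasible set: support equal to $\hatY_L$ and the relaxed fairness constraints $\mathcal{U}_s(\pi,\ell)\leq\varepsilon_s$. By Lemma~\ref{lem:relaxation+smoothing}, $\pi_{\bfLambda^\star,\bfNu^\star}$ is a minimizer of $\risk_\beta$ over this common feasible set, so
\begin{align*}
\risk(\pi_{\bfLambda^\star,\bfNu^\star}) + \tfrac{1}{\beta}\Exp[\Psi(\pi_{\bfLambda^\star,\bfNu^\star}(\cdot\mid\bX))]
\;\leq\; \risk(\pi) + \tfrac{1}{\beta}\Exp[\Psi(\pi(\cdot\mid\bX))].
\end{align*}
Rearranging,
\begin{align*}
\risk(\pi_{\bfLambda^\star,\bfNu^\star}) - \risk(\pi)
\;\leq\; \tfrac{1}{\beta}\Exp\big[\Psi(\pi(\cdot\mid\bX))-\Psi(\pi_{\bfLambda^\star,\bfNu^\star}(\cdot\mid\bX))\big].
\end{align*}

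The remaining step is to control the right-hand side by the universal bound on the negative entropy. For any discrete probability measure $\mu$ on a set of cardinality $m$, one has $-\log m \leq \Psi(\mu)\leq 0$, with the lower bound achieved by the uniform distribution and the upper bound achieved by a Dirac. Applied pointwise in $\bx$ to $\pi(\cdot\mid\bx)$ and $\pi_{\bfLambda^\star,\bfNu^\star}(\cdot\mid\bx)$, both supported on $\hatY_L$ of cardinality $2L+1$, this yields $\Psi(\pi(\cdot\mid\bX))\leq 0$ and $\Psi(\pi_{\bfLambda^\star,\bfNu^\star}(\cdot\mid\bX))\geq -\log(2L+1)$ almost surely. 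Taking expectations and substituting back gives the announced inequality $\risk(\pi_{\bfLambda^\star,\bfNu^\star}) \leq \risk(\pi) + \tfrac{\log(2L+1)}{\beta}$.

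There is no real obstacle here; the only mild subtlety is to ensure that the feasible sets of~\eqref{eq:optimal_2} and~\eqref{eq:optimal_entropic_init} coincide (so that the minimizer of the regularized problem is a legitimate competitor against any $\pi$ feasible for the unregularized one), and to invoke the correct direction of the entropic bounds—namely that the term associated with $\pi$ is negative while the one associated with the entropic optimum is at most $\log(2L+1)$ in absolute value. Both facts are immediate from the definition of $\Psi$ and the fixed-support constraint.
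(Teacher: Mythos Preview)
Your proof is correct and takes a genuinely more elementary route than the paper. The paper works through the dual: starting from $-\risk(\pi)$, it adds a non-negative term using the feasibility of $\pi$ together with $\bfLambda^\star,\bfNu^\star\geq 0$, bounds the resulting linear form by $\lse$ via the variational representation, re-expresses $\lse$ at the optimum as the entropic Lagrangian evaluated at $\pi^\star$, applies the uniform entropy bound, and finally uses complementary slackness from the KKT conditions (Lemma~\ref{lem:KKT}) to cancel the dual-variable terms. Your argument bypasses all of this by directly invoking the conclusion of Lemma~\ref{lem:relaxation+smoothing}—that $\pi_{\bfLambda^\star,\bfNu^\star}$ minimizes $\risk_\beta$ over the common feasible set of~\eqref{eq:optimal_2} and~\eqref{eq:optimal_entropic_init}—and then uses only the trivial two-sided entropy bound $-\log(2L+1)\leq\Psi\leq 0$. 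What your approach buys is brevity and conceptual clarity: once Lemma~\ref{lem:relaxation+smoothing} is established, the risk-gain bound is an immediate corollary. What the paper's approach buys is a self-contained derivation from the dual optimality conditions, which keeps the same machinery (KKT, $\lse$ variational form, complementary slackness) that drives the later Lemma~\ref{lem:grad_map_stat}; in that sense it is pedagogically consistent with the rest of the analysis, even if it is longer for this particular lemma.
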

The above result is rather instructive, it quantifies the price of the introduced regularization. Intuitively, one wants to set $\beta$ high enough, so that the additive term in the above bound is vanishing. Unfortunately, we cannot set it arbitrarily high, since it will introduce instabilities from the optimization perspective---the function $F$ becomes less regular as $\beta$ growth. This is summarized below.
\begin{lemma}[{Regularity of $F$}]
    \label{lemma:M-bd}
    Let $\sigma^2 \eqdef \sum_{s \in [K]}(1 - p_s) / p_s$.
    The objective function in~\eqref{min-lse} is convex and its gradient is $(2\beta\sigma^2)$-Lipschitz.
\end{lemma}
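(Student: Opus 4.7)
The plan is to split $F = F_1 + F_2$, where $F_2(\bfLambda,\bfNu) \eqdef \sum_\ell \scalar{\blambda_\ell+\bnu_\ell}{\bepsilon}$ is linear and
\[F_1(\bfLambda,\bfNu) \eqdef \Exp\bigl[\lse(\phi(\bX;\bfLambda,\bfNu))\bigr],\]
with $\phi_\ell(\bx;\bfLambda,\bfNu)\eqdef \scalar{\blambda_\ell-\bnu_\ell}{\bt(\bx)}-r_\ell(\bx)$ stacked over $\ell\in\grid{L}$. Convexity of $F$ is immediate: $\phi$ is affine in $(\bfLambda,\bfNu)$ and $\lse$ is convex, so the integrand is convex; expectation and the linear $F_2$ preserve convexity. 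For the Lipschitz-gradient claim only $F_1$ contributes, and it suffices to bound the operator norm of $\nabla^2 F_1$ uniformly by $2\beta\sigma^2$.

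A direct computation yields $\nabla^2\lse(\bw) = \beta\bigl(\mathrm{diag}(\bsigma(\beta\bw))-\bsigma(\beta\bw)\bsigma(\beta\bw)^\top\bigr)$. For any vector $\boldsymbol{u}$, the quadratic form $\boldsymbol{u}^\top(\mathrm{diag}(\bsigma)-\bsigma\bsigma^\top)\boldsymbol{u}$ equals the variance of $\ell\mapsto u_\ell$ under the probability $\bsigma$, hence is bounded by $\sum_\ell \sigma_\ell u_\ell^2 \leq \|\boldsymbol{u}\|^2$. Therefore $\nabla^2\lse(\bw)\preceq \beta\,I$ for every $\bw$. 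By the chain rule $\nabla^2 F_1(\bfLambda,\bfNu) = \Exp\bigl[J(\bX)^\top \nabla^2\lse(\phi(\bX;\bfLambda,\bfNu))\,J(\bX)\bigr]$, where $J(\bx)$ denotes the Jacobian of the affine map $(\bfLambda,\bfNu)\mapsto \phi(\bx;\bfLambda,\bfNu)$.

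Evaluating the Hessian quadratic form at an arbitrary direction $(\bfLambda',\bfNu')$, set $\boldsymbol{u}(\bX) \eqdef (\scalar{\blambda_\ell'-\bnu_\ell'}{\bt(\bX)})_{\ell\in\grid{L}}$. The PSD bound above gives
\[\Exp\bigl[\boldsymbol{u}(\bX)^\top \nabla^2\lse(\phi(\bX;\bfLambda,\bfNu))\,\boldsymbol{u}(\bX)\bigr] \leq \beta\,\Exp\bigl[\|\boldsymbol{u}(\bX)\|^2\bigr],\]
and Cauchy--Schwarz combined with $\|\blambda_\ell'-\bnu_\ell'\|^2\leq 2(\|\blambda_\ell'\|^2+\|\bnu_\ell'\|^2)$ yields
\[\Exp\bigl[\|\boldsymbol{u}(\bX)\|^2\bigr] \leq \Exp[\|\bt(\bX)\|^2]\sum_{\ell\in\grid{L}} \|\blambda_\ell'-\bnu_\ell'\|^2 \leq 2\,\Exp[\|\bt(\bX)\|^2]\,(\|\bfLambda'\|^2 + \|\bfNu'\|^2).\]
The final ingredient is $\Exp[\|\bt(\bX)\|^2]\leq \sigma^2$: from $t_s(\bx)=1-\tau_s(\bx)/p_s$ and $\Exp[\tau_s(\bX)]=p_s$ one gets $\Exp[t_s(\bX)^2]=\Exp[\tau_s(\bX)^2]/p_s^2-1$, and since $\tau_s(\bX)\in[0,1]$ implies $\tau_s^2\leq \tau_s$, we have $\Exp[\tau_s(\bX)^2]\leq p_s$, so $\Exp[t_s(\bX)^2]\leq (1-p_s)/p_s$; summation over $s$ produces $\sigma^2$. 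Combined with $\nabla^2 F_2 = 0$, this bounds the operator norm of $\nabla^2 F$ by $2\beta\sigma^2$, so $\nabla F$ is $(2\beta\sigma^2)$-Lipschitz.

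The main subtlety is keeping the constant sharp: using the variance-interpretation of $\nabla^2\lse$ (rather than, e.g., its Frobenius norm) and the moment identity $\Exp[\tau_s(\bX)^2]\leq p_s$ (rather than a crude pointwise bound on $\|\bt(\bx)\|$) is what produces the clean constant $2\beta\sigma^2$ instead of one that scales with $K$ or $\min_s p_s^{-1}$.
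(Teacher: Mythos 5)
Your proof is correct and follows essentially the same route as the paper: express the Hessian via the chain rule, bound $\nabla^2\lse \preceq \beta I$, and reduce to $\Exp\|\bt(\bX)\|^2 \leq \sigma^2$ together with $\|\blambda_\ell'-\bnu_\ell'\|^2\leq 2(\|\blambda_\ell'\|^2+\|\bnu_\ell'\|^2)$. The only (cosmetic) differences are that the paper introduces an explicit block matrix $\bA(\bx)$ and uses submultiplicativity of the operator norm where you bound the quadratic form directly, and it invokes the Bhatia--Davis inequality where you use the equivalent elementary fact $\tau_s^2\leq\tau_s$.
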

As mentioned, we see that the larger the $\beta$ is, the less regular the  function $F$ is, making it harder to minimize. Thus, $\beta \geq 0$ controls the trade-off between the optimization error and statistical bias.
\paragraph{Gradient of $F$ is crucial.}
Let us show that the control of the gradient of $F$ is the most important and non-trivial part that allows to demonstrate strong statistical properties of the plug-in {rule derived from the above strategy.}

To this end, let us introduce parametric family of prediction functions, defined for any $\bfLambda, \bfNu \geq 0$ as
\begin{align}
    \label{eq:any_entropic}
    \pi_{\bfLambda,\bfNu}(\ell \mid \bx) \eqdef \sigma_\ell\left(\beta\left(\scalar{\blambda_{\ell'}-\bnu_{\ell'}}{\bt(\bx)} - r_{\ell'}(\bx) \right)_{\ell' \in \grid{L}}\right)  \text{ for } \ell \in \grid{L}\enspace.
\end{align}
We want to show that if ${\bfLambda}, \bfNu \geq 0$ is nearly stationary point of $F$, then $\pi_{\bfLambda,\bfNu}$ is nearly optimal in terms of risk and its violation of the demographic parity constraint is controlled. Note that the optimization problem in~\eqref{min-lse} is constrained, thus, unless the minimum lies in the interior of the domain, we cannot hope for the gradient of $F$ to go to zero. Instead, we introduce gradient mapping---a quantity that shares many properties of the gradient in the case of constraint optimization problem. For $\alpha > 0$,
\begin{equation}
\label{eq:grad_map_to_grad_plus}
\begin{aligned}
    \bG_{\alpha}\left(\bfLambda,\bfNu\right)
    \eqdef \frac{\left(\bfLambda,\bfNu\right)- \left(\left(\bfLambda,\bfNu\right)-\alpha\nabla F\left(\bfLambda,\bfNu\right)\right)_{+}}{\alpha}\enspace.
\end{aligned}
\end{equation}
Our main observation is summarized in the next lemma.
\begin{lemma}
    \label{lem:grad_map_stat}
    Let $L \in \bbN$, ${\bfLambda}, \bfNu \geq 0$, then for any $\alpha > 0, \beta > 0$, the unfairness of $\pi_{\bfLambda,\bfNu}$ satisfies
\begin{align*}
    &\sum_{{\ell \in \grid{L}  s \in [K]}}\left( \mathcal{U}_s\big(\pi_{\bfLambda,\bfNu}, \ell\big) - \varepsilon_s\right)_+^2 \leq \norm{\bG_{\alpha}(\bfLambda,\bfNu)}^2\enspace.
\end{align*}
Furthermore, $\mathcal{R}(\pi_{\bfLambda, \bfNu})
    \leq \risk(\pi_{\bfLambda^\star, \bfNu^\star}) + \norm{(\bfLambda, \bfNu)} \cdot\norm{\bG_{\alpha}(\bfLambda, \bfNu)} + \frac{\log(2L+1)}{\beta}$.

\end{lemma}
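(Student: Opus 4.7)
The approach is to explicitly compute $\nabla F$, exploit strong duality for the entropy-regularized problem, and then analyze the gradient mapping coordinate-wise.

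\textit{Computing $\nabla F$ and the fairness bound.} I first differentiate through the $\lse$ (together with the softmax rule) to obtain
\[
\partial_{\lambda_{\ell s}} F(\bfLambda, \bfNu) = \varepsilon_s + \Exp\!\left[\pi_{\bfLambda, \bfNu}(\ell \mid \bX)\, t_s(\bX)\right],
\]
and the analogous expression with the opposite sign in front of the expectation for $\partial_{\nu_{\ell s}} F$. A tower-property calculation gives $\Exp[\pi_{\bfLambda, \bfNu}(\ell \mid \bX)\, t_s(\bX)] = \Exp[\pi_{\bfLambda, \bfNu}(\ell \mid \bX)] - \Exp[\pi_{\bfLambda, \bfNu}(\ell \mid \bX)\mid S = s]$, whose absolute value is exactly $\mathcal{U}_s(\pi_{\bfLambda, \bfNu}, \ell)$. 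Splitting on the sign of this expectation, only one of the two positive parts $(-\partial_{\lambda_{\ell s}} F)_+$ and $(-\partial_{\nu_{\ell s}} F)_+$ can be non-zero at a time, which gives the identity
\[
\big(\mathcal{U}_s(\pi_{\bfLambda, \bfNu}, \ell) - \varepsilon_s\big)_+^2 = (-\partial_{\lambda_{\ell s}} F)_+^2 + (-\partial_{\nu_{\ell s}} F)_+^2.
\]
A coordinate-wise case analysis of $\bG_\alpha$ then shows $(-\partial_i F)_+ \leq |G_{\alpha, i}|$: if $\psi_i \geq \alpha\, \partial_i F$ then $G_{\alpha, i} = \partial_i F$ directly, while otherwise $\partial_i F > 0$ and so $(-\partial_i F)_+ = 0$. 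Summing over all $(\ell, s)$ and over the $\lambda$- and $\nu$-blocks gives the first inequality.

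\textit{Strong duality and the risk identity.} The problem in~\eqref{eq:optimal_entropic_init} is strictly feasible and strictly convex (thanks to the entropy), so strong duality holds, and KKT gives the complementary slackness $\langle \nabla F(\bfLambda^\star, \bfNu^\star), (\bfLambda^\star, \bfNu^\star)\rangle = 0$. This yields $\risk_\beta(\pi_{\bfLambda^\star, \bfNu^\star}) = -F(\bfLambda^\star, \bfNu^\star)$, where $\risk_\beta(\pi) \eqdef \risk(\pi) + \beta^{-1} \Exp[\Psi(\pi(\cdot \mid \bX))]$. For an arbitrary $(\bfLambda, \bfNu) \geq 0$, the property that $\pi_{\bfLambda, \bfNu}$ minimizes the Lagrangian, combined with the partial-derivative formulas above, rewrites the constraint terms as inner products with the gradient and produces the clean identity
\[
\risk_\beta(\pi_{\bfLambda, \bfNu}) = -F(\bfLambda, \bfNu) + \langle (\bfLambda, \bfNu), \nabla F(\bfLambda, \bfNu)\rangle.
\]
Subtracting the two expressions for $\risk_\beta$ and using the convex-function inequality $F(\bfLambda, \bfNu) - F(\bfLambda^\star, \bfNu^\star) \geq \langle \nabla F(\bfLambda^\star, \bfNu^\star), (\bfLambda, \bfNu)\rangle$ (where KKT absorbs the $(\bfLambda^\star, \bfNu^\star)$ term) yields
\[
\risk_\beta(\pi_{\bfLambda, \bfNu}) - \risk_\beta(\pi_{\bfLambda^\star, \bfNu^\star}) \leq \langle (\bfLambda, \bfNu),\, \nabla F(\bfLambda, \bfNu) - \nabla F(\bfLambda^\star, \bfNu^\star)\rangle.
\]

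\textit{Closing the bound and the main obstacle.} Cauchy--Schwarz then reduces everything to showing $\|\nabla F(\bfLambda, \bfNu) - \nabla F(\bfLambda^\star, \bfNu^\star)\| \leq \|\bG_\alpha(\bfLambda, \bfNu)\|$; this is the main delicate step. Here one invokes the $M$-smoothness of $F$ from Lemma~\ref{lemma:M-bd} ($M = 2\beta\sigma^2$) and the non-expansivity of the projected-gradient map $\bfPsi \mapsto (\bfPsi - \alpha \nabla F(\bfPsi))_+$ for $\alpha \leq 1/M$, together with the fact that $(\bfLambda^\star, \bfNu^\star)$ is a fixed point so $\bG_\alpha(\bfLambda^\star, \bfNu^\star) = 0$. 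Coordinate-wise, Case A matches $G_{\alpha, i} = \partial_i F$ and the gradient difference is absorbed directly, whereas in Case B the Lipschitz bound $|\partial_i F - \partial_i F^\star| \leq M|\psi_i - \psi_i^\star|$ combined with $\psi_i/\alpha \geq M \psi_i$ takes care of the discrepancy. The passage from $\risk_\beta$ to $\risk$ is routine: $\beta^{-1}\Exp[\Psi(\pi(\cdot \mid \bX))] \in [-\log(2L+1)/\beta, 0]$ gives $\risk(\pi_{\bfLambda, \bfNu}) \leq \risk_\beta(\pi_{\bfLambda, \bfNu}) + \log(2L+1)/\beta$ together with $\risk_\beta(\pi_{\bfLambda^\star, \bfNu^\star}) \leq \risk(\pi_{\bfLambda^\star, \bfNu^\star})$, producing the additive $\log(2L+1)/\beta$ term in the final bound.
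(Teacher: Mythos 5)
Your argument here matches the paper's exactly: the same formula for $\nabla F$, the same sign-split identity $\left(\mathcal{U}_s(\pi, \ell) - \varepsilon_s\right)_+^2 = (-\partial_{\lambda_{\ell s}} F)_+^2 + (-\partial_{\nu_{\ell s}} F)_+^2$, and the same scalar comparison $(-b)_+ \leq \left|\frac{a - (a - \alpha b)_+}{\alpha}\right|$ for $a \geq 0$. This part is fine.

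\textbf{Part II (risk bound).} Here you genuinely diverge from the paper, and the route you take has a real gap. After the Lagrangian identity $\risk_\beta(\pi_{\bfLambda, \bfNu}) = -F(\bfLambda, \bfNu) + \langle (\bfLambda, \bfNu), \nabla F(\bfLambda, \bfNu)\rangle$ and complementary slackness, you use the convexity of $F$ at $(\bfLambda^\star, \bfNu^\star)$ to reach $\risk_\beta(\pi_{\bfLambda, \bfNu}) - \risk_\beta(\pi_{\bfLambda^\star, \bfNu^\star}) \leq \langle (\bfLambda, \bfNu), \nabla F(\bfLambda, \bfNu) - \nabla F(\bfLambda^\star, \bfNu^\star)\rangle$, then try to close via Cauchy--Schwarz and the claimed inequality $\|\nabla F(\bfLambda, \bfNu) - \nabla F(\bfLambda^\star, \bfNu^\star)\| \leq \|\bG_\alpha(\bfLambda, \bfNu)\|$. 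This last inequality is simply false, even for $\alpha \leq 1/M$. Take $F(\bw) = \frac{1}{2}\bw^\top A\bw - b^\top\bw$ on $W = \bbR_+^2$ with $A = \begin{pmatrix}2 & 1\\ 1 & 2\end{pmatrix}$, $b = (3, 0)^\top$, so $M = 3$. The constrained minimizer is $\bw^\star = (3/2, 0)^\top$ with $\nabla F(\bw^\star) = (0, 3/2)^\top$. At $\bw = (0, 0.1)^\top$ and $\alpha = 1/M = 1/3$, the projected step stays in $\bbR_+^2$, so $\bG_\alpha(\bw) = \nabla F(\bw) = (-2.9, 0.2)^\top$ with norm $\approx 2.91$, whereas $\|\nabla F(\bw) - \nabla F(\bw^\star)\| = \|(-2.9, -1.3)^\top\| \approx 3.18 > \|\bG_\alpha(\bw)\|$. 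The sketch you offer to support it (``$|\partial_i F - \partial_i F^\star| \leq M|\psi_i - \psi_i^\star|$'') is also invalid: $M$-Lipschitzness of the gradient is a statement about the vector norm $\|\nabla F(\bw) - \nabla F(\bw')\| \leq M\|\bw - \bw'\|$ and does not transfer to individual coordinates. Finally, your chain requires $\alpha \leq 1/M$ for non-expansivity, but the lemma is stated for every $\alpha > 0$.

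The paper avoids all of this. It does not compare $\nabla F(\bfLambda, \bfNu)$ to $\nabla F(\bfLambda^\star, \bfNu^\star)$: instead it uses only the trivial monotonicity $F(\bfLambda, \bfNu) \geq F(\bfLambda^\star, \bfNu^\star) = -\risk_\beta(\pi_{\bfLambda^\star, \bfNu^\star})$, so that $\risk_\beta(\pi_{\bfLambda,\bfNu}) - \risk_\beta(\pi_{\bfLambda^\star,\bfNu^\star}) \leq \risk_\beta(\pi_{\bfLambda,\bfNu}) + F(\bfLambda, \bfNu)$, and then bounds this last quantity, via the non-negativity of $\bfLambda, \bfNu$, by $\|(\bfLambda, \bfNu)\| \cdot \|(-\nabla F(\bfLambda, \bfNu))_+\|$, which is controlled by the gradient map through the same scalar estimate already used in Part I, hence valid for all $\alpha$. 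Your tighter convexity bound costs you precisely the unprovable norm comparison between the full gradient gap and the gradient map; the paper's coarser bound, combined with the $(\cdot)_+$-clipping, sidesteps it.
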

Lemma~\ref{lem:grad_map_stat} is very instructive on its own---we can obtain a good estimator of $\pi_{\bfLambda^\star, \bfNu^\star}$ in terms of risk and unfairness by performing stochastic optimization on $F$ and controlling the norm of gradient mapping for a suitable parameter $\alpha$. A naive approach in doing so relies on a well-known relation between $F({\bfLambda}, \bfNu) - F({\bfLambda}^\star, \bfNu^\star)$ and $\|{\bG_{\alpha}(\bfLambda, \bfNu)}\|^2$ using the Lipshitzness of the gradient of $F$~\cite[see e.g.,][Lemma 9.11]{beck2014introduction}. 
More concretely, forgetting about the constraints\footnote{Constraints introduce additional challenges, but are not relevant for this discussion.

}, one has
\begin{align}
    \label{eq:story_naive}
    \|\nabla F({\bfLambda},  \bfNu)\|^2 \leq 2M\big(F({\bfLambda}, \bfNu) - F({\bfLambda}^\star, \bfNu^\star)\big)\enspace,
\end{align}
{where $M$ is the Lipschitz constant of $\nabla F$.}
Thus, the above inequality suggests that it is sufficient to control the standard optimization error in order to control the norm of the gradient.
Unfortunately this approach is deemed to fail for two reasons: the first being that we control only the squared norm of the gradient map and not the norm itself, thus loosing in the rate of convergence; the second, and more subtle reason, is the separation of the purely ``statistical'' rate that depends only on the variance of the stochastic gradient and scales as $1/\sqrt{T}$, with $T$ being the number of future samples from $\Prob_{\bX}$, and ``optimization'' rate of convergence that depends on $M$ and the diameter of the problem and typically scales as $1/T$ or even $1/T^2$ if acceleration is used.

Indeed, in our setup, Lipschitz constant $M$ of $\nabla F$ is not a fixed constant, but a parameter to be set---it relates to $\beta$ (cf. Lemma~\ref{lemma:M-bd}). Ideally, seeing Lemma~\ref{lemma:risk}, we want to set $\beta = \Theta(\sqrt{T})$, leading to $M = \Omega(\sqrt{T})$. Thus, in view of~\eqref{eq:story_naive}, a term of the form $M / \sqrt{T}$ appears in the convergence rate, which destroys consistency of the resulting estimator. Arguably, this is less of an issue in case of convex optimization with constant Lipschitz constant $M$, especially if we only want the norm to go to zero. This discussion highlights that it is crucial to keep the separation between the statistical part of the rate and the optimization part of the rate, while controlling the norm of the gradient. Lucky for us, it is known that for convex problems one can indeed control the gradient mapping keeping this separation of the rate~\citep{allenzhu2021make, foster2019complexity}. Note that it is not the case for non-convex problems as demonstrated by~\cite{arjevani2023lower}.
\paragraph{Summary of our approach and why is it different from others.}
Now, keeping in mind the above, rather long justification, we are in position to sketch our approach and the formal presentation is deferred to the next section. For a well selected parameters $\beta > 0$, $L \in \bbN$, we are going to perform stochastic optimization of $F$, relying on the \texttt{SGD3} algorithm of~\cite{allenzhu2021make}. In order to compute the stochastic gradient of $F$, we are simply going to sample one $\Prob_{\bX}$ and it appears that this stochastic gradient has a well-behaved variance (see appendix for details). To make our approach completely data-driven (or at least to understand the order of magnitude of the parameters), we will compute or bound all the oracle quantities that appear in the used optimization algorithm (essentially related to the step-size tuning). We will derive a control of on $\Exp\|{\bG_{\alpha}(\hat\bfLambda,\hat\bfNu)}\|^2$ and then rely on Lemma~\ref{lem:grad_map_stat} and some additional results to demonstrate that the resulting $\pi_{\hat\bfLambda,\hat\bfNu}$ possesses good statistical properties.
\begin{remark}[On the dynamic of algorithm]
    Note that for $\bfLambda = \bfNu = \boldsymbol{0}$, the corresponding
    \begin{align*}
        \big(\pi_{\boldsymbol{0},\boldsymbol{0}} (\ell \mid \bx) \big)_{\ell \in \grid{L}}= \bsigma\left(\beta\left(- (\eta(\bx) - \ell' B / L)^2 \right)_{\ell' \in \grid{L}}\right)\enspace.
    \end{align*}
    That is, the above prediction puts the most amount of mass on the atom $\ell$ which minimizes $(\eta(\bx) - \ell B / L)^2$---the most accurate, but unfair prediction. Since our algorithm is based on a $\texttt{SGD}$-type algorithm, initialized at $\bfLambda_0 = \bfNu_0 = \boldsymbol{0}$, then we expect that during the dynamic of the algorithm, the risk of $\pi_{\bfLambda_t, \bfNu_t}$ increases, while the unfairness decreases. This phenomena coincides with the intuition of post-processing---we want to gain in fairness, while sacrificing some accuracy.
\end{remark}

As it has been already mentioned, the idea of discretizing the image of (randomized) predictions is not novel and has been successfully deployed by~\cite{agarwal2019fair} for an in-processing estimator and by~\cite{chzhen2020-discr} for a post-processing estimator. We use this insight as a building block, but significantly deviate from both algorithms. Compared to~\cite{agarwal2019fair}, our algorithm is positioned in the realm of post-processing and even \emph{online} post-processing, where \iid samples from $\Prob_\bX$ comes in a stream and we do not need to store them in memory. Also, while their algorithm is partially inspired by theory, the same theory suggests that this algorithm is not computationally efficient and it relies on some black-box parts that assume perfect solutions to some optimization problems. That being said, the algorithm of~\cite{agarwal2019fair} seem to be the gold standard method for the generic in-processing method in this problem. Compared to~\cite{chzhen2020-discr}, we have made a sequence of improvements. First, our setup is unawareness, which is not the case in their paper; second, our algorithm is able to handle multiple protected attributes as well as approximate fairness constraints; finally, and most importantly, we do not make black box assumptions about having access to exact minimizers of convex problems and provide an end-to-end analysis of out approach. Let us also remark that our method cannot be considered as a simple extension of~\cite{chzhen2020-discr} as we rely on different phenomenons and provide a very different algorithm.
On a more subjective note, we believe that our approach is a nice example of a real convex optimization problem, where the norm of the gradient plays the central role, while the optimization error in term of the objective function does not matter\footnote{To be more precise, the optimization error is automatically handled by the control of the gradient.}. This is precisely the phenomena highlighted by~\cite{nesterov2012make}.

\section{Proposed algorithm}
\label{sec:the_algo}
\begin{algorithm}[h]
\caption{$\texttt{DP post-processing}(L, T, \beta, \bp, B, \eta, \btau)$}
\label{alg:main}
\begin{algorithmic}[1]
\STATE {\bfseries Input:} discretization parameter $L \geq 1$; regularization $\beta > 0$, number of stochastic gradient evaluations $T \geq 1$; 
marginal distribution $\bp$ of $S$; regression function $\eta$; conditional distribution $\btau$ of $S \mid \bX$;
bound $B > 0$ on $\eta$.
    \STATE Build uniform grid $\class{\hat{Y}}_L$ over $[-B, B]$\;
   \STATE { Set} parameters: $\sigma^2 = \sum_{s \in [K]}\frac{1 - p_s}{p_s}$, $M=2\beta\sigma^2$\;
    \STATE { Set} $(\bfLambda,\bfNu) \mapsto F(\bfLambda,\bfNu)$ as defined in Lemma~\ref{lem:relaxation+smoothing}
   \STATE Run a black-box optimizer $\class{A}(F, \sigma^2, M, T)$ on function $F$ having access to $T$ stochastic gradient evaluations (see~\eqref{eq:stoch_grad}) with variance $\sigma^2$ and smoothness parameter $M$ to obtain $(\hat\bfLambda, \hat\bfNu)$\;
   \RETURN $\pi_{(\hat\bfLambda, \hat\bfNu)}(\cdot \mid \cdot)$ as defined in~\eqref{eq:any_entropic}\;
\end{algorithmic}
\end{algorithm}
In this section, we provide all the details about the proposed algorithm in case $\eta$ and $\btau$ are known. If they are unknown, these quantities are replaced by their estimates $\hat\eta$ and $\hat{\btau}$ that are constructed on a separate labeled set.
First, for $\bfLambda = (\lambda_{\ell s})_{{\ell \in \grid{L}, s \in [K]}}, \bfNu = (\nu_{\ell s})_{\ell \in \grid{L}, s \in [K]}$, let us provide the expression for the gradient of $F$:
\begin{equation}
    \label{eq:true_grad}
\begin{aligned}
    &\nabla_{\square_{\ell s}}F(\bfLambda,\bfNu) = \triangle\Exp\left[\sigma_\ell\left(\beta\left(\scalar{\blambda_{\ell'}-\bnu_{\ell'}}{\bt(\bX)} - r_{\ell'}(\bX) \right)_{\ell'=-L}^{L}\right)t_s(\bX)\right]+\varepsilon_s \enspace,
\end{aligned}
\end{equation}
where $\square \in \{\lambda, \nu\}$ and $\triangle = 1$ if $\square= \lambda$ and $\triangle = -1$ otherwise.
Thus, a \emph{stochastic gradient} $\bg(\bfLambda, \bfNu) = (g_{\lambda_{\ell s}}(\bfLambda, \bfNu), g_{\nu_{\ell s}}(\bfLambda, \bfNu))_{{\ell \in \grid{L}, s \in [K]}}$ of $F$ at a point $(\bfLambda, \bfNu)$ can be computed by erasing expectation in~\eqref{eq:true_grad}, i.e., by sampling one $\bX \sim \Prob_{\bX}$, using the same convention as above about $\square, \triangle$:
\begin{equation}
    \label{eq:stoch_grad}
\begin{aligned}
    &g_{\square_{\ell s}}(\bfLambda, \bfNu) = \triangle\sigma_\ell\left(\beta\left(\scalar{\blambda_{\ell'}-\bnu_{\ell'}}{\bt(\bX)} - r_{\ell'}(\bX) \right)_{\ell'=-L}^{L}\right)t_s(\bX)+\varepsilon_s \enspace.
\end{aligned}
\end{equation}
The next result controls the variance of the above stochastic gradient.
\begin{lemma}
\label{lemma:sigma-bd}
Let $\sigma^2 \eqdef \sum_{s \in [K]} \frac{1-p_s}{p_s}$. It holds that $\Exp\|\bg(\bfLambda, \bfNu) - \nabla F(\bfLambda, \bfNu)\|^2 \leq \sigma^2$.
\end{lemma}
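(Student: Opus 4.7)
The plan is to exploit two key facts: (i) only the random factor $\sigma_\ell(\cdot)(\bX)\,t_s(\bX)$ in each gradient coordinate contributes variance, since $\varepsilon_s$ and all expectations in~\eqref{eq:true_grad} are deterministic; and (ii) the softmax weights telescope across $\ell$ thanks to $\sum_\ell \sigma_\ell = 1$, which will prevent a spurious $L$-dependence from appearing in the final bound.

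First I would subtract~\eqref{eq:true_grad} from~\eqref{eq:stoch_grad} to get, coordinate-wise,
\begin{equation*}
    g_{\square_{\ell s}}(\bfLambda,\bfNu) - \nabla_{\square_{\ell s}} F(\bfLambda,\bfNu) = \triangle\left(\sigma_\ell(w(\bX))\,t_s(\bX) - \Exp[\sigma_\ell(w(\bX))\,t_s(\bX)]\right),
\end{equation*}
where $w(\bx) = \beta\big(\scalar{\blambda_{\ell'}-\bnu_{\ell'}}{\bt(\bx)} - r_{\ell'}(\bx)\big)_{\ell' \in \grid{L}}$. Since $\triangle \in \{-1,+1\}$, squaring does not see its sign, and
\begin{equation*}
    \Exp\|\bg(\bfLambda,\bfNu) - \nabla F(\bfLambda,\bfNu)\|^2 = \sum_{\square,\ell,s} \Var(\sigma_\ell(w(\bX))\,t_s(\bX)) \leq 2\sum_{\ell,s} \Exp\!\left[\sigma_\ell(w(\bX))^2\,t_s(\bX)^2\right].
\end{equation*}

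Next, I would use the pointwise bound $\sigma_\ell(\cdot)^2 \leq \sigma_\ell(\cdot)$ (valid since $\sigma_\ell \in [0,1]$) and then sum over $\ell \in \grid{L}$ to collapse the softmax factor:
\begin{equation*}
    \sum_{\ell} \Exp\!\left[\sigma_\ell(w(\bX))^2\,t_s(\bX)^2\right] \leq \Exp\!\left[t_s(\bX)^2 \sum_{\ell} \sigma_\ell(w(\bX))\right] = \Exp[t_s(\bX)^2].
\end{equation*}
Finally, I would bound $\Exp[t_s(\bX)^2]$ by expanding $t_s(\bx) = 1-\tau_s(\bx)/p_s$ and invoking the tower property: $\Exp[\tau_s(\bX)] = \Prob(S=s) = p_s$, and $\tau_s(\bX) \in [0,1]$ gives $\Exp[\tau_s(\bX)^2] \leq \Exp[\tau_s(\bX)] = p_s$. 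Substituting these two facts yields
\begin{equation*}
    \Exp[t_s(\bX)^2] = -1 + \Exp[\tau_s(\bX)^2]/p_s^2 \leq \frac{1-p_s}{p_s}.
\end{equation*}
Summing over $s \in [K]$ then produces the stated bound (up to the benign factor from summing over $\square \in \{\lambda,\nu\}$, which may be absorbed into the constant defining $\sigma^2$ without affecting any downstream statements).

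The conceptually important step, rather than the computational ones, is the telescoping via the softmax: without the combination $\sigma_\ell^2 \leq \sigma_\ell$ together with $\sum_\ell \sigma_\ell = 1$, the crude bound $\sigma_\ell^2 \leq 1$ would inflate the variance by a factor of $2L+1$. Avoiding this is essential because Lemma~\ref{lemma:M-bd} already forces the smoothness constant $M$ to scale with $\beta$, and any extra $L$-dependence in $\sigma^2$ would propagate through the \texttt{SGD3} rates and ruin consistency of the resulting $\pi_{\hat\bfLambda,\hat\bfNu}$.
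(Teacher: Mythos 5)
Your proof is correct and follows essentially the same three-step structure as the paper's: (i) drop the deterministic $\varepsilon_s$ and use $\Var \leq \Exp[\cdot^2]$; (ii) exploit $\sigma_\ell^2 \leq \sigma_\ell$ together with $\sum_\ell \sigma_\ell = 1$ to collapse the $\ell$-sum; (iii) bound $\Exp[t_s^2(\bX)]$ by $(1-p_s)/p_s$ and sum over $s$. Two small differences are worth noting. First, for step (iii) the paper invokes the Bhatia--Davis inequality (their Lemma~\ref{lemma:tau-var}), whereas you use the elementary observation $\Exp[\tau_s^2] \leq \Exp[\tau_s] = p_s$ for a $[0,1]$-valued variable; for the endpoints $m=0$, $M=1$ these are identical, so nothing is lost. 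Second, you are more careful about the factor of $2$ coming from the $\square \in \{\lambda,\nu\}$ duplication. Your bookkeeping gives $\Exp\|\bg - \nabla F\|^2 \leq 2\sum_s \Exp[t_s^2(\bX)] \leq 2\sigma^2$, whereas the paper's displayed chain appears to drop that factor in its first inequality and lands on $\sigma^2$. You flag this honestly in your closing parenthetical, and you are right that it is harmless downstream (it changes $\sigma^2$ only by an absolute constant, which the $\mathcal{\tilde O}$ in Theorem~\ref{thm:conv} absorbs), but strictly as a derivation of the stated lemma your version is the more defensible one. Your remark about why the softmax telescoping matters --- avoiding a spurious factor of $2L+1$ that would propagate into the optimization rates --- is exactly the right intuition.
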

The proposed method is summarized in Algorithm~\ref{alg:main}. It uses a black-box stochastic optimization algorithm $\class{A}$, that operates on a convex function $F$ and a stochastic first-order oracle. The stochastic-first order oracle is implemented by~\eqref{eq:stoch_grad} and only requires to sample $\bX \sim \Prob$ in an \iid manner. We also pass two additional parameters to this algorithm: namely, we pass the variance $\sigma^2$ from Lemma~\ref{lemma:sigma-bd} and the Lipschitz constant of the gradient of $F$ from Lemma~\ref{lemma:M-bd}. Then one can use any such algorithm. However, as shown in Lemma~\ref{lem:grad_map_stat}, those algorithms that are tailored to control expected norm of gradient mapping are preferred. For example, one can use \texttt{SGD3} of~\cite{allenzhu2021make} or an improved version of~\cite{foster2019complexity} that relies on restarted accelerated SGD of~\cite{ghadimi2012optimal}. 

\section{Theoretical guarantees.}
\label{sec:theory}
Let us first provide main results for Algorithm~\ref{alg:main} assuming that $\eta$ and $\btau$ are known.
Note that Algorithm~\ref{alg:main} can rely on any optimization algorithm. We provide a complete analysis using a refined version of \texttt{SGD3} algorithm of~\cite{allenzhu2021make} that is due to~\cite{foster2019complexity} with additional modifications taking into account the specific structure of our problem. We state the main result in existential form and postpone all the details on the implementation of the algorithm and a primer on optimization to the supplementary material. 

\begin{theorem}
\label{thm:conv}
Let $\bepsilon = (\varepsilon_s)_{s \in [K]} \in [0, 1]^{K}$ and $\sigma^2 = \sum_{s \in [K]}(1 - p_s)/p_s$.
Setting $\beta=\tfrac{T}{8\log_2 (T)}$ and $L = \sqrt{T}$, there exists an optimizer $\class{A}$ to be used in Algorithm~\ref{sec:the_algo} that ensures
\begin{align*}
    \Expf^{\nicefrac{1}{2}}\bigg[\sum_{{\ell \in \grid{L}  s \in [K]}}\left( \mathcal{U}_s\big(\pi_{\bfLambda,\bfNu}, \ell\big) - \varepsilon_s\right)_+^2\bigg] \leq \mathcal{\tilde{O}}\left(\frac{\sigma}{\sqrt{T}}\left(1+\frac{\sigma}{\sqrt{T}}\norm{(\bfLambda^\star, \bfNu^\star)}\right)\right) \enspace.
\end{align*}
Furthermore, if Assumption~\ref{ass:bounded} is satisfied and
\begin{small}
\begin{align}
    \label{eq:benchmark}
   \risk^\star \eqdef \inf_{h : \bbR^d \to [-B, B]}\enscond{\risk(h)}{\sup_{t \in \bbR}|\Prob(h(\bX) \leq t \mid S = s) - \Prob(h(\bX) \leq t)| \leq \frac{\epsilon_s}{2},\quad \forall s\in [K]}\enspace,
\end{align}
\end{small}
then for the same algorithm
\[\Exp\left[\mathcal{R}(\pi_{\hat\bfLambda,\hat\bfNu})\right] \leq \risk^\star + \mathcal{\tilde{O}}\left(\frac{\sigma}{\sqrt{T}}{\Expf^{\nicefrac{1}{2}}\left[\|{(\hat{\bfLambda}, \hat\bfNu)}\|^2\right]}\left(1 + \frac{\sigma}{\sqrt{T}} \norm{(\bfLambda^\star, \bfNu^\star)}\right) + \frac{B}{\sqrt{T}}\right)\enspace.\]
\end{theorem}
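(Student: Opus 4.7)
The plan has three layers: (i) an optimization guarantee controlling the expected squared norm of the gradient mapping $\bG_\alpha(\hat\bfLambda,\hat\bfNu)$; (ii) conversion of this to fairness and risk via Lemma~\ref{lem:grad_map_stat}; and (iii) a discretization / deregularization argument relating $\risk(\pi_{\bfLambda^\star,\bfNu^\star})$ to the benchmark $\risk^\star$. The statistical rate $1/\sqrt{T}$ will ultimately drop out because the choices $L=\sqrt{T}$ and $\beta=T/(8\log_2 T)$ balance the entropic bias of Lemma~\ref{lemma:risk}, the discretization error of the grid, and the variance contribution of the stochastic oracle.

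First, I would apply the black-box optimizer $\class{A}$, instantiated as a restarted accelerated SGD in the style of \cite{foster2019complexity} (with projection onto $\bfLambda,\bfNu\geq 0$), to the convex stochastic problem \eqref{min-lse}. Lemma~\ref{lemma:sigma-bd} provides variance $\sigma^2$ and Lemma~\ref{lemma:M-bd} provides smoothness $M=2\beta\sigma^2$. The stationarity guarantee for such methods on convex smooth stochastic problems, stated for the projected gradient mapping, reads (up to polylogarithmic factors)
\begin{equation*}
\Exp\|\bG_\alpha(\hat\bfLambda,\hat\bfNu)\|^2 \;\lesssim\; \frac{\sigma^2}{T}\!\left(1+\frac{\sigma^2\|(\bfLambda^\star,\bfNu^\star)\|^2}{T}\right).
\end{equation*}
Our choice $\beta=T/(8\log_2 T)$ makes $M/T=\sigma^2/(4\log_2 T)$, which is exactly what is needed so that the ``optimization'' term no longer dominates. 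Taking square roots and invoking Jensen yields $\Exp^{\nicefrac{1}{2}}\|\bG_\alpha\|^2\leq\widetilde{\mathcal{O}}\!\big((\sigma/\sqrt{T})(1+\sigma\|(\bfLambda^\star,\bfNu^\star)\|/\sqrt{T})\big)$, which, combined with the first inequality of Lemma~\ref{lem:grad_map_stat}, immediately yields the fairness bound.

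Next, for the risk, the second inequality of Lemma~\ref{lem:grad_map_stat} together with Cauchy--Schwarz gives
\begin{equation*}
\Exp[\risk(\pi_{\hat\bfLambda,\hat\bfNu})] \leq \risk(\pi_{\bfLambda^\star,\bfNu^\star}) + \Exp^{\nicefrac{1}{2}}\|(\hat\bfLambda,\hat\bfNu)\|^{2}\cdot\Exp^{\nicefrac{1}{2}}\|\bG_\alpha\|^{2} + \frac{\log(2L+1)}{\beta}.
\end{equation*}
It remains to replace $\risk(\pi_{\bfLambda^\star,\bfNu^\star})$ by $\risk^\star$. Fix any $h:\bbR^d\to[-B,B]$ feasible in \eqref{eq:benchmark}, and let $h_L:\bbR^d\to\hatY_L$ be its pointwise rounding to the nearest grid atom, so that $|h_L-h|\leq B/L$. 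Each event $\{h_L(\bX)=\ell B/L\}$ coincides with $\{h(\bX)\in I_\ell\}$ for a half-open interval $I_\ell$, so the probability under $S=s$ minus the unconditional probability is a difference of two CDF gaps, each bounded by $\epsilon_s/2$; hence $\mathcal{U}_s(\pi_{h_L},\ell)\leq\epsilon_s$ for all $\ell$ and $\pi_{h_L}$ is feasible in \eqref{eq:optimal_2} (the factor $2$ in the benchmark is consumed exactly here). A one-line expansion gives $\risk(\pi_{h_L})-\risk(h)=\Exp[(h_L-h)(h_L+h-2\eta)]\leq 4B^2/L$ under Assumption~\ref{ass:bounded}. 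Applying Lemma~\ref{lemma:risk} to $\pi_{h_L}$ and taking the infimum over $h$ delivers $\risk(\pi_{\bfLambda^\star,\bfNu^\star})\leq\risk^\star+4B^2/L+\log(2L+1)/\beta$. With $L=\sqrt{T}$ and $\beta=T/(8\log_2 T)$, both extra terms collapse to $\widetilde{\mathcal{O}}(B^2/\sqrt{T})$, matching the $B/\sqrt{T}$ term in the statement.

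The principal obstacle is the optimization step. The algorithm operates on the positive orthant, so one cannot reason about $\|\nabla F\|$ directly and must work with the gradient mapping $\bG_\alpha$ throughout; moreover the step-sizes and restart schedule must be declared as functions of $M$, $\sigma^2$, and $T$ only, so that the bound does not implicitly depend on the unknown diameter $\|(\bfLambda^\star,\bfNu^\star)\|$. The second subtle point is the joint tuning of $L$ and $\beta$: naively one would like $\beta\to\infty$ to eliminate the entropic bias $\log(2L+1)/\beta$, but as highlighted after Lemma~\ref{lemma:M-bd}, this inflates $M$ and hence the optimization part of $\Exp\|\bG_\alpha\|^2$. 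The choices in the theorem are precisely those that make all three error sources collapse to the same $\widetilde{\mathcal O}(1/\sqrt{T})$ rate; everything else in the argument is routine bookkeeping.
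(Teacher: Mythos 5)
Your proposal is correct and follows essentially the same route as the paper: the optimization guarantee on $\Exp\|\bG_\alpha\|^2$ is the paper's Theorem~\myref{thm:grad-sq} (a projected version of the \cite{foster2019complexity}/\cite{allenzhu2021make} machinery), the conversion to fairness and risk is Lemma~\ref{lem:grad_map_stat}, and your discretization-and-feasibility argument is exactly the content of the paper's Lemma~\ref{lemma:risk2-appen}, with the cosmetic difference that you round to the nearest grid atom while the paper uses the floor map $T_L$. One small remark: your dimensional analysis of the discretization error, $\widetilde{\mathcal O}(B^2/L) = \widetilde{\mathcal O}(B^2/\sqrt{T})$, is actually the correct one; the paper's Lemma~\ref{lemma:risk2-appen} states $4B/L + 1/L^2$, but its own proof uses $|T_L(h^\star)(\bx)-h^\star(\bx)|\leq 1/L$, which should read $\leq B/L$ since the grid spacing is $B/L$, so the lemma (and hence the $B/\sqrt{T}$ term in the theorem) should carry $B^2$ rather than $B$; your write-up quietly flags this but then glosses over it by calling $B^2/\sqrt{T}$ a ``match'' for $B/\sqrt{T}$ --- it is really a typo in the paper that your derivation catches.
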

Theorem~\ref{thm:conv} gives two results: the first one being on the unfairness of the proposed estimator and the second one on the risk of thereof compared to a benchmark prediction function in~\eqref{eq:benchmark}. The benchmark that we pick is rather natural, we compare to the risk of a deterministic prediction that minimizes the risk and whose unfairness is controlled by a Kolmogorov-Smirnov distance.
One first main observation is that both fairness and risk decrease at the rate $1/\sqrt{T}$ and $T$ is the number of unlabeled data. From our numerical experiments, we observed that we can keep the number of unlabeled data unchanged and iterate several times through them. As a result, we increase artificially $T$---without generating new data---which gives a significant empirical improvement. We also remark that $\sigma$ is the parameter that depends on the number of groups. For example, in the case of uniform distribution of sensitive groups $\sigma = O(K)$. We finally remark that both bounds involve a single unknown quantity---$\norm{(\bfLambda^\star, \bfNu^\star)}$, which from standard duality argument can be shown to be bounded by $O(1 / \epsilon)$~\citep[see e.g.,][Lemma 3]{nedic2009subgradient}. Thus, having this norm multiplied by $T^{-1/2}$ is a very attractive property of the bound. It allows to set $\epsilon \approx T^{-1/2}$ without damaging the parametric convergence rate.

To derive the above result, we slightly extend the analysis of~\cite{foster2019complexity}, who, relying on the \texttt{SGD3} algorithm of~\citet{allenzhu2021make}, gave an optimal algorithm that controls the expected norm of the gradient in the convex case. More concretely, we incorporate a projection step into their analysis and extend the control to the squared norm of the gradient map. Interestingly, due to our smoothing step and the choice of the parameter $\beta$, we noticed that there is no need to restart the accelerated SGD as it is done by~\cite{foster2019complexity} because it leads to identical statistical convergence rates. The interested reader can take a closer look into the appendix, where all the optimization results are either recalled or derived for the sake of completeness. Finally, having a control of the squared norm of the gradient map, the proof of Theorem~\ref{thm:conv} follows from Lemma~\ref{lem:grad_map_stat} and a careful and practical choice of all the parameters of the algorithm.


\paragraph{Extension to unknown $\eta$ and $\tau$.}
In this part we show that if we replace $\eta$ and $\btau$ with their estimates $\hat\eta$ and $\hat{\btau}$ and run $\texttt{DP post-processing}(L, T, \beta, \bp, B, \hat\eta, \hat{\btau})$ algorithm with the same choice of parameters, Theorem~\ref{thm:conv} remains if we pay additional price that quantifies additional price for the estimation of $\eta$ and $\btau$.
From now on, we assume that $\hat\eta$ and $\hat\btau$ are provided and are trained on its own labeled data sample, while the refitting is performed on an independent stream of \iid data from $\Prob_\bX$. So, we essentially treat $\hat\eta$ and $\hat\btau$ as deterministic functions.
Let us introduce a family of prediction functions
\begin{align}
    \label{eq:any_estimated_entropic}
    \hat{\pi}_{\bfLambda,\bfNu}(\ell \mid \bx) \eqdef \sigma_\ell\left(\beta\left(\scalar{\blambda_{\ell'}-\bnu_{\ell'}}{\hat{\bt}(\bx)} - \hat{r}_{\ell'}(\bx) \right)_{\ell' \in \grid{L}}\right)  \text{ for } \ell \in \grid{L}\enspace. 
\end{align}
Note that for fixed matrices $\bfLambda,\bfNu$, the above prediction function is fully data-driven. With this plug-in strategy, out approach becomes fully data-driven and, in Appendix~\ref{sec:appendix-unknown}, we show that the guarantees presented in the main body still hold paying additional price for estimation of $\eta$ and $\tau$. To be more precise, we consider the plug-in version of~\eqref{min-lse}, defined as
\begin{align}
    \min_{\bfLambda,\bfNu \geq 0}\left\{\Exp_\bX\left[ \lse\left(\left(\scalar{\blambda_\ell-\bnu_\ell}{\hat \bt(\bX)} - \hat r_\ell(\bX) \right)_{\ell=-L}^{L}\right)\right] +  \sum_{\ell=-L}^{L}\scalar{\blambda_\ell+\bnu_\ell}{\bepsilon}\right\} \enspace. \tag{$\mathcal{\hat P}_{LSE}$}
\end{align}
Let us denote by $\hat F$, the objective function of the above problem.
Thus, main interesting part is to demonstrate that a control of the gradient map of $\hat{F}$, denoted by $\|{\bG_{\hat F, \alpha}(\hat\bfLambda,\hat\bfNu)}\|$, gives a control of risk and unfairness of $\hat{\pi}_{\hat\bfLambda,\hat\bfNu}$, quantifying the price induced by the plug-in estimation.
This is precisely the purpose of the rest of the following two results:
\begin{lemma}
    \label{lem:unfairness_plug_in}
    For any $\bfLambda, \bfNu \geq 0$, it holds that
    \begin{align*}
        \sqrt{\sum_{\substack{\ell \in \grid{L}}{s \in [K]}}\left(\class{U}_s(\hat{\pi}_{\bfLambda, \bfNu}, \ell) - \varepsilon_s\right)_+^2} \leq 
    \|{\bG_{\hat F, \alpha}(\bfLambda,\bfNu)}\| + {\Exp^{\nicefrac{1}{2}}\|\hat{\bt}(\bX) - \bt(\bX)\|^2}\enspace.
    \end{align*}
\end{lemma}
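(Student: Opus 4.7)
The plan is to reduce the statement to a plug-in version of Lemma~\ref{lem:grad_map_stat} plus a perturbation bound for the unfairness functional. First I would rewrite the true unfairness in a form that isolates the dependence on $\bt$. By Bayes' rule, $\Exp[\pi(\ell\mid \bX)\mid S=s]=p_s^{-1}\Exp[\pi(\ell\mid\bX)\tau_s(\bX)]$, so that
\[
\class{U}_s(\pi,\ell) = \left|\Exp\left[\pi(\ell\mid\bX)\,t_s(\bX)\right]\right|,
\]
with $t_s(\bx)=1-\tau_s(\bx)/p_s$. Introduce the plug-in surrogate $\hat{\class{U}}_s(\pi,\ell):=|\Exp[\pi(\ell\mid\bX)\hat{t}_s(\bX)]|$. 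A scalar triangle inequality followed by the triangle inequality in $\ell^2(\grid{L}\times[K])$ yields
\[
\sqrt{\sum_{\ell,s}(\class{U}_s(\hat\pi_{\bfLambda,\bfNu},\ell)-\varepsilon_s)_+^2}\;\leq\;\sqrt{\sum_{\ell,s}(\hat{\class{U}}_s(\hat\pi_{\bfLambda,\bfNu},\ell)-\varepsilon_s)_+^2} + \sqrt{\sum_{\ell,s}(\class{U}_s(\hat\pi_{\bfLambda,\bfNu},\ell)-\hat{\class{U}}_s(\hat\pi_{\bfLambda,\bfNu},\ell))^2}.
\]

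The first summand is handled by a hat-version of Lemma~\ref{lem:grad_map_stat}. The proof of that lemma is purely algebraic in the pair $(\bt,r_\ell)$ appearing in $F$, so running the same argument with $(\hat{\bt},\hat{r}_\ell)$ in place of $(\bt,r_\ell)$ produces exactly $\sqrt{\sum_{\ell,s}(\hat{\class{U}}_s(\hat\pi_{\bfLambda,\bfNu},\ell)-\varepsilon_s)_+^2}\leq \|\bG_{\hat F,\alpha}(\bfLambda,\bfNu)\|$. This gives the first term on the right-hand side of the lemma for free.

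For the second summand, the reverse triangle inequality for $|\cdot|$ gives pointwise
\[
|\class{U}_s(\hat\pi_{\bfLambda,\bfNu},\ell)-\hat{\class{U}}_s(\hat\pi_{\bfLambda,\bfNu},\ell)|\leq \Exp\bigl[\hat\pi_{\bfLambda,\bfNu}(\ell\mid\bX)\,|t_s(\bX)-\hat{t}_s(\bX)|\bigr].
\]
Applying Cauchy-Schwarz by splitting the weight $\hat\pi_{\bfLambda,\bfNu}(\ell\mid\bX)=\sqrt{\hat\pi_{\bfLambda,\bfNu}(\ell\mid\bX)}\cdot\sqrt{\hat\pi_{\bfLambda,\bfNu}(\ell\mid\bX)}$ and using $\Exp[\hat\pi_{\bfLambda,\bfNu}(\ell\mid\bX)]\leq 1$ yields
\[
(\class{U}_s-\hat{\class{U}}_s)^2\leq \Exp\bigl[\hat\pi_{\bfLambda,\bfNu}(\ell\mid\bX)(t_s(\bX)-\hat{t}_s(\bX))^2\bigr].
\]
Summing over $\ell$ collapses $\sum_\ell\hat\pi_{\bfLambda,\bfNu}(\ell\mid\bX)=1$, giving $\Exp[(t_s-\hat{t}_s)^2]$, and summing over $s$ yields $\Exp\|\bt(\bX)-\hat{\bt}(\bX)\|^2$. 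Taking square roots and combining with the first summand finishes the proof.

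The only genuinely non-routine step is the Cauchy-Schwarz in the perturbation bound: a naive uniform bound $\hat\pi\leq 1$ would produce an extra factor of $\sqrt{2L+1}$ from the grid, and the simplex structure of $\hat\pi_{\bfLambda,\bfNu}(\cdot\mid\bx)$ is precisely what keeps the right-hand side dimension-free in $L$.
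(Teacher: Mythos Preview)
Your proof is correct and follows essentially the same route as the paper: the same triangle-inequality decomposition into a plug-in unfairness term controlled by $\|\bG_{\hat F,\alpha}(\bfLambda,\bfNu)\|$ via the hat-analogue of Lemma~\ref{lem:grad_map_stat}, plus a perturbation term handled by Cauchy--Schwarz and the simplex structure of $\hat\pi_{\bfLambda,\bfNu}(\cdot\mid\bx)$. The only cosmetic difference is in the Cauchy--Schwarz step: the paper uses Jensen $(\Exp A)^2\leq\Exp A^2$ with $A=\hat\pi(\ell\mid\bX)|t_s-\hat t_s|$ and then bounds $\sum_\ell\hat\pi(\ell\mid\bX)^2\leq 1$, whereas you split $\hat\pi=\sqrt{\hat\pi}\cdot\sqrt{\hat\pi}$ and collapse $\sum_\ell\hat\pi(\ell\mid\bX)=1$ directly; both arrive at the same dimension-free bound.
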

\begin{lemma}
\label{lem:risk_plug_in}
    For any $\bfLambda, \bfNu \geq 0$, it holds that
    \begin{align*}
        \risk(\hat{\pi}_{\bfLambda, \bfNu})
        \leq
        \,\,&\risk(\pi_{\bfLambda^\star, \bfNu^\star}) + \|({\bfLambda}, \bfNu)\|\cdot\|\bG_{\hat F, \alpha}(\bfLambda,\bfNu)\big\| + \frac{\log(2L+1)}{\beta}\\
        &+2\Exp_\bX\left[\max_{\ell \in \grid{L}}|r_\ell(\bX) - \hat{r}_\ell(\bX)|\right] + \sqrt{2}\|({\bfLambda}, \bfNu)\| \cdot \Exp^{\nicefrac{1}{2}}\|\bt(\bX) - \hat{\bt}(\bX)\|^2\enspace.
    \end{align*}
\end{lemma}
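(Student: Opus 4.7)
The strategy is to mimic the Lagrangian-duality argument behind Lemma~\ref{lem:grad_map_stat}, executed in the plug-in problem, and to pay for the $\eta$--$\hat\eta$ and $\btau$--$\hat\btau$ discrepancies via elementary inequalities. Define the plug-in risk $\hat\risk(\pi) := \Exp_\bX[\sum_{\ell}\pi(\ell\mid\bX)\hat r_\ell(\bX)]$ and its entropic version $\hat\risk_\beta(\pi) := \hat\risk(\pi) + \tfrac{1}{\beta}\Exp[\Psi(\pi(\cdot\mid\bX))]$; these are to $\hat F$ what $\risk,\risk_\beta$ are to $F$, and by design $\hat\pi_{\bfLambda,\bfNu}$ is the closed-form minimizer of the plug-in Lagrangian $\hat L(\cdot,\bfLambda,\bfNu)$.

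\textbf{Swap between $\risk$ and $\hat\risk$.} Using $\sum_\ell \hat\pi_{\bfLambda,\bfNu}(\ell\mid\bX) = 1$ almost surely gives
\[
\risk(\hat\pi_{\bfLambda,\bfNu}) - \hat\risk(\hat\pi_{\bfLambda,\bfNu}) = \Exp_\bX\Big[\sum_{\ell}\hat\pi_{\bfLambda,\bfNu}(\ell\mid\bX)(r_\ell(\bX) - \hat r_\ell(\bX))\Big] \leq \Exp_\bX\big[\max_{\ell}|r_\ell(\bX) - \hat r_\ell(\bX)|\big],
\]
and the same bound, applied at the end of the argument to the reference $\pi_{\bfLambda^\star,\bfNu^\star}$, yields the second copy of this term, explaining the factor $2$ in the statement.

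\textbf{Lagrangian comparison and $\bt$-perturbation.} The inequality $\hat L(\hat\pi_{\bfLambda,\bfNu},\bfLambda,\bfNu) \leq \hat L(\pi_{\bfLambda^\star,\bfNu^\star},\bfLambda,\bfNu)$ reduces, after cancellation of the $-\sum(\lambda+\nu)\varepsilon_s$ term common to both sides, to
\[
\hat\risk_\beta(\hat\pi_{\bfLambda,\bfNu}) \leq \hat\risk_\beta(\pi_{\bfLambda^\star,\bfNu^\star}) + \sum_{\ell,s}(\nu_{\ell s}-\lambda_{\ell s})\big(\Exp[\pi_{\bfLambda^\star,\bfNu^\star}(\ell\mid\bX)\hat t_s(\bX)] - \Exp[\hat\pi_{\bfLambda,\bfNu}(\ell\mid\bX)\hat t_s(\bX)]\big).
\]
I would decompose $\Exp[\pi_{\bfLambda^\star,\bfNu^\star}\hat t_s] = \Exp[\pi_{\bfLambda^\star,\bfNu^\star} t_s] + \Exp[\pi_{\bfLambda^\star,\bfNu^\star}(\hat t_s-t_s)]$. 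The first piece is controlled by $\varepsilon_s$ via Lemma~\ref{lemma:fairness}, which combined with $|\nu-\lambda|\leq\lambda+\nu$ yields a $\sum(\lambda+\nu)\varepsilon_s$ contribution; the second piece is handled by two Cauchy--Schwarz steps: pointwise in $(\ell,s)$, using $\pi_{\bfLambda^\star,\bfNu^\star}\leq 1$ to get $(\Exp[\pi_{\bfLambda^\star,\bfNu^\star}|\hat t_s-t_s|])^2 \leq \Exp[\pi_{\bfLambda^\star,\bfNu^\star}(\hat t_s-t_s)^2]$, and globally via $\|\bfLambda+\bfNu\|\leq\sqrt{2}\|(\bfLambda,\bfNu)\|$ together with $\sum_\ell \pi_{\bfLambda^\star,\bfNu^\star}(\ell\mid\bX)=1$, producing exactly $\sqrt{2}\|(\bfLambda,\bfNu)\|\,\Exp^{\nicefrac{1}{2}}\|\hat\bt-\bt\|^2$. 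Finally, the closed-form~\eqref{eq:true_grad} applied to $\hat F$ gives the identity $-\sum_{\ell,s}(\nu_{\ell s}-\lambda_{\ell s})\Exp[\hat\pi_{\bfLambda,\bfNu}\hat t_s] = \langle(\bfLambda,\bfNu),\nabla\hat F(\bfLambda,\bfNu)\rangle - \sum_{\ell,s}(\lambda_{\ell s}+\nu_{\ell s})\varepsilon_s$, and the two $\sum(\lambda+\nu)\varepsilon_s$ contributions cancel exactly.

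\textbf{Entropy, gradient map, and obstacle.} Since $-\log(2L+1)\leq\Psi\leq 0$ on $\hat\class{Y}_L$, I swap $\hat\risk_\beta$ for $\hat\risk$ at a cost of $\log(2L+1)/\beta$; I then invoke the same projected-gradient inequality used at the end of the proof of Lemma~\ref{lem:grad_map_stat} (now applied to $\hat F$ on the non-negative orthant) to upgrade $\langle(\bfLambda,\bfNu),\nabla\hat F\rangle$ into $\|(\bfLambda,\bfNu)\|\cdot\|\bG_{\hat F,\alpha}(\bfLambda,\bfNu)\|$, and combine with the $\risk$--$\hat\risk$ swap applied to both $\hat\pi_{\bfLambda,\bfNu}$ and $\pi_{\bfLambda^\star,\bfNu^\star}$. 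The main obstacle is the tight algebraic bookkeeping in the previous paragraph: the $\sum(\lambda+\nu)\varepsilon_s$ term arising from $|\nu-\lambda|\leq\lambda+\nu$ together with fairness of $\pi_{\bfLambda^\star,\bfNu^\star}$ must exactly cancel the $\sum(\lambda+\nu)\varepsilon_s$ coming from the closed-form gradient identity for $\nabla\hat F$; otherwise residual $\varepsilon_s$-terms would pollute the final bound.
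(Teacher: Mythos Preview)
Your proposal is correct and lands on the stated bound with the right constants, but it follows a genuinely different decomposition from the paper's proof.

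\textbf{How the paper argues.} The paper first invokes Lemma~\ref{lem:grad_map_stat} (Part~II) verbatim for the plug-in objective, obtaining $\hat\risk_\beta(\hat\pi_{\bfLambda,\bfNu}) + \hat F(\bfLambda,\bfNu) \leq \|(\bfLambda,\bfNu)\|\cdot\|\bG_{\hat F,\alpha}(\bfLambda,\bfNu)\|$. It then controls the plug-in discrepancy in one shot by bounding $|\hat F(\bfLambda,\bfNu) - F(\bfLambda,\bfNu)|$ pointwise, using that $\|\nabla\lse(\cdot)\|_1 \equiv 1$; this single step produces both the $\max_\ell|r_\ell-\hat r_\ell|$ term and the $\|\blambda_\ell-\bnu_\ell\|\,\|\bt-\hat\bt\|$ term. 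A second $\max_\ell|r_\ell-\hat r_\ell|$ comes from $|\hat\risk_\beta(\hat\pi)-\risk_\beta(\hat\pi)|$. Finally the paper uses $F(\bfLambda,\bfNu) \geq \min F = -\risk_\beta(\pi_{\bfLambda^\star,\bfNu^\star})$ and $\max_\ell\|\blambda_\ell-\bnu_\ell\| \leq \sqrt{2}\|(\bfLambda,\bfNu)\|$.

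\textbf{How you argue.} You bypass the $\hat F \leftrightarrow F$ comparison entirely: you compare the plug-in Lagrangian at $\hat\pi_{\bfLambda,\bfNu}$ against its value at the specific point $\pi_{\bfLambda^\star,\bfNu^\star}$, then peel off the errors term by term. The $r$--$\hat r$ cost is paid by two $\risk \leftrightarrow \hat\risk$ swaps (one for $\hat\pi_{\bfLambda,\bfNu}$, one for $\pi_{\bfLambda^\star,\bfNu^\star}$), while the $\bt$--$\hat\bt$ cost comes from splitting $\Exp[\pi^\star\hat t_s]$ and using the \emph{fairness} of $\pi_{\bfLambda^\star,\bfNu^\star}$ (Lemma~\ref{lemma:fairness}) to make the $\sum(\lambda+\nu)\varepsilon_s$ contributions cancel; your bookkeeping there is correct (the cancellation is an inequality, $A - \sum(\lambda+\nu)\varepsilon_s \leq 0$, which is all you need). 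Your Cauchy--Schwarz chain for the perturbation term is valid and yields the same $\sqrt{2}\|(\bfLambda,\bfNu)\|\,\Exp^{1/2}\|\bt-\hat\bt\|^2$ as the paper (the norm that appears naturally is $\|\bfLambda-\bfNu\|$, not $\|\bfLambda+\bfNu\|$ as you wrote, but both satisfy the same $\sqrt{2}\|(\bfLambda,\bfNu)\|$ bound, so this is cosmetic).

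\textbf{What each buys.} The paper's route is shorter and more modular: the $\lse$-Lipschitz step handles both discrepancies inside $\hat F$ at once and does not need Lemma~\ref{lemma:fairness}. Your route is more elementary---no appeal to properties of $\lse$ beyond what is already embedded in the form of $\hat\pi_{\bfLambda,\bfNu}$---but needs the extra ingredient of fairness of $\pi_{\bfLambda^\star,\bfNu^\star}$ to close the $\varepsilon_s$-bookkeeping. The final ``upgrade'' step you flag (passing from $\langle(\bfLambda,\bfNu),\nabla\hat F\rangle$ to $\|(\bfLambda,\bfNu)\|\cdot\|\bG_{\hat F,\alpha}\|$) is exactly the same black box the paper invokes, namely Lemma~\ref{lem:grad_map_stat} Part~II transported to $\hat F$, so you are on equal footing there.
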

Note that the two above results are extensions of Lemma~\ref{lem:grad_map_stat}, where both $\bt$ and $\eta$ were assumed to be known. These results are following the spirit of post-processing bounds---the quality of the final approach depends on the initial estimator and the optimization algorithm used to post-process and the two errors are clearly separated.
Proofs of both results with additional details and discussions is provided in the supplementary material.

\section{Numerical illustration}
\label{sec:exps}
In this section we conduct empirical study of the proposed algorithm, denoted by \texttt{DP-postproc}, and demonstrate its relevance in practical problems~\footnote{Code is available here \url{https://github.com/taturyan/unaware-fair-reg}}. We have implemented both \texttt{SGD3} of~\cite{allenzhu2021make} and an improved version by~\cite{foster2019complexity}, observing that the latter significantly outperforms the former. We also tested the approach that is suggested by the theory---\texttt{SGD3} and accelerated SGD, without restart and it show nearly identical performance as the restarted version of~\cite{foster2019complexity}.
Thus, for numerical evaluation, we stick to the latter. \\
We conduct our study on two datasets: \textit{Law School} dataset (\citet{Wightman1998LSACNL}) and \textit{Communities and Crime} dataset (\citet{misc_communities_and_crime_183}). In the \textit{Law School} dataset, the aim is to predict students' GPA on a scale of 0 to 4, normalized to $[0,1]$, while in the Communities and Crime dataset, we focus on predicting the normalized number of violent crimes per population within the range of $[0,1]$. In both datasets, ethnicity is a sensitive attribute, distinguishing between white and non-white individuals or communities (majority-wise).\\
Our pipeline is the following; First, we randomly split the data into training, unlabeled and testing  sets with proportions of $0.4\times0.4\times0.2$.  We use $\data_{\trainlabeled} = \{(\bx_i, s_i ,y_i)_{i = 1}^n\}$ to train a base (unfair) regressor to estimate $\eta$ and to train a classifier to estimate $\tau$. We use simple \textit{LinearRegression} and \textit{LogisticRegression} from \textit{scikit-learn} for training the regressor and the classifier. Finally, we use the trained regressor and classifier to train the Algorithm~\ref{alg:main} with $\data_{\trainunlab} = (\bx)_{i = n+1}^{n+T}$ for $N$ (note that our theory suggests that $N = T$ is enough, but we have noticed that larger $N$ can be more beneficial in practice) iterations. We use $\data_{\test} = \{(\bx'_i, s_i', y_i')\}_{i = 1}^m$ to collect evaluation statistics. 

In Figure~\ref{fig:wrt_iter} we illustrate 2 plots for each test dataset: the history of risk and of the unfairness \wrt number of iterations.
We illustrate the convergence for $\varepsilon = (2^{-8}, 2^{-8})$ unfairness threshold. 

\begin{figure}[!htb]
\minipage{0.25\textwidth}
  \includegraphics[width=\linewidth]{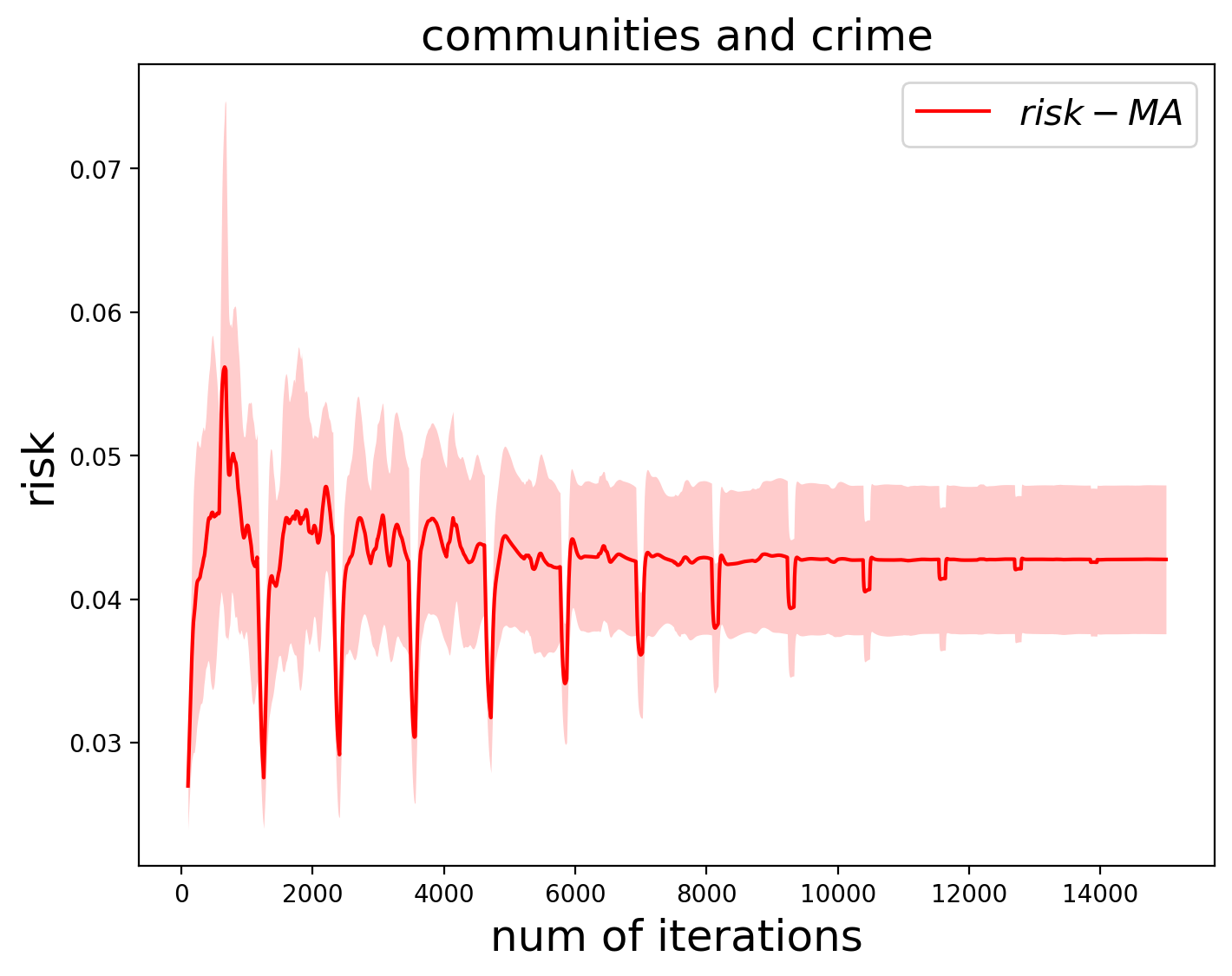}
\endminipage
\minipage{0.25\textwidth}
  \includegraphics[width=\linewidth]{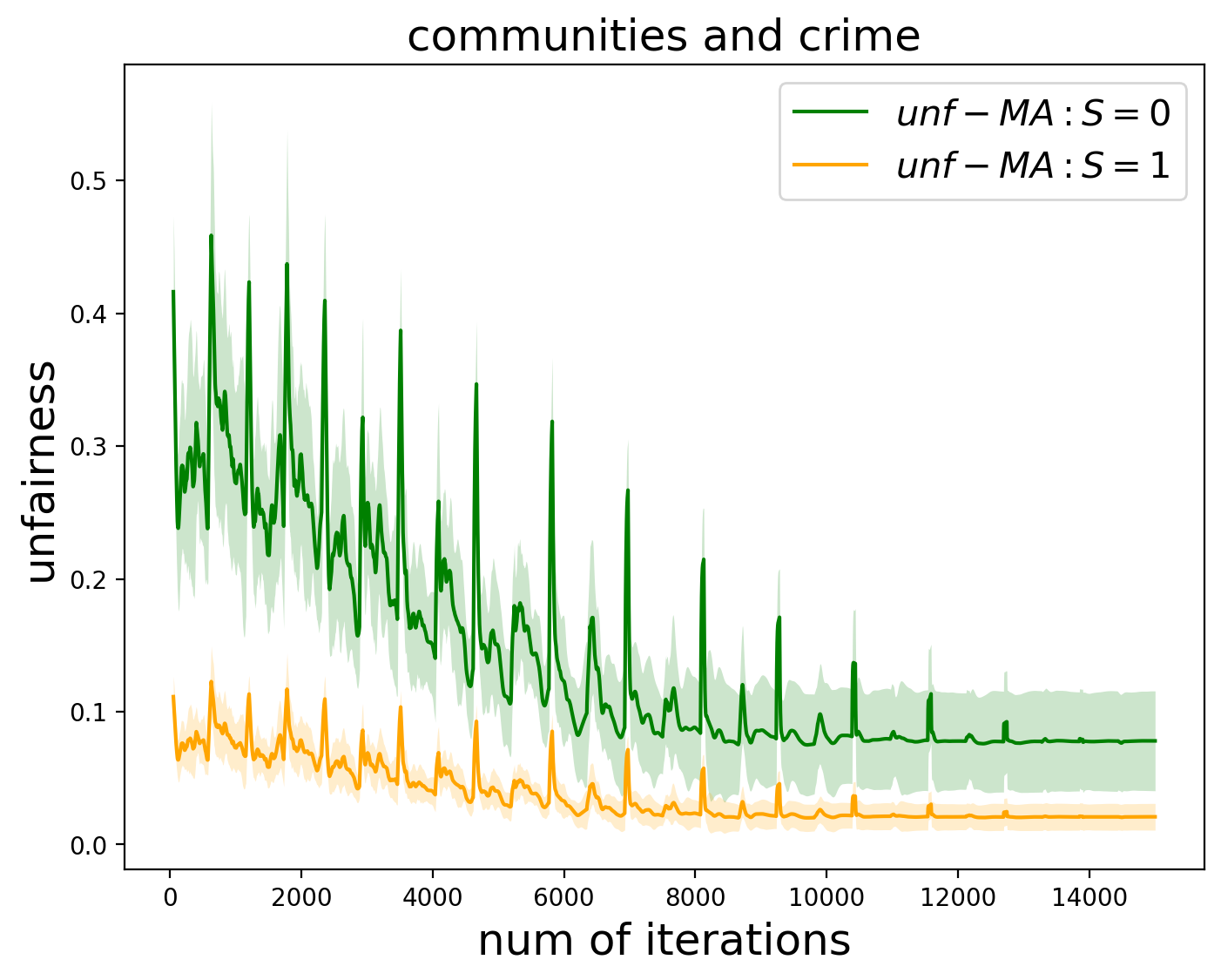}
\endminipage
\minipage{0.25\textwidth}
  \includegraphics[width=\linewidth]{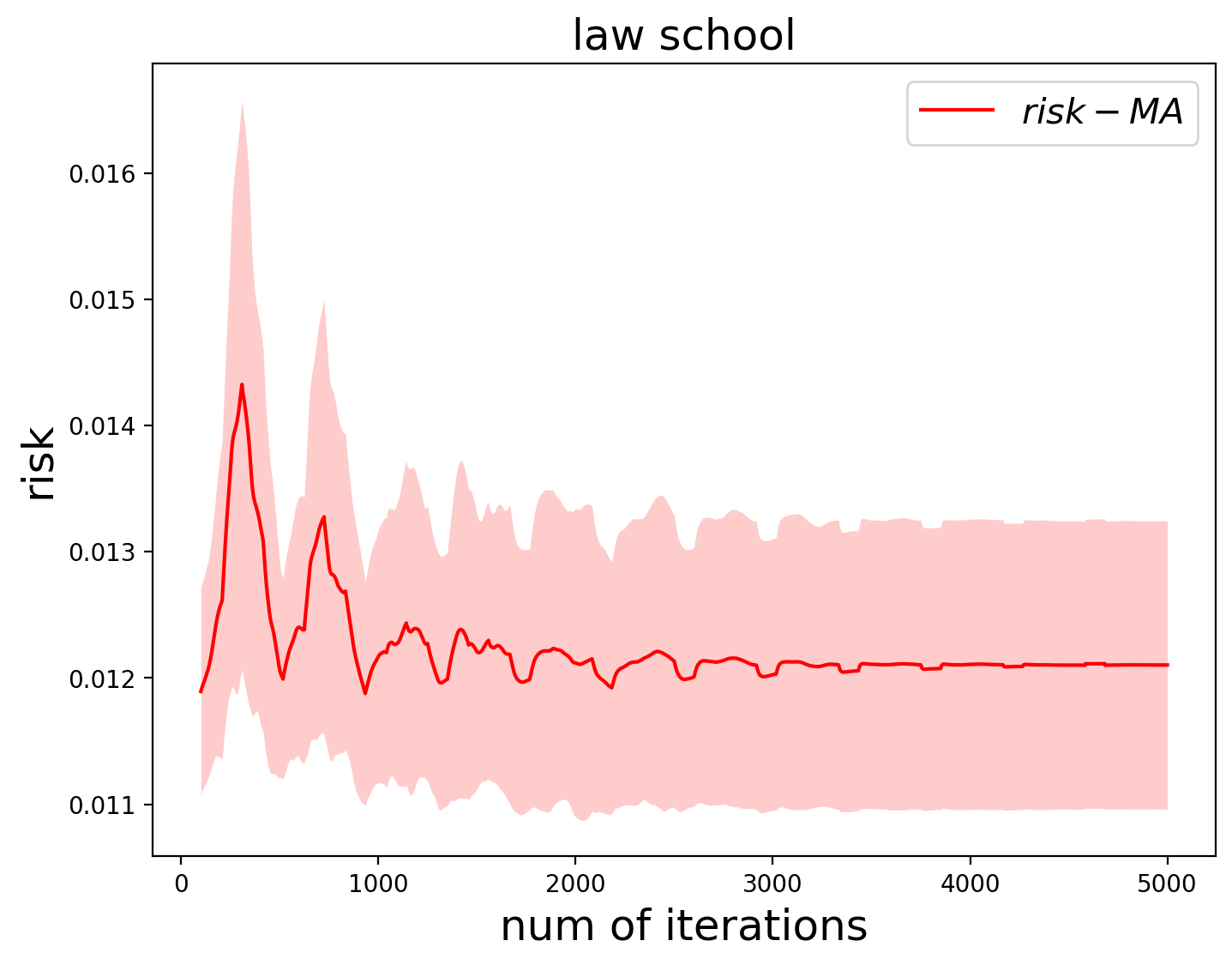}
\endminipage
\minipage{0.25\textwidth}
  \includegraphics[width=\linewidth]{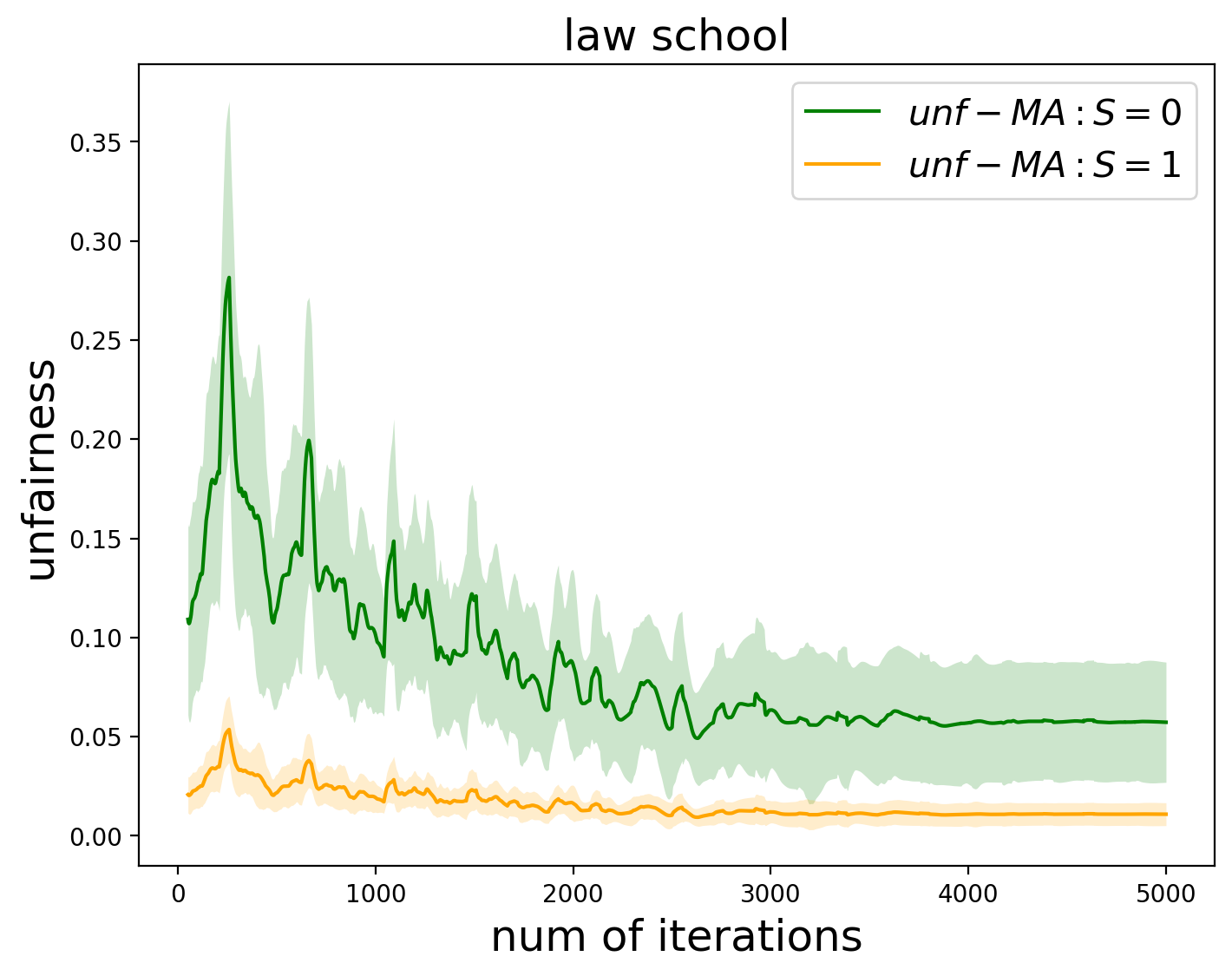}
\endminipage
\caption{Risk and unfairness of our estimator on \textit{Communities and Crime} and \textit{Law School} datasets.}
\label{fig:wrt_iter}
\end{figure}
\paragraph{Comparison with \citet{agarwal2019fair}.}
Surprisingly, we were unable to find many open source competitors that target regression with demographic parity constraint in unaware setting, even the \texttt{FairLearn}---a popular \texttt{python} package---does not deal with the demographic parity constraint. The only easily accessible algorithm that deals with our problem was kindly provided by~\cite{agarwal2019fair} (from now on referenced as ADW).
We train ADW method in two ways: we use $\data_{\trainlabeled}$ and $\data_{\trainunlab}$ as training set for ADW-1, whereas for ADW-2 we use only $\data_{\trainlabeled}$.
The second situation is realistic, when unlabeled data is available and unlike ADW, our approach is able to take advantage of it.
We take the set $\{(2^{-i},2^{-i})_{i \in \mathcal{I}}\}$, where $\mathcal{I}=\{1, 2, 4, 8, 16\}$ as unfairness thresholds for training both datasets. We train ADW-1 and ADW-2 for each pair of epsilons for 10 times. With our available computing power and the code provided by the authors, the algorithm runs for 13.5 hours (see supplementary material for additional details).\footnote{The experiments are conducted on a Processor 11th Gen Intel(R) Core(TM) i7-1195G7 2.90GHz with 16GB RAM.} \\

On Figure~\ref{fig:compare_with_ADW} we illustrate the comparison of risk and unfairness between ADW-1, ADW-2, base (LinearRegression) and our model. We plot the mean and standard deviation of risk and unfairness for each epsilon threshold  on both datasets. We observe that our method is competitive or eventually outperforms ADW in both training regimes.

\begin{figure}[!htb]
\minipage{0.48\textwidth}
\centering
  \includegraphics[width=\linewidth]{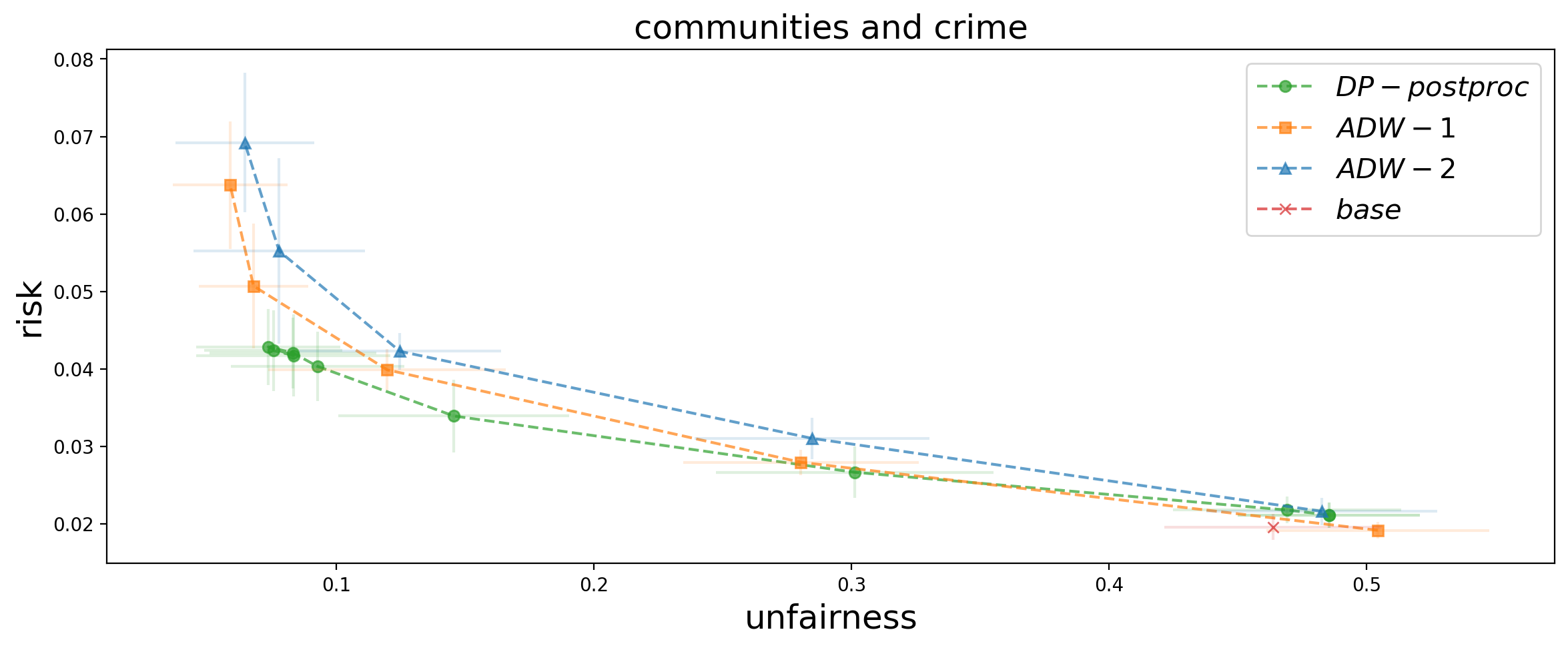}
\endminipage 
\minipage{0.48\textwidth}
\centering
  \includegraphics[width=\linewidth]{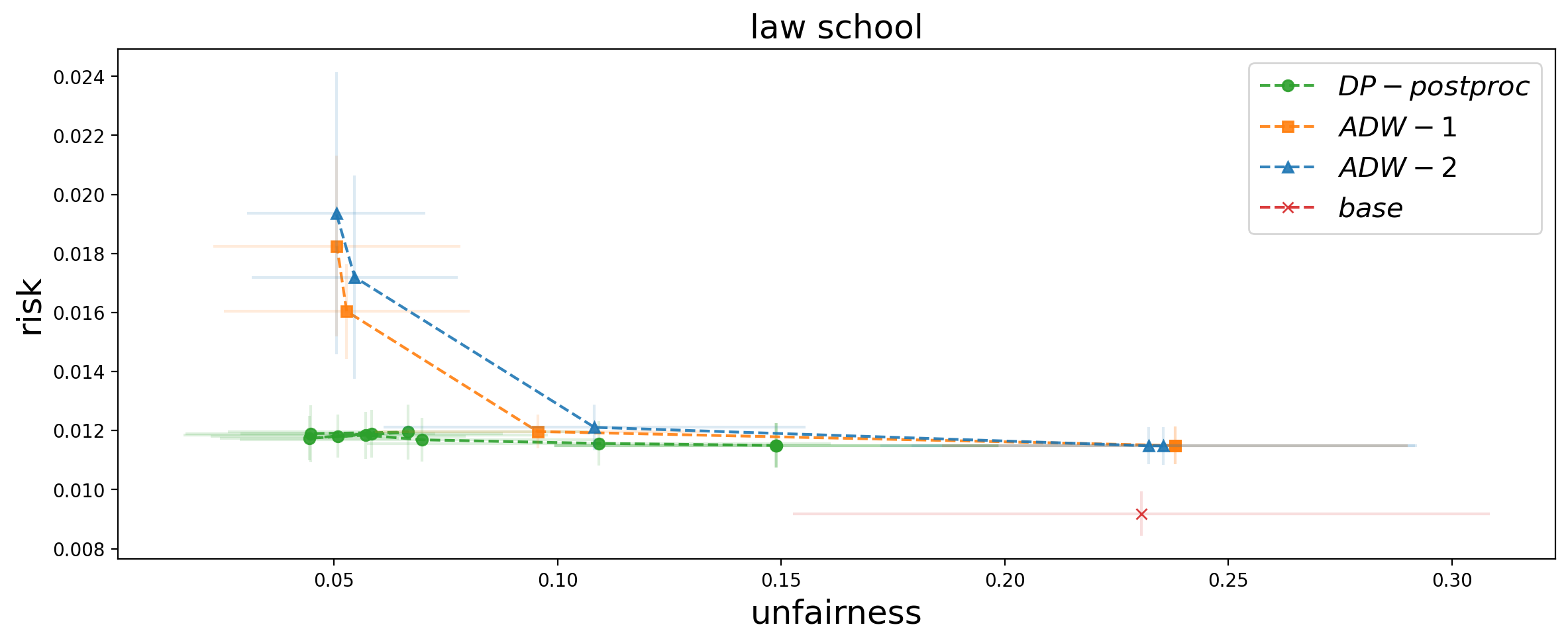}
\endminipage
\caption{Comparison with ADW model on \textit{Communitites and Crime} and \textit{Law School} datasets.}
\label{fig:compare_with_ADW}
\end{figure}

\section{Conclusion}
Deriving a dual convex surrogate, we have provided a generic way to build a post-processing estimator of any off-the-shelf method that achieves the demographic parity constraint. Our approach is fully data and theory driven, revealing a key role of stationary point guarantees in stochastic convex optimization. Following Remark~\ref{rem:extension}, we intend to extend our approach, which is general enough, to other learning problems, beyond algorithmic fairness.
\paragraph{Limitations.} From the theoretical perspective, the knowledge of $B$ seems to be the main limitation. While it is available for many applications, it does not have to be the case all the time. Replacing this assumption with some tail conditions, could be more realistic. From the applied perspective, it would be beneficial to further investigate stationary point guarantees for convex optimization to yield a better practical performance.
\paragraph{Acknowledgements}
The work of Gayane Taturyan has been supported by the French government under the "France 2030” program, as part of the SystemX Technological Research Institute within the Confiance.ai project.

\bibliography{bibliography.bib}
\bibliographystyle{plainnat}
\newpage

\appendix

\section{Proofs for results in Section~\ref{sec:methodo}}
\label{app:proofSection3}
First we explicit the first order optimality conditions for the problem in~\eqref{min-lse}.
\begin{lemma}
    \label{lem:KKT}
    Let $(\bfLambda^\star, \bfNu^\star) \geq 0$ be any minimizer of~\eqref{min-lse} and $\pi^\star = \pi_{\bfLambda^\star, \bfNu^\star}$ be defined in~\eqref{eq:optimal_entropic}. Then, there exist $\mathbf{\Gamma} = (\gamma_{\ell s})_{\ell \in \grid{L}, s \in [K]},\mathbf{\Gamma}' = (\gamma'_{\ell s})_{\ell \in \grid{L}, s \in [K]}$---element-wise non-negative matrices such that 
\begin{align}
    \label{kkt}
    \begin{cases}
        &\Exp_\bX\left[\pi^\star(\ell \mid \bX) \bt(\bX)\right] = -\bepsilon + \bgamma_\ell \\
        &\Exp_\bX\left[\pi^\star(\ell \mid \bX) \bt(\bX)\right] = \bepsilon - \bgamma'_\ell \\
        & \gamma_{\ell s}\lambda_{\ell s}^{\star}=0 \\
        & \gamma'_{\ell s}\nu_{\ell s}^{\star}=0 
    \end{cases} \qquad \forall \ell \in \grid{L}, s \in [K]\enspace,
\end{align}
where $\bgamma_\ell = (\gamma_{\ell s})_{s \in [K]},\bgamma'_\ell = (\gamma'_{\ell s})_{s \in [K]}$.
\end{lemma}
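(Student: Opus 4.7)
The plan is to recognize the problem in \eqref{min-lse} as a smooth convex minimization of $F$ over the non-negative orthant $\{(\bfLambda, \bfNu) : \bfLambda, \bfNu \geq 0\}$, for which the standard KKT conditions apply directly. Convexity and differentiability of $F$ are already given (Lemma~\ref{lemma:M-bd}), and since the constraint set is a polyhedron with a trivial feasible interior, no constraint qualification issue arises; the KKT conditions are necessary at any minimizer $(\bfLambda^\star, \bfNu^\star)$.

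First I would write these conditions explicitly: there exist coordinate-wise non-negative multiplier matrices $\mathbf{\Gamma} = (\gamma_{\ell s})$ and $\mathbf{\Gamma}' = (\gamma'_{\ell s})$ such that
\[
\nabla_{\lambda_{\ell s}} F(\bfLambda^\star, \bfNu^\star) = \gamma_{\ell s}, \qquad \nabla_{\nu_{\ell s}} F(\bfLambda^\star, \bfNu^\star) = \gamma'_{\ell s},
\]
together with the complementary slackness $\gamma_{\ell s}\lambda^\star_{\ell s} = 0$ and $\gamma'_{\ell s}\nu^\star_{\ell s} = 0$. It then remains to compute these two partial derivatives and identify the resulting expressions with the ones claimed in the lemma.

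The second step is the gradient calculation. Using $\partial_{j}\lse(\bw) = \sigma_j(\beta \bw)$ together with the chain rule applied to the inner linear form $w_{\ell'} = \scalar{\blambda_{\ell'}-\bnu_{\ell'}}{\bt(\bX)} - r_{\ell'}(\bX)$, whose partial derivative with respect to $\lambda_{\ell s}$ is $\delta_{\ell \ell'}\, t_s(\bX)$ and with respect to $\nu_{\ell s}$ is $-\delta_{\ell \ell'}\, t_s(\bX)$, and not forgetting the linear penalty term $\sum_{\ell}\scalar{\blambda_\ell+\bnu_\ell}{\bepsilon}$ which contributes $\varepsilon_s$ in each case, I obtain
\[
\nabla_{\lambda_{\ell s}} F(\bfLambda^\star, \bfNu^\star) = \Exp\bigl[\pi^\star(\ell \mid \bX)\, t_s(\bX)\bigr] + \varepsilon_s, \qquad \nabla_{\nu_{\ell s}} F(\bfLambda^\star, \bfNu^\star) = -\Exp\bigl[\pi^\star(\ell \mid \bX)\, t_s(\bX)\bigr] + \varepsilon_s,
\]
where I recognized $\sigma_\ell\bigl(\beta(w_{\ell'})_{\ell' \in \grid{L}}\bigr) = \pi^\star(\ell \mid \bX)$ by definition~\eqref{eq:optimal_entropic}. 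An interchange of differentiation and expectation is justified by uniform boundedness of the softmax by $1$ and integrability of $\bt(\bX)$.

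Plugging these two identities into the stationarity conditions $\nabla_{\lambda_{\ell s}} F = \gamma_{\ell s}$ and $\nabla_{\nu_{\ell s}} F = \gamma'_{\ell s}$ yields
\[
\Exp\bigl[\pi^\star(\ell \mid \bX)\, t_s(\bX)\bigr] = -\varepsilon_s + \gamma_{\ell s}, \qquad \Exp\bigl[\pi^\star(\ell \mid \bX)\, t_s(\bX)\bigr] = \varepsilon_s - \gamma'_{\ell s},
\]
which, combined with the complementary slackness from the KKT system, is exactly \eqref{kkt}. There is no real obstacle here; the only care needed is the chain rule through $\lse$ (recognizing that the softmax factor is exactly $\pi^\star$) and the correct sign in the $\bnu$-derivative. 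The proof is thus essentially a direct unpacking of KKT.
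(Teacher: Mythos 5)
Your proof is correct and follows essentially the same route as the paper: identify the problem as smooth convex minimization over the non-negative orthant, invoke the necessity of the KKT conditions (constraint qualification being automatic for linear constraints), and then compute $\nabla F$ via the chain rule through $\lse$ to recognize $\pi^\star$ in the softmax factor. The paper is slightly more terse (it simply refers to the gradient expression already stated in~\eqref{eq:true_grad}), while you spell out the chain rule and the differentiation-under-the-expectation step, but the argument is identical in substance.
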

\begin{proof}
We first observe that the optimization problem in~\eqref{min-lse} is convex and smooth.
Thus, Karush–Kuhn–Tucker conditions are sufficient for optimally. Furthermore, since Slatter's condition is satisfied, the latter is also necessary, as the strong duality holds.
In particular, there exist $\mathbf{\Gamma} = (\gamma_{\ell s})_{\ell \in \grid{L}, s \in [K]},\mathbf{\Gamma}' = (\gamma'_{\ell s})_{\ell \in \grid{L}, s \in [K]}$---element-wise non-negative matrices such that
\begin{align*}
\begin{cases}
    \nabla_{\bfLambda} F(\bfLambda^\star, \bfNu^\star) - \mathbf{\Gamma} = \mathbf{0}\\
    \nabla_{\bfNu} F(\bfLambda^\star, \bfNu^\star) - \mathbf{\Gamma}' = \mathbf{0}\\
    \bfLambda^\star, \bfNu^\star \geq 0\\
    \gamma_{\ell s}\lambda^\star_{\ell s} = 0\\
    \gamma_{\ell s}' \nu^\star_{\ell s} = 0
\end{cases}
\qquad \forall \ell \in \grid{L}, s \in [L]\enspace.
\end{align*}
To conclude, it is sufficient to evaluate the gradient on $F$, whose expression is given in~\eqref{eq:true_grad} and use the definition of $\pi^\star$.
\end{proof}
\begin{proof}[\textbf{Proof of Lemma}~\ref{lem:relaxation+smoothing}]
    To prove this result, we introduce the Lagrangian for the problem in~\eqref{eq:optimal_entropic_init}. 
    \begin{align*}
        \class{L}(\pi, \bfLambda, \bfNu) = \risk_{\beta}(\pi) + \Exp_{\bX}\left[\sum_{\substack{\ell \in \grid{L}}}\scalar{\bnu_\ell - \blambda_\ell}{\bt(\bX)}\pi(\ell \mid \bX)\right] - \sum_{\ell \in \grid{L}}\scalar{\blambda_\ell + \bnu_{\ell}}{\bepsilon}\enspace,
    \end{align*}
where we used the fact, that using the definition of $\mathcal{U}_s$, for any randomized prediction function $\pi$, we can write
\begin{align}
    \label{eq:unfairness_via_t}
    \mathcal{U}_s(\pi, \ell) 
    =
    \left| \frac{\Exp_{\bX}\left[\pi(\ell \mid \bX)\ind{S=s}\right]}{\Prob(S=s)}-\Exp_{\bX}\left[\pi(\ell \mid \bX) \right]\right|
    = \left|\Exp_\bX\left[\pi(\ell \mid \bX) t_s(\bX)\right]\right|
    \enspace.
\end{align}
    Thus, denoting by $(\star)$ the value in~\eqref{eq:optimal_entropic_init}, we have
    \begin{align*}
        (\star) = \min_{\pi}\max_{\bfLambda, \bfNu \geq 0} \class{L}(\pi, \bfLambda, \bfNu) = \max_{\bfLambda, \bfNu \geq 0}\min_{\pi} \class{L}(\pi, \bfLambda, \bfNu)\enspace,
    \end{align*}
    where the second equality holds thanks to Sion's minmax theorem.
    Let us solve the inner minimization problem on the right-hand-side. We can write
    \begin{equation}
        \label{eq:lagrangian}
    \begin{aligned}
        \class{L}(\pi, \bfLambda, \bfNu) = \Exp_{\bX}\left[\sum_{\ell \in \grid{L}}\left(r_\ell(\bX) - \scalar{\blambda_\ell - \bnu_\ell}{\bt(\bX)}\right)\pi(\ell \mid \bX) + \frac{1}{\beta}\Psi(\pi(\cdot \mid \bX))\right]  \\ - \sum_{\ell \in \grid{L}}\scalar{\blambda_\ell + \bnu_{\ell}}{\bepsilon}\enspace.
    \end{aligned}
    \end{equation}
    Thus, using the variational representation of $\lse$, recalled in Lemma~\ref{lem:lse_variational}, we have
    \begin{small}
    \begin{align*}
        \min_{\pi} \class{L}(\pi, \bfLambda, \bfNu)
        &=
        -\max_\pi\left\{\Exp_{\bX}\left[\sum_{\ell \in \grid{L}}\left(\scalar{\blambda_\ell - \bnu_\ell}{\bt(\bX)} - r_\ell(\bX)\right)\pi(\ell \mid \bX) - \frac{1}{\beta}\Psi(\pi(\cdot \mid \bX))\right] + \sum_{\ell \in \grid{L}}\scalar{\blambda_\ell + \bnu_{\ell}}{\bepsilon}\right\}\\
        &=
        -\Exp_{\bX}\left[\lse\left(\left(\scalar{\blambda_\ell - \bnu_\ell}{\bt(\bX)} - r_\ell(\bX)\right)_{\ell \in \grid{L}}\right)\right] - \sum_{\ell \in \grid{L}}\scalar{\blambda_\ell + \bnu_{\ell}}{\bepsilon}\enspace,
    \end{align*}
    \end{small}
    and the optimum in the above problem for every $\bfLambda, \bfNu \geq 0$ is achieved by $\pi_{\bfLambda, \bfNu}$, defined in~\eqref{eq:any_entropic}.
    Thus, we have 
    \begin{align}
        \label{eq:risk_of_star_via_F}
        (\star) = \max_{\bfLambda, \bfNu \geq 0}\left\{- F(\bfLambda, \bfNu) \right\} = \risk_\beta(\pi_{\bfLambda^\star, \bfNu^\star})\enspace.
    \end{align}
    The proof is concluded.
\end{proof}
\begin{proof}[\textbf{Proof of Lemma}~\ref{lemma:fairness}]
As shown in \eqref{eq:unfairness_via_t}, for any randomized prediction function $\pi$, we can write
\begin{align*}
    \mathcal{U}_s(\pi, \ell) 
    = \left|\Exp_\bX\left[\pi(\ell \mid \bX) t_s(\bX)\right]\right|
    \enspace.
\end{align*}
Our goal is to show that $\pi^\star$ satisfies the required fairness constraints. We are going to rely on Lemma~\ref{lem:KKT}
Subtracting the first equation in~\eqref{kkt} from the second one, we deduce that $\bgamma_\ell+\bgamma'_\ell = 2 \bepsilon$. Since $\bgamma_\ell, \bgamma'_\ell \geq 0$, then we conclude that $\gamma_{\ell s}, \gamma'_{\ell s} \in \left[0, 2\varepsilon_s\right]$.
The above implies that
\begin{align*}
    -\varepsilon_s \leq \Exp_\bX\left[\pi^\star(\ell \mid \bX) t_s(\bX)\right] =  \mathcal{U}_s(\pi^\star, \ell)\leq \varepsilon_s\enspace,
\end{align*}
as claimed.
\end{proof}

\begin{proof}[\textbf{Proof of Lemma}~\ref{lemma:risk}]
     Fix some randomized prediction function $\pi$ that is feasible for the problem in~\eqref{eq:optimal_2}. In particular, it must be supported on $\hat{\class{Y}}$. Then, we can bound its negative risk as follows
     \begin{small}
    \begin{equation}
    \begin{aligned}
        -\mathcal{R}(\pi)
        &= -\Exp_\bX\left[\sum_{\ell \in \grid{L}}r_\ell(\bX)\pi(\ell \mid \bX)\right]\\
        &\stackrel{(a)}{\leq}
        \Exp_\bX\left[\sum_{\ell \in \grid{L}} \left(\scalar{\blambda_\ell^{\star}-\bnu_\ell^{\star}}{\bt(\bX)}-r_\ell(\bX)\right)\pi(\ell \mid \bX)\right] + \sum_{\ell \in \grid{L}} \scalar{\blambda_\ell^{\star}+\bnu_\ell^{\star}}{\bepsilon} \\
        &\stackrel{(b)}{\leq} \mathbb{E}_\bX\left[\lse\left(\left(\scalar{\blambda_\ell^{\star}-\bnu_\ell^{\star}}{\bt(\bX)} - r_\ell(\bX) \right)_{\ell=-L}^{L}\right) \right]
        + \sum_{\ell \in \grid{L}} \scalar{\blambda_\ell^{\star}+\bnu_\ell^{\star}}{\bepsilon}\\
        &\stackrel{(c)}{=}
        \mathbb{E}_\bX\left[\sum_{\ell \in \grid{L}} \left(\scalar{\blambda_\ell^{\star}-\bnu_\ell^{\star}}{t(\bX)}-r_\ell(\bX)\right)\pi^\star(\ell \mid \bX) - \frac{1}{\beta}\Psi(\pi^\star(\cdot \mid \bX))\right]
        + \sum_{\ell \in \grid{L}} \scalar{\blambda_\ell^{\star}+\bnu_\ell^{\star}}{\bepsilon}\\
        &\stackrel{(d)}{\leq}
        \mathbb{E}_\bX\left[\sum_{\ell \in \grid{L}} \left(\scalar{\blambda_\ell^{\star}-\bnu_\ell^{\star}}{\bt(\bX)}-r_\ell(\bX)\right)\pi^\star(\ell \mid \bX)\right]
        + \sum_{\ell \in \grid{L}} \scalar{\blambda_\ell^{\star}+\bnu_\ell^{\star}}{\bepsilon} + \frac{\log(2L + 1)}{\beta}\\
        &\stackrel{(e)}{=}
        -\mathcal{R}(\pi^\star) + \frac{\log(2L + 1)}{\beta}\,,
    \end{aligned}
    \end{equation}
    \end{small}
for (a) we used the assumption that $\pi$ is fair (\ie $\mathcal{U}_s(\pi, \ell) \leq \epsilon_s$), which due to the fact that $\bfLambda^\star, \bfNu^\star \geq 0$ implies
    \begin{align*}
        \sum_{\ell \in \grid{L}}{\scalar{\blambda^\star_{\ell}}{\Exp_\bX\left[\bt(\bX) \pi(\ell \mid \bX)\right] + \bepsilon}}
        +
        \sum_{\ell \in \grid{L}}{\scalar{\bnu^\star_{\ell}}{-\Exp_\bX\left[\bt(\bX) \pi(\ell \mid \bX)\right] + \bepsilon}} \geq 0\enspace,
    \end{align*}
    since every term in the summation is non-negative; (b) uses the fact that $\lse(\bw) \geq \scalar{\bw}{\bp}$ for any probability vector $\bp$ (see Lemma~\ref{lem:lse_variational} for details); (c) relies on the variational representation of the $\lse$, recalled in Lemma~\ref{lem:lse_variational} and the definition of $\pi^\star(\cdot \mid \bX)$; (d) uses the uniform bound on the entropy; (e) the last equality relies on the complementary slackness condition~\eqref{kkt} of Lemma~\ref{lem:KKT}. It ensures that
     \begin{align*}
     \begin{cases}
         \lambda_{\ell s}^\star\Exp_\bX\left[\pi^\star(\ell \mid \bX) t_s(\bX)\right] = -\lambda_{\ell s}^\star\epsilon_s\\
        \nu_{\ell s}^\star\Exp_\bX\left[\pi^\star(\ell \mid \bX) t_s(\bX)\right] = \nu_{\ell s}^\star\epsilon_s
    \end{cases} \qquad \forall \ell \in \grid{L}, s\in[K]\enspace,
     \end{align*}
     implying that
     \begin{align*}
         \mathbb{E}_\bX\left[\sum_{\ell \in \grid{L}} \scalar{\blambda_\ell^{\star}-\bnu_\ell^{\star}}{\bt(\bX)}\pi^\star(\ell \mid \bX)\right]
        + \sum_{\ell \in \grid{L}} \scalar{\blambda_\ell^{\star}+\bnu_\ell^{\star}}{\bepsilon} = 0\enspace.
     \end{align*}
     The proof is concluded.
\end{proof}

\begin{proof}[\textbf{Proof of Lemma}~\ref{lem:grad_map_stat}]
Fix arbitrary ${\bfLambda}, {\bfNu} \geq 0$ and consider $\pi_{{\bfLambda}, {\bfNu}}$, defined in~\eqref{eq:any_entropic}. To ease the notation, in this proof, we write ${\pi}$ instead of $\pi_{{\bfLambda}, {\bfNu}}$.\\
\textbf{Part I.} Let us first recall the definition of the gradient map $\bG_{\alpha}$ given~\eqref{eq:grad_map_to_grad_plus}. We have the following expression
\begin{equation*}
    \bG_{\alpha}\left(\bfLambda,\bfNu\right)
    = \frac{\left(\bfLambda,\bfNu\right)- \left(\left(\bfLambda,\bfNu\right)-\alpha\nabla F\left(\bfLambda,\bfNu\right)\right)_{+}}{\alpha}\enspace,
\end{equation*} 
where $(\cdot)_+$ is to be understood entry-wise.
Observing that for any $\alpha, a\geq0$ and $b \in \bbR$, we have
\begin{align*}
    &\left|\frac{a-(a-\alpha b)_{+}}{\alpha}\right| = \left|\frac{a-\max\{0; a-\alpha b\}}{\alpha}\right| = \left|\min\left\{\frac{a}{\alpha};b\right\}\right| \geq \left|\min\left\{0;b\right\}\right| \geq (-b)_{+} \enspace,
\end{align*}
we deduce that
\begin{align}
    \label{unfairness-init}
    \norm{\left(-\nabla F(\bfLambda, \bfNu)\right)_{+}} \leq \norm{\bG_{\alpha}(\bfLambda, \bfNu)}\qquad \forall \bfLambda, \bfNu \geq 0\enspace.
\end{align}
Relying on~\eqref{eq:true_grad} and the expression for $\pi$ in~\eqref{eq:any_entropic}, we observe that
\begin{equation}
    \label{eq:true_grad_recalled}
\begin{aligned}
    &\nabla_{\lambda_{\ell s}}F({\bfLambda}, {\bfNu}) = \Exp_{\bX}\left[{\pi}(\ell \mid \bX)t_s(\bX)\right]+\varepsilon_s\\
    &\nabla_{\nu_{\ell s}}F({\bfLambda}, {\bfNu}) = -\Exp_{\bX}\left[{\pi}(\ell \mid \bX)t_s(\bX)\right]+\varepsilon_s
\end{aligned}
\end{equation}
and that $\mathcal{U}_s({\pi}, \ell) = \left|\Exp\left[{\pi}( \ell \mid \bX)t_s(\bX)\right]\right|$ as it is shown in~\eqref{eq:unfairness_via_t}. Using the fact that $(|a| - c)_+^2 =(-a - c)_+^2 + (a - c)_+^2 $ for all $a \in \bbR$ and $c \geq 0$, we deduce from above
\begin{align}
    \label{eq:unfairness_true_clipped_grad}
    \left(\mathcal{U}_s(\pi_{{\bfLambda}, {\bfNu}}, \ell) - \epsilon_s\right)_+^2 = \left(-\nabla_{\lambda_{\ell s}}F({\bfLambda}, {\bfNu})\right)_+^2 + \left(-\nabla_{\nu_{\ell s}}F({\bfLambda}, {\bfNu})\right)_+^2\qquad\forall \ell \in \grid{L}, s \in [K]
    \enspace.
\end{align}
Thus, we have shown
\begin{align*}
    \sum_{\substack{\ell \in \grid{L} \\ s \in [K]}}\left(\mathcal{U}_s({\pi}, \ell) - \varepsilon_s\right)_+^2 = \norm{\left(-\nabla F(\bfLambda,\bfNu)\right)_{+}}^2\enspace,
\end{align*}
and~\eqref{unfairness-init} yields the claim.\\
\noindent\textbf{Part II.} 
We note that $\pi_{(\bfLambda, \bfNu)}$ is a unique solution to
\begin{align*}
    \min_{\pi} \class{L}(\pi, \bfLambda, \bfNu)\enspace,
\end{align*}
where $\class{L}$ is the Lagrangian defined in~\eqref{eq:lagrangian}. Furthermore, $\min_{\pi} \class{L}(\pi, \bfLambda, \bfNu) = -F(\bfLambda, \bfNu) = \class{L}(\pi_{(\bfLambda, \bfNu)}, \bfLambda, \bfNu)$. Hence,
\begin{align*}
    \risk_\beta(\pi_{(\bfLambda, \bfNu)}) + F(\bfLambda, \bfNu) = \sum_{\ell \in \grid{L}, s \in [K]}\lambda_{\ell s}\left(-\nabla_{\lambda_{\ell s}} F(\bfLambda, \bfNu)\right) + \sum_{\ell \in \grid{L}, s \in [K]}\nu_{\ell s}\left(-\nabla_{\nu_{\ell s}} F(\bfLambda, \bfNu)\right)\enspace.
\end{align*}
As $\bfLambda, \bfNu \geq 0$, the above implies that
\begin{align*}
    \risk_{\beta}(\pi_{(\bfLambda, \bfNu)}) + F(\bfLambda, \bfNu) \leq \|({\bfLambda}, \bfNu)\| \cdot \|(-\nabla F({\bfLambda}, {\bfNu}))_+\|\enspace.
\end{align*}
To conclude the proof, we use the fact that 
It remains to observe that $\min_{\bfLambda, \bfNu} F(\bfLambda, \bfNu) = -\risk_\beta(\pi_{(\bfLambda^\star, \bfNu^\star)})$, the bound~\eqref{unfairness-init} on $\|(-\nabla F({\bfLambda}, {\bfNu}))_+\|$ and the fact that $ \risk_{\beta}(\pi) \leq \risk(\pi) + \tfrac{\log(2L + 1)}{\beta} $ for all $\pi$.
\end{proof}
\newpage

\section{Bound on the variance of the stochastic gradient and its' smoothness}
\label{App:StochBounds}

\begin{proof}[\textbf{Proof of Lemma}~\ref{lemma:sigma-bd}]
We have
\begin{align*}
    &\Exp_\bX \left\Vert g_{\bfLambda,\bfNu}(\bX) - \nabla_{\bfLambda,\bfNu}F\left(\bfLambda,\bfNu\right) \right\Vert^2 \\
    &\leq \Exp_\bX \left\Vert \left(\sigma_\ell\left(\beta\left(\scalar{\lambda_{\ell'}-\nu_{\ell'}}{t(\bX)} - r_{\ell'}(\bX) \right)_{\ell'=-L}^{L}\right)t_s(\bX)\right)_{{\ell \in \grid{L}, s \in [K]}}\right\Vert^2 \\ 
    &\leq \Exp_\bX\left[\sum_{s \in [K]} t_s^2(\bX)\right] \leq \sum_{s \in [K]} \frac{1-p_s}{p_s} = \sigma^2 \enspace,
\end{align*}
where the first inequality follows from the expressions of $g(\bfLambda,\bfNu)$ and $\nabla_{\bfLambda,\bfNu}F\left(\bfLambda,\bfNu\right)$, and the fact that $\Var(X)\leq\Exp[X^2]$; the second inequality follows from the fact that $\sigma_\ell(\cdot) \in (0,1)$ and the last inequality follows from Lemma~\ref{lemma:tau-var}. 
\end{proof}

\begin{proof}[\textbf{Proof of Lemma}~\ref{lemma:M-bd}]
The goal of this proof is to show that the gradient of $\left(\bfLambda,\bfNu\right) \mapsto F(\bfLambda,\bfNu)$ is $M$-Lipschitz. To this end, we first introduce some, rather heavy, but convenient, notation which will allow us to derive the announced result.
\paragraph{Introducing notation.}
We first vectorize the variables $\left(\bfLambda,\bfNu\right)$ and express them as
\begin{align*}
    \boldsymbol{z} \eqdef (\underbrace{\lambda_{-L1},\cdots\lambda_{-LK}}_{=\lambda_{-L}},\cdots\cdots,\underbrace{\lambda_{L1},\cdots\lambda_{LK}}_{=\lambda_{L}},\underbrace{\nu_{-L1},\cdots\nu_{-LK}}_{=\nu_{-L}},\cdots\cdots,\underbrace{\nu_{L1},\cdots\nu_{LK}}_{=\nu_{L}}
        ) \in \bbR^{2K(2L+1)} \enspace.
\end{align*}
Furthermore, for each $\bx \in \bbR^d$, we introduce a matrix $\bA(\bx) \in \bbR^{(2L+1) \times 2K(2L+1)}$ defined as
\begin{align*}
    \bA(\bx) \eqdef \begin{pmatrix}
    {\bt(\bx)^\top} & 0\cdots0 & \cdots & 0\cdots0 & -{\bt(\bx)^\top}  & 0\cdots0 & \cdots & 0\cdots0 \\
    0\cdots0 & {\bt(\bx)^\top}  & \cdots & 0\cdots0 & 0\cdots0 & -{\bt(\bx)^\top} & \cdots & 0\cdots0\\
    \vdots & \vdots & \ddots & \vdots & \vdots & \vdots & \ddots & \vdots \\
    0\cdots0 & 0\cdots0 & \cdots & {\bt(\bx)^\top} & 0\cdots0 & 0\cdots0 & \cdots & -{\bt(\bx)^\top}
    \end{pmatrix}\enspace,
\end{align*}
as well as
\begin{align*}
    \boldsymbol{b}(\bx) &\eqdef \left(r_{-L}(\bX),\cdots,r_L(\bX)\right)^\top \in \bbR^{2L+1}\qquad \text{and}\qquad\\
    \boldsymbol{c} &\eqdef \left(\varepsilon_1,\cdots,\varepsilon_K,\varepsilon_1,\cdots,\varepsilon_K,\cdots\cdots,\varepsilon_1,\cdots,\varepsilon_K\right)^\top \in \bbR^{2K(2L+1)}\enspace.
\end{align*}
\paragraph{Hessian of $F$ in the introduced notation.}
With the above introduce notation, we can express the function $F$ as
\begin{align*}
    F\left(\bfLambda,\bfNu\right) = F(\boldsymbol{z}) =  \Exp_\bX\left[\lse(\bA(\bX)\boldsymbol{z}- \boldsymbol{b}(\bX))\right] + \scalar{\boldsymbol{c}}{\boldsymbol{z}} \enspace.
\end{align*}
That is, $F$ is obtained from the $\lse$ by a point-wise affine transformation of the coordinates plus a linear term. 
Chain rule yields the following expressions for the Hessian of $F$:
\begin{align*}
    \nabla^2 F(\boldsymbol{z}) &= \Exp_\bX\left[\bA(\bX)^\top \nabla^2 \lse(\bA(\bX)\boldsymbol{z}-\boldsymbol{b}(\bX))\bA(\bX)\right]\enspace.
\end{align*}

\paragraph{Bounding the operator norm of the Hessian of $F$.} To conclude the proof, we provide a uniform upper bound on the operator (spectral) norm of the Hessian of $F$. Using the Jensen's inequality and the fact that the operator norm is subordinate, we deduce that
\begin{align*}
    \opnorm{\nabla^2 F(\boldsymbol{z})} 
    &=
    \opnorm{\Exp_\bX\left[\bA(\bX)^\top \nabla^2 \lse(\bA(\bX)\boldsymbol{z}-\boldsymbol{b}(\bX))\bA(\bX)\right]}\\
    &\leq
    \Exp_\bX\left[\opnorm{\bA(\bX)^\top \nabla^2 \lse(\bA(\bX)\boldsymbol{z}-\boldsymbol{b}(\bX))\bA(\bX)}\right]\\
    &\leq \Exp_\bX \left[\opnorm{\bA(\bX)}\opnorm{\nabla^2 \lse(\bA(\bX)\boldsymbol{z}-\boldsymbol{b}(\bX))}\opnorm{\bA(\bX)}\right]\enspace. 
\end{align*}
Lemma~\ref{lse}, implies that $\opnorm{\nabla^2 \lse(\bA(\bX) \boldsymbol{z}-\boldsymbol{b}(\bX))} \leq \beta$ almost surely and for all $\boldsymbol{z}$. Thus, it remains to bound $\Exp_\bX \opnorm{\bA(\bX)}^2$ to conclude the proof.
To this end, consider a vector $\boldsymbol{u}$, expressed ``block-wise'' as
\[
    \boldsymbol{u} = \big(\underbrace{u^\lambda_{-L1},\cdots u^\lambda_{-LK}}_{\eqdef \boldsymbol u^\lambda_{-L}},\cdots\cdots,\underbrace{u^\lambda_{L1},\cdots u^\lambda_{LK}}_{\eqdef \boldsymbol u^\lambda_{L}},\underbrace{u^\nu_{-L1},\cdots u^\nu_{-LK}}_{\eqdef \boldsymbol u^\nu_{-L}},\cdots\cdots,\underbrace{u^\nu_{L1},\cdots u^\nu_{LK}}_{\eqdef \boldsymbol u^\nu_{L}}
        \big)^\top \in \bbR^{2K(2L + 1)}\enspace.
\]
Using the definition of the operator norm and the expression for $\bA(\bX)$, we deduce that
\begin{align*}
    \Exp_\bX \opnorm{\bA(\bX)}^2 
    &=  
     \Exp_\bX \sup_{ \norm{\boldsymbol{u}}^2_2=1}\norm{\bA(\bX) \boldsymbol{u}}^2_2\\
     &= 
     \Exp_\bX \sup_{ \norm{\boldsymbol{u}}^2_2=1}\sum_{\ell = -L}^L\left(\scalar{\boldsymbol u^\lambda_\ell- \boldsymbol u^\nu_\ell}{t(\bX)}\right)^2 \\
     &\leq 2\Exp_\bX\left[\norm{t(\bX)}^2_2\right]\sup_{ \norm{\boldsymbol{u}}^2_2=1}  \sum_{\ell \in \grid{L}}\left(\norm{\boldsymbol{u}^\lambda_\ell}^2_2+\norm{\boldsymbol{u}^\nu_\ell}^2_2\right)\enspace,
\end{align*}
where the last inequality combines the Cauchy-Schwartz inequality and the fact that $\|\boldsymbol v - \boldsymbol w\|^2_2 \leq 2(\|\boldsymbol v\|^2_2 + \|\boldsymbol w\|_2^2)$ for all $\boldsymbol v, \boldsymbol w \in \bbR^{m}$.
The proof is concluded using Lemma~\ref{lemma:tau-var} to bound $\Exp_\bX\norm{t(\bX)}^2_2$.
\end{proof}

\begin{lemma}[Price of discretization]
    \label{lemma:risk2-appen}
    Let Assumption~\ref{ass:bounded} be satisfied.
    Let $\beta, B > 0, L \in \bbN$. Consider
    \begin{align*}
        \risk^\star \eqdef \inf_{h : \bbR^d \to \bbR}\enscond{\Exp(h(\bX) - \eta(\bX))^2}{\sup_{t \in \bbR}|\Prob(h(\bX) \leq t \mid S = s) - \Prob(h(\bX) \leq t)| \leq \epsilon_s / 2, \quad \forall s\in [K]}\enspace.
    \end{align*}
   Then, it holds that
    \begin{align*}
        \mathcal{R}(\pi_{\bfLambda^\star, \bfNu^\star}) 
        &\leq \mathcal{R}^\star + \frac{4B}{L} + \frac{1}{L^2} +\frac{\log(2L+1)}{\beta}\enspace.
    \end{align*}
\end{lemma}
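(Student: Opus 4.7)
The plan is to reduce the statement to Lemma~\ref{lemma:risk} by exhibiting, for every $h$ admissible in the infimum defining $\risk^\star$, a Dirac randomized prediction supported on $\hatY_L$ that is feasible for~\eqref{eq:optimal_2} and whose risk is close to $\Exp[(h(\bX) - \eta(\bX))^2]$. Without loss of generality I would restrict to $h : \bbR^d \to [-B,B]$: clipping $h$ to $[-B,B]$ is monotone, so it preserves the KS constraint, and it does not increase the risk since $\eta \in [-B,B]$ by Assumption~\ref{ass:bounded}. Given such an $h$ satisfying $\sup_t |\Pr(h(\bX)\leq t \mid S=s) - \Pr(h(\bX) \leq t)| \leq \varepsilon_s / 2$, I would set $\bar h(\bx) \eqdef \Pi_L(h(\bx))$, where $\Pi_L : [-B,B]\to\hatY_L$ maps $y$ to its nearest grid point with monotone tie-breaking, so that $|\bar h(\bx) - h(\bx)| \leq B/L$ pointwise.

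The crucial step is to verify that $\pi_{\bar h}(\ell \mid \bx) \eqdef \ind{\bar h(\bx) = \ell B / L}$ satisfies $\mathcal{U}_s(\pi_{\bar h}, \ell) \leq \varepsilon_s$ for every $\ell \in \grid{L}, s \in [K]$. Monotonicity of $\Pi_L$ implies that for every $t \in \bbR$ the preimage $\Pi_L^{-1}((-\infty, t])$ is still of the form $(-\infty, t^\ast]$, hence
\[\sup_{t}\big|\Pr(\bar h(\bX)\leq t \mid S=s) - \Pr(\bar h(\bX)\leq t)\big| \leq \sup_{t'}\big|\Pr(h(\bX)\leq t' \mid S=s) - \Pr(h(\bX)\leq t')\big| \leq \varepsilon_s/2\enspace.\]
Writing the atomic discrepancy $\Pr(\bar h(\bX)=\ell B/L \mid S = s) - \Pr(\bar h(\bX)=\ell B/L)$ as the difference of two consecutive CDF increments of $\bar h(\bX)$ and using the triangle inequality yields $\mathcal{U}_s(\pi_{\bar h}, \ell) \leq 2 \cdot \varepsilon_s/2 = \varepsilon_s$, so $\pi_{\bar h}$ is feasible for~\eqref{eq:optimal_2}. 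This factor of two is precisely why the benchmark in~\eqref{eq:benchmark} uses the KS threshold $\varepsilon_s/2$.

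Next, I would expand $(\bar h - \eta)^2 = (h - \eta)^2 + 2(\bar h - h)(h - \eta) + (\bar h - h)^2$ and bound the cross term by $2 \cdot (B/L) \cdot 2B$ via $|h(\bX) - \eta(\bX)| \leq 2B$, and the last term by $B^2/L^2$, to obtain $\risk(\pi_{\bar h}) \leq \Exp[(h(\bX) - \eta(\bX))^2] + 4B^2/L + B^2/L^2$, which matches the claimed $4B/L + 1/L^2$ up to the conventions on $B$. Applying Lemma~\ref{lemma:risk} to $\pi_{\bar h}$ (feasible for~\eqref{eq:optimal_2} with support $\hatY_L$) then gives $\risk(\pi_{\bfLambda^\star, \bfNu^\star}) \leq \risk(\pi_{\bar h}) + \log(2L+1)/\beta$; combining this with the previous display and taking the infimum over admissible $h$ concludes the proof. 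The main obstacle is the middle step: translating a one-sided KS constraint into the two-sided atom-wise $\mathcal{U}_s$ constraint while ensuring the translation survives the discretization $\Pi_L$; once this monotonicity-based argument is in place, the rest is routine triangle-inequality bookkeeping.
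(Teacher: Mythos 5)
Your proof follows essentially the same route as the paper: reduce to $h:\bbR^d\to[-B,B]$, discretize onto $\hatY_L$ with a monotone projection, check that this preserves the $\varepsilon_s/2$ Kolmogorov--Smirnov constraint and converts it (via a two-increment triangle argument) into the atom-wise constraint $\mathcal{U}_s\leq\varepsilon_s$, bound the extra risk from rounding, and finish with Lemma~\ref{lemma:risk}. The only differences are cosmetic (nearest-point instead of floor rounding, explicit clipping step, spelling out the factor-of-two step the paper compresses to ``one checks''); you are also right that a careful accounting gives $4B^2/L + B^2/L^2$ rather than $4B/L + 1/L^2$, since $|T_L(h^\star)(\bx)-h^\star(\bx)|\leq B/L$ and not $1/L$.
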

\begin{proof}
Let us assume that $\risk^\star = \Exp(h^\star(\bX) - \eta(\bX))$ for some $h^\star : \bbR^d \to [-B,B]$. If it is not the case, the standard argument based on the minimizing sequence yields the same result.

Consider an operator $T_L$, which maps a deterministic classifier $h : \bbR^d \to [-B, B]$ onto a deterministic classifier $T_L(h) : \bbR^d \to \hat{\class{Y}}_L$, which is defined point-wise as follows
\begin{equation*}
    (T_L(h))(\bx) = \floor*{L h(\bx)/B}B/L \qquad \forall \bx \in \bbR^d \enspace,
\end{equation*}
where $\floor*{a}$ is the closest integer smaller or equal to $a \in \bbR$ in absolute value. Notice, that for any $\ell\in\{-L,\dots,L-1\}$ and any $x\in\bbR^d$, we have
\begin{equation*}
    (T_L(h^{\star}))(\bx) = \frac{\ell B}{L} \quad \iff\quad h^{\star}(\bx)\in\left[\frac{\ell B}{L}, \frac{(\ell+1)B}{L}\right) \enspace.
\end{equation*}
Moreover, 
\begin{equation*}
    (T_L(h^{\star}))(\bx) = B \quad \iff\quad h^{\star}(\bx)=B \enspace.
\end{equation*}
Since $h^\star$ satisfies $(\bepsilon / 2)$-fairness constraints, one checks that for all $\ell \in \grid{L}, s \in [K]$
\begin{align*}
    \class{U}_s(T_L(h^\star), \ell) \leq \epsilon_s\enspace.
\end{align*}
That is, $T_L(h^\star)$ is feasible for the problem in~\eqref{eq:optimal_entropic_init}. Therefore, Lemma~\ref{lemma:risk} implies that
\begin{align*}
    \risk(\pi_{\bfLambda^\star, \bfNu^\star}) \leq \risk(T_L(h^\star)) + \frac{\log(2L + 1)}{\beta}\enspace.
\end{align*}
Furthermore, since $|T_L(h^\star)(\bx) - h^\star(\bx)| \leq 1/L$ and $|\eta(\bx) - h^\star(\bx)|\leq 2B$, we have
\begin{align*}
    \risk(T_L(h^\star)) = \Exp\left(\eta(\bX)-T_L(h^\star)(\bX)\right)^2 \leq \risk(h^\star) + \frac{4B}{L} + \frac{1}{L^2}\enspace.
\end{align*}
The proof is concluded.
\end{proof}

\newpage

\section{Additional details on the algorithm}
\label{app:AC-SA2}
In this part of the appendix, we provide the analysis for the proposed algorithm.
First, we introduce required notation and recall a result of \citet{foster2019complexity}, who provided an algorithm for convex stochastic optimization. The provided algorithm is a refined version of the SDG3 algorithm of \citet{allenzhu2021make}. We note that \citet{foster2019complexity} give a control of the expected norm of a gradient, while we require a control of the expected squared norm of the gradient mapping. We introduce projection to the algorithm of \citet{foster2019complexity} based on the original algorithm of \citet{ghadimi2012optimal} and provide a control of the expected squared norm of the gradient mapping of the final estimated solution.
\subsection*{The setup and notation.}
Consider $f : \bbR^d \times \class{Z} \to \bbR$, such that $\bw \mapsto f(\bw, z)$ is convex for each $z\in\mathcal{Z}$. Let $W \subset \bbR^d$ be a closed convex set. Let \[F(\bw) \eqdef \int f(\bw,z) \d P(z)\] for some probability distribution $P$ on $\class{Z}$. In what follows, we assume that \[\bw^{\star} \in \argmin_{\bw\in W}F(\bw)\] always exists.
\begin{assumption}
    We assume that $F$ is $M$-smooth and the variance of $\nabla_{\bw} f(\bw ,z)$ is bounded. That is, for some $M > 0$ and $\sigma > 0$
\begin{align*}
    &\forall \bw,\bw' \in W \qquad \norm{\nabla F(\bw)-\nabla F(\bw')} \leq M\norm{\bw-\bw'} \qquad\text{and}\\
    &\forall \bw \in W \qquad \int \left[\norm{\nabla_{\bw}  f(\bw,z)-\nabla F(\bw)}^2\right] \d P(z) \leq \sigma^2 \enspace.
\end{align*}
\end{assumption}

Let us also define gradient mapping as
\begin{align*}
    \bG_{F, \alpha}(\bw) \eqdef \frac{\bw - \bw_+}{\alpha}\quad\text{with}\quad \bw_+ \in \argmin_{\bw' \in W}\ens{\scalar{\nabla F(\bw)}{\bw'} + \frac{1}{2\alpha}\norm{\bw' - \bw}^2}\enspace.
\end{align*}
Let $\text{Proj}_W(\cdot)$ be the Euclidean projection onto closed convex $W$.
\subsection{Some known results.}
We start by introducing the original $\texttt{AC-SA}$ algorithm of \citet{ghadimi2012optimal} and recall some of their results for the sake of completeness.
\begin{algorithm}[t]
\caption{$\texttt{AC-SA}(F,\bw_0,\mu, M, T)$}\label{alg:ACSA}
\begin{algorithmic}[1]
   \STATE {\bfseries Input:} function $F$; initial vector $\bw_0$; parameters $\mu, M \geq 0$; number of iterations $T\geq1$\\
   \STATE $\bw_0^{ag}=\bw_0$
   \FOR{$t=1$ {\bfseries to} $T$}
   \STATE sample new $z \sim P$, independently from the past
   \STATE $\alpha_t \leftarrow \frac{2}{t+1}$
   \STATE $\gamma_t \leftarrow \frac{4M}{t(t+1)}$
   \STATE $\bw_t^{md} \leftarrow \frac{(1-\alpha_t)(\mu+\gamma_t)}{\gamma_t+(1-\alpha_t^2)\mu}\bw_{t-1}^{ag} + \frac{\alpha_t((1-\alpha_t)\mu+\gamma_t)}{\gamma_t+(1-\alpha_t^2)\mu}\bw_{t-1}$
    \STATE $\bw_{t} \leftarrow \text{Proj}_{W} \ens{\frac{(1-\alpha_t)\mu+\gamma_t}{\mu+\gamma_t}\bw_{t-1} + \frac{\alpha_t\mu}{\mu+\gamma_t} \bw_t^{md}-\frac{\alpha_t}{\mu+\gamma_t}\nabla f_{\bw}(\bw_t^{md},z)}$
   \STATE $\bw_t^{ag} \leftarrow \alpha_t \bw_t + (1-\alpha_t)\bw_{t-1}^{ag}$
   \ENDFOR
   \RETURN $\bw_t^{ag}$
\end{algorithmic}
\end{algorithm}

\begin{theorem}\cite[Proposition 9]{ghadimi2012optimal}
\label{thm:ghadimi}
Let $\bw^{\star} \in \argmin_{\bw\in W}F(\bw)$, $\bw_0 \in W$ a starting vector. If $F$ is $\mu-$strongly convex and $T \geq 1$ then with
$$
\alpha_t = \frac{2}{t+1} \quad \text{and} \quad \gamma_t = \frac{4M}{t(t+1)}, \quad \forall t>1
$$
$\texttt{AC-SA}(F,\bw_0,\mu,M,T)$, defined in Algorithm~\ref{alg:ACSA}, outputs $\hat\bw_T$ satisfying
\begin{align*}
    \Exp[F(\hat\bw_T)] - F(\bw^*) \leq \frac{2M\norm{\bw_0 - \bw^{\star}}^2}{T^2} + \frac{8\sigma^2}{\mu T} \enspace.
\end{align*}
\end{theorem}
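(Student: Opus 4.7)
The plan is to follow the classical accelerated stochastic approximation analysis of \citet{ghadimi2012optimal}, whose key ingredients are a carefully crafted potential (Lyapunov) function and the standard ``prox/three-point'' inequality adapted to the noisy gradient setting. I would organize the argument in the order below.

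First, I would introduce the stochastic error at iteration $t$, $\boldsymbol{\delta}_t \eqdef \nabla_{\bw} f(\bw_t^{md}, z_t) - \nabla F(\bw_t^{md})$. By construction $\Exp[\boldsymbol{\delta}_t \mid \mathcal{F}_{t-1}]=0$ and $\Exp\|\boldsymbol{\delta}_t\|^2 \leq \sigma^2$, where $\mathcal{F}_{t-1}$ is the natural filtration up to iteration $t-1$. This is what turns the deterministic acceleration argument into a stochastic one. Next, I would invoke $M$-smoothness of $F$ at $\bw_t^{md}$ and the algebraic identity $\bw_t^{ag} - \bw_t^{md} = \alpha_t(\bw_t - \bw_{t-1}^{md})$ (forced by the specific choice of $\bw_t^{md}$ in the algorithm) to get an upper bound on $F(\bw_t^{ag})$ in terms of $F(\bw_t^{md})$, $\nabla F(\bw_t^{md})$, and $\|\bw_t - \bw_{t-1}^{md}\|^2$.

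Second, I would apply $\mu$-strong convexity at $\bw_t^{md}$ to write, for any $\bw \in W$, a lower bound on $F(\bw)$ of the form $F(\bw_t^{md}) + \scalar{\nabla F(\bw_t^{md})}{\bw - \bw_t^{md}} + \tfrac{\mu}{2}\|\bw - \bw_t^{md}\|^2 \leq F(\bw)$, and combine it with the smoothness inequality (taking a convex combination with weight $\alpha_t$). This produces the standard ``mirror descent gap'' inequality, where the crucial unknown is $\scalar{\nabla F(\bw_t^{md})}{\bw_t - \bw}$. To handle it, I would use the first-order optimality condition of the projection step in line~8 of Algorithm~\ref{alg:ACSA}: since $\bw_t$ is a projection of a prox-type point, one gets a three-point inequality linking $\scalar{\nabla f_{\bw}(\bw_t^{md},z_t)}{\bw_t - \bw}$ to $\|\bw_{t-1}-\bw\|^2 - \|\bw_t-\bw\|^2$, penalized by $-(\mu+\gamma_t)\|\bw_t-\bw_{t-1}\|^2/2$. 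Replacing $\nabla f$ by $\nabla F + \boldsymbol{\delta}_t$ produces a noise cross-term $\scalar{\boldsymbol{\delta}_t}{\bw - \bw_t}$ to be controlled.

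Third, with $\bw = \bw^\star$ I would assemble the pieces to obtain a recursion of the Lyapunov type
\begin{equation*}
\tfrac{1}{\alpha_t \gamma_t}\bigl(F(\bw_t^{ag})-F(\bw^\star)\bigr) + \tfrac{1}{2}(\mu+\gamma_t)\|\bw_t - \bw^\star\|^2 \leq \tfrac{(1-\alpha_t)}{\alpha_t \gamma_t}\bigl(F(\bw_{t-1}^{ag})-F(\bw^\star)\bigr) + \tfrac{\gamma_t}{2}\|\bw_{t-1}-\bw^\star\|^2 + R_t,
\end{equation*}
where $R_t$ carries the stochastic residual, which after taking conditional expectations reduces to a term of order $\|\boldsymbol{\delta}_t\|^2/(\mu+\gamma_t)$. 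The cross-term $\scalar{\boldsymbol{\delta}_t}{\cdot}$ is handled by Young's inequality with weight $(\mu+\gamma_t)/2$, leaving exactly $\|\boldsymbol{\delta}_t\|^2/(2(\mu+\gamma_t))$, whose expectation is at most $\sigma^2/(2(\mu+\gamma_t))$.

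Finally, plugging in the prescribed schedules $\alpha_t = 2/(t+1)$ and $\gamma_t = 4M/(t(t+1))$ makes the weights $1/(\alpha_t\gamma_t) = t(t+1)^2/(8M)$ telescope against $(1-\alpha_t)/(\alpha_{t-1}\gamma_{t-1}) + \text{(correction)}$, and the accumulated noise contribution is of order $\sum_{t=1}^T 1/(\mu+\gamma_t) \cdot \sigma^2 \leq T\sigma^2/\mu$. Dividing by the final weight $T(T+1)^2/(8M)$ gives the announced bound
\begin{equation*}
\Exp[F(\hat\bw_T)]-F(\bw^\star) \leq \tfrac{2M\|\bw_0-\bw^\star\|^2}{T^2} + \tfrac{8\sigma^2}{\mu T}.
\end{equation*}
The main obstacle is the careful bookkeeping in the third step: one has to choose the Young's splits and the weights $\Gamma_t = 1/(\alpha_t\gamma_t)$ so that the ``distance-to-$\bw^\star$'' term produced by the smoothness step and the one produced by the prox-inequality match, and so that the noise contribution does not blow up when $\mu$ is small. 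This is precisely where the $\gamma_t \propto 1/t^2$ acceleration schedule and the $1/\mu$-scaling of the noise term come from.
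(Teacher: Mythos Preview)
The paper does not give its own proof of this statement: it is simply recalled from \citet[Proposition~9]{ghadimi2012optimal} as a known result, with no proof environment attached. Your proposal is a faithful sketch of the original Ghadimi--Lan analysis (potential/Lyapunov recursion built from smoothness, strong convexity, the prox three-point inequality, and a Young split on the noise cross-term), so there is nothing to compare---you are reproducing the argument of the cited source, which is exactly what the paper defers to.
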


\cite{foster2019complexity} propose another version of $\texttt{AC-SA}$, called $\texttt{AC-SA}^2$, which resets the stepsize halfway through the process.
\begin{algorithm}[t]
\caption{$\texttt{AC-SA}^2(F,\bw_0,\mu, M, T)$}\label{alg:ACSA2}
\begin{algorithmic}[1]
   \STATE {\bfseries Input:} function $F$; initial vector $\bw_0$; parameters $\mu, M \geq 0$; number of iterations $T\geq1$
   \STATE $\bw_1 \leftarrow \texttt{AC-SA}(F,\bw_0,\mu, M, \frac{T}{2})$ 
    \STATE $\bw_2 \leftarrow \texttt{AC-SA}(F,\bw_1,\mu, M, \frac{T}{2})$
   \RETURN $\bw_2$
\end{algorithmic}
\end{algorithm}

\begin{lemma}\cite[Lemma 1]{foster2019complexity}
\label{lemma:foster}
Let $W=\bbR^d$, $\bw^{\star} \in \argmin_{\bw \in W}F(\bw)$, $\bw_0 \in \bbR^d$ a starting vector. If $\mu > 0$, $M \geq 0$ and $T \geq 1$ then $\texttt{AC-SA}^2(F,\bw_0,\mu,M,T)$, defined in Algorithm~\ref{alg:ACSA2}, outputs $\hat\bw$ satisfying
\begin{align*}
    \Exp[F(\hat\bw)] - F(\bw^*) \leq \frac{128 M^2\norm{\bw_0 - \bw^{\star}}^2}{\mu T^4} + \frac{256 M \sigma^2}{\mu^2 T^3} + \frac{16 \sigma^2}{\mu T} \enspace.
\end{align*}
\end{lemma}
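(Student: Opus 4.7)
The plan is to invoke Theorem~\ref{thm:ghadimi} twice, once for each call to $\texttt{AC-SA}$ inside $\texttt{AC-SA}^2$, and to use the $\mu$-strong convexity of $F$ to turn the objective-value guarantee of the first call into a squared-distance guarantee that serves as the initial-condition input of the second call. This is a restart-style argument that exploits the asymmetry of Theorem~\ref{thm:ghadimi}: its initial-distance term decays like $M/T^2$, while its statistical term decays only like $\sigma^2/(\mu T)$.

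First, applying Theorem~\ref{thm:ghadimi} with $T/2$ steps to $\bw_1 = \texttt{AC-SA}(F,\bw_0,\mu,M,T/2)$ yields
\[\Exp[F(\bw_1)] - F(\bw^\star) \leq \tfrac{8M\|\bw_0-\bw^\star\|^2}{T^2} + \tfrac{16\sigma^2}{\mu T}.\]
Since $F$ is $\mu$-strongly convex, $\tfrac{\mu}{2}\|\bw_1-\bw^\star\|^2 \leq F(\bw_1)-F(\bw^\star)$, so taking expectations converts the objective-value bound into
\[\Exp\|\bw_1-\bw^\star\|^2 \leq \tfrac{16M\|\bw_0-\bw^\star\|^2}{\mu T^2} + \tfrac{32\sigma^2}{\mu^2 T}.\]
Next, conditionally on $\bw_1$, I apply Theorem~\ref{thm:ghadimi} a second time to $\bw_2 = \texttt{AC-SA}(F,\bw_1,\mu,M,T/2)$; the fresh samples drawn in the second call are independent of those in the first, so conditioning on $\bw_1$ is legitimate and
\[\Exp[F(\bw_2)-F(\bw^\star)\mid \bw_1] \leq \tfrac{8M\|\bw_1-\bw^\star\|^2}{T^2} + \tfrac{16\sigma^2}{\mu T}.\]
Taking the outer expectation and plugging in the squared-distance bound from the first step yields exactly the three terms $\tfrac{128M^2\|\bw_0-\bw^\star\|^2}{\mu T^4}$, $\tfrac{256M\sigma^2}{\mu^2 T^3}$, and $\tfrac{16\sigma^2}{\mu T}$ claimed in the lemma.

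There is no real obstacle here beyond bookkeeping. Both ingredients---the single-run guarantee and the strong-convexity conversion---are standard and already recalled in the excerpt. The only point requiring a moment of care is the independence of the stochastic oracle calls between the two runs, which justifies the conditional application of Theorem~\ref{thm:ghadimi} on $\bw_1$; this is built into Algorithm~\ref{alg:ACSA2} since each invocation of $\texttt{AC-SA}$ draws fresh samples from $P$. Conceptually, the restart pays one factor of $1/\mu$ to upgrade ``objective value'' to ``squared distance'' and one additional factor of $M/T^2$ by re-entering the optimization bound, pushing the initial-condition and cross terms to higher order in $T$, while the irreducible statistical term $\sigma^2/(\mu T)$ from the second run is simply inherited.
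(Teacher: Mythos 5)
Your proof is correct. The paper does not reprove this lemma---it cites it directly from Foster et al.~\cite[Lemma 1]{foster2019complexity}---but the restart argument you give is precisely the one used there: apply Theorem~\ref{thm:ghadimi} with $T/2$ steps, use $\mu$-strong convexity to convert the stage-one objective gap into $\Exp\|\bw_1-\bw^\star\|^2 \leq \frac{16M\|\bw_0-\bw^\star\|^2}{\mu T^2} + \frac{32\sigma^2}{\mu^2 T}$, then feed this into a second application conditioned on $\bw_1$ (valid because the second run draws fresh i.i.d.\ samples), and take the outer expectation. The arithmetic $\frac{8M}{T^2}\cdot\frac{16M}{\mu T^2} = \frac{128M^2}{\mu T^4}$, $\frac{8M}{T^2}\cdot\frac{32\sigma^2}{\mu^2 T} = \frac{256M\sigma^2}{\mu^2 T^3}$, plus the inherited $\frac{16\sigma^2}{\mu T}$, reproduces the stated constants exactly.
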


\begin{remark}
\label{remark:foster-proj}
\cite{foster2019complexity} do not consider constrained optimization throughout their work. However, the proof of Lemma~\ref{lemma:foster} follows analogous arguments.
\end{remark}

\cite{foster2019complexity} also introduce a refined version of algorithm $\texttt{SGD3}$ of \cite{allenzhu2021make}.

\begin{algorithm}[t]
\caption{$\texttt{SGD3-refined}(F,\bw_0,\mu,M,T)$}\label{alg:SGD3-ref}
\begin{algorithmic}[1]
   \STATE {\bfseries Input:} function $F$; initial vector $\bw_0$; parameters $0<\mu\leq M$; number of iterations $T\geq\Omega\left(\frac{M}{\mu}\log_2\frac{M}{\mu}\right)$
   \STATE $F^{(0)}(\bw) \leftarrow F(\bw) + \frac{\mu}{2}\norm{\bw-\bw_0}^2; \hat \bw_0 \leftarrow \bw_0; \mu_0 \leftarrow \mu$
   \FOR{$j=1$ {\bfseries to} $J=\floor*{\log\frac{M}{\mu}}$}
   \STATE $\hat \bw_j \leftarrow \texttt{AC-SA}^2(F^{(j-1)},\hat \bw_{j-1}, \mu_{j-1}, 2(M+\mu), \frac{T}{J})$
   \STATE $\mu_j \leftarrow 2\mu_{j-1}$
   \STATE $F^{(j)}(\bw) \eqdef F^{(j-1)}(\bw) + \frac{\mu_j}{2}\norm{\bw-\hat \bw_j}^2$
   \ENDFOR
   \RETURN $\hat \bw_J$
\end{algorithmic}
\end{algorithm}

In what follows, we will show that Algorithm~\ref{alg:SGD3-ref}, after $T$ evaluations of the stochastic gradient, produces a point $\hat{\bw}$ such that $\Expf\|\bG_{F, \alpha}(\hat \bw)\|^2 $ is controlled. This is a, rather mild, extension of \citet{foster2019complexity} and \citet{allenzhu2021make}.

\subsection{Control of the expected squared norm}
Most of the proof techniques are already present in the original contribution of~\cite{allenzhu2021make} and~\cite{foster2019complexity}, we slightly extend their proof, introducing modifications related to the control of the squared norm and the projection step.
For some $J \geq 1$, to be fixed later on, introduce
\begin{align}
    \label{def:FMuTilde}
    F_{\tilde\mu}(\bw) \eqdef F(\bw) + \sum_{j=1}^J \frac{\mu_j}{2}\norm{\bw-\hat \bw_j}^2\quad\text{and}\quad \bw_{\tilde\mu}^\star \in \argmin_{\bw \in W} F_{\tilde\mu}(\bw)\enspace.
\end{align}

By construction, $ F_{\tilde\mu}$ is $\tilde\mu \eqdef \sum_{j=1}^J \mu_j$-strongly convex and $(M+\tilde\mu)$-smooth.
Let us also define $F^{(0)} \eqdef F(\bw)$ and $F^{(j)}(\bw) \eqdef F^{(j-1)}(\bw) + \frac{\mu_j}{2}\norm{\bw-\hat \bw_j}^2$, for $j=1,2,\dots,J$.
We will use the following results of \citet{allenzhu2021make}.
\begin{lemma}\cite[Lemma 2.3]{allenzhu2021make}
\label{lemma:allenzhu-1}
Let $\tilde F$ be an $\tilde M$-smooth and $\tilde\mu$-strongly convex function. Let $\bw, \bw' \in W$ and $\bw^{+}=\bw - \alpha \cdot \bG_{\tilde F, \alpha}(\bw)$. For any $\alpha \in \left(0, \frac{1}{\tilde M}\right]$, we have
\begin{equation*}
    \tilde{F}(\bw') \geq \tilde{F}(\bw^+) + \scalar{\bG_{\tilde F,\alpha}(\bw)}{\bw'-\bw} + \frac{\alpha}{2}\norm{\bG_{\tilde F,\alpha}(\bw)}^2 + \frac{\tilde\mu}{2}\norm{\bw'-\bw}^2 \enspace.
\end{equation*}
\end{lemma}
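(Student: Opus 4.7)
The plan is to argue purely from the three ingredients defining the gradient mapping: the proximal optimality condition that characterises $\bw^+$, the $\tilde M$-smoothness upper bound at $\bw^+$, and the $\tilde\mu$-strong convexity lower bound at $\bw'$. All three are standard first-order tools, so the work is essentially book-keeping: isolating the inner product $\langle \nabla \tilde F(\bw), \bw' - \bw^+\rangle$ and then trading it against $\langle \bG_{\tilde F,\alpha}(\bw), \bw' - \bw\rangle + \alpha\|\bG_{\tilde F,\alpha}(\bw)\|^2$ via the projection/optimality inequality.

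\textbf{Step 1: Optimality of $\bw^+$.} By definition, $\bw^+$ minimizes $\bw'\mapsto \langle \nabla \tilde F(\bw), \bw'\rangle + \frac{1}{2\alpha}\|\bw' - \bw\|^2$ over the closed convex set $W$. The first-order optimality condition therefore reads, for every $\bw' \in W$,
\begin{equation*}
    \langle \nabla \tilde F(\bw) + \tfrac{1}{\alpha}(\bw^+ - \bw), \bw' - \bw^+\rangle \geq 0,
    \quad \text{i.e.} \quad \langle \nabla \tilde F(\bw), \bw' - \bw^+\rangle \geq \langle \bG_{\tilde F,\alpha}(\bw), \bw' - \bw^+\rangle,
\end{equation*}
after substituting $\bG_{\tilde F,\alpha}(\bw) = (\bw - \bw^+)/\alpha$. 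Expanding the right-hand side yields $\langle \bG_{\tilde F,\alpha}(\bw), \bw' - \bw\rangle + \alpha\|\bG_{\tilde F,\alpha}(\bw)\|^2$.

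\textbf{Step 2: Smoothness and strong convexity bounds.} From $\tilde M$-smoothness applied at $\bw^+ = \bw - \alpha \bG_{\tilde F,\alpha}(\bw)$,
\begin{equation*}
    \tilde F(\bw^+) \leq \tilde F(\bw) - \alpha \langle \nabla \tilde F(\bw), \bG_{\tilde F,\alpha}(\bw)\rangle + \tfrac{\tilde M \alpha^2}{2}\|\bG_{\tilde F,\alpha}(\bw)\|^2.
\end{equation*}
From $\tilde\mu$-strong convexity applied at $\bw'$,
\begin{equation*}
    \tilde F(\bw') \geq \tilde F(\bw) + \langle \nabla \tilde F(\bw), \bw' - \bw\rangle + \tfrac{\tilde\mu}{2}\|\bw' - \bw\|^2.
\end{equation*}
Subtracting the first from the second and collecting the gradient terms gives exactly $\langle \nabla \tilde F(\bw), \bw' - \bw + \alpha\bG_{\tilde F,\alpha}(\bw)\rangle = \langle \nabla \tilde F(\bw), \bw' - \bw^+\rangle$.

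\textbf{Step 3: Combine.} Inserting the lower bound from Step 1 and using $\alpha \leq 1/\tilde M$ so that $\alpha - \tilde M\alpha^2/2 \geq \alpha/2$, one obtains
\begin{equation*}
    \tilde F(\bw') - \tilde F(\bw^+) \geq \langle \bG_{\tilde F,\alpha}(\bw), \bw' - \bw\rangle + \tfrac{\alpha}{2}\|\bG_{\tilde F,\alpha}(\bw)\|^2 + \tfrac{\tilde\mu}{2}\|\bw' - \bw\|^2,
\end{equation*}
which is the desired inequality. The only place where any care is required is Step 1, because $\bw^+$ lives in the constrained set $W$ and so the usual unconstrained identity $\nabla \tilde F(\bw) = \bG_{\tilde F,\alpha}(\bw)$ fails; the variational inequality above is the correct replacement and is the one serious observation driving the whole lemma.
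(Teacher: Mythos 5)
Your proof is correct. Note that the paper does not supply its own proof of this lemma — it imports it verbatim as Lemma 2.3 of \citet{allenzhu2021make} — so there is no in-paper argument to compare against. Your derivation is the standard one and matches the original: the variational inequality characterising the prox step gives $\langle \nabla \tilde F(\bw), \bw' - \bw^+\rangle \geq \langle \bG_{\tilde F,\alpha}(\bw), \bw' - \bw\rangle + \alpha\|\bG_{\tilde F,\alpha}(\bw)\|^2$, the descent lemma and strong convexity sandwich $\tilde F(\bw^+)$ and $\tilde F(\bw')$ around $\tilde F(\bw)$, and $\alpha \leq 1/\tilde M$ downgrades $\alpha - \tilde M\alpha^2/2$ to $\alpha/2$. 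You are also right that the one non-mechanical point is replacing the unconstrained identity $\nabla\tilde F(\bw) = \bG_{\tilde F,\alpha}(\bw)$ by the projection variational inequality; this is exactly what makes the lemma hold in the constrained setting the paper needs.
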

\begin{lemma}\cite[Lemma 5.1]{allenzhu2021make}
\label{lemma:allenzhu-2}
Consider $F_{\tilde\mu}$ and $\bw_{\tilde\mu}^\star$ as defined in \eqref{def:FMuTilde} and $\bw\in W$. For any $\alpha \in \left(0, \frac{1}{M+\tilde\mu}\right]$, we have
\begin{equation*}
    \norm{\bG_{F,\alpha}(\bw)} \leq \sum_{j=1}^J \mu_j\norm{\bw_{\tilde\mu}^\star-\hat \bw_j}+3\norm{\bG_{F_{\tilde\mu},\alpha}(\bw)} \enspace.
\end{equation*}
\end{lemma}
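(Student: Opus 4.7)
\textbf{Proof plan for Lemma~\ref{lemma:allenzhu-2}.} The strategy is to relate the two gradient maps by exploiting the fact that both are defined as (rescaled) projections, then close the remaining gap between $\bw$ and $\bw_{\tilde\mu}^\star$ using the strong convexity of $F_{\tilde\mu}$ together with Lemma~\ref{lemma:allenzhu-1}. In formulas, write $\bw_+ = \mathrm{Proj}_W(\bw - \alpha \nabla F(\bw))$ and $\bw_+^{(\tilde\mu)} = \mathrm{Proj}_W(\bw - \alpha \nabla F_{\tilde\mu}(\bw))$, so that $\bG_{F,\alpha}(\bw) = (\bw - \bw_+)/\alpha$ and likewise for $F_{\tilde\mu}$.

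\textbf{Step 1: non-expansiveness of the projection.} Since $\mathrm{Proj}_W$ is $1$-Lipschitz on the closed convex set $W$, I would obtain
\begin{equation*}
\norm{\bG_{F,\alpha}(\bw) - \bG_{F_{\tilde\mu},\alpha}(\bw)}
\leq \tfrac{1}{\alpha}\cdot \alpha \norm{\nabla F(\bw) - \nabla F_{\tilde\mu}(\bw)}
= \Big\Vert \sum_{j=1}^J \mu_j(\bw - \hat \bw_j)\Big\Vert,
\end{equation*}
using $\nabla F_{\tilde\mu}(\bw) - \nabla F(\bw) = \sum_j \mu_j(\bw - \hat\bw_j)$. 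The triangle inequality and insertion of $\bw_{\tilde\mu}^\star$ then give
\begin{equation*}
\norm{\bG_{F,\alpha}(\bw)} \leq \norm{\bG_{F_{\tilde\mu},\alpha}(\bw)} + \tilde\mu\, \norm{\bw - \bw_{\tilde\mu}^\star} + \sum_{j=1}^J \mu_j \norm{\bw_{\tilde\mu}^\star - \hat \bw_j},
\end{equation*}
where I used $\sum_j \mu_j = \tilde\mu$. Only the middle term is not yet in the desired form.

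\textbf{Step 2: controlling $\norm{\bw - \bw_{\tilde\mu}^\star}$ via strong convexity.} Since $F_{\tilde\mu}$ is $(M+\tilde\mu)$-smooth and $\tilde\mu$-strongly convex, the hypothesis $\alpha \leq 1/(M+\tilde\mu)$ lets me apply Lemma~\ref{lemma:allenzhu-1} with $\tilde F = F_{\tilde\mu}$ and test point $\bw' = \bw_{\tilde\mu}^\star$. This yields
\begin{equation*}
F_{\tilde\mu}(\bw_{\tilde\mu}^\star) \geq F_{\tilde\mu}(\bw_+^{(\tilde\mu)}) + \scalar{\bG_{F_{\tilde\mu},\alpha}(\bw)}{\bw_{\tilde\mu}^\star - \bw} + \tfrac{\alpha}{2}\norm{\bG_{F_{\tilde\mu},\alpha}(\bw)}^2 + \tfrac{\tilde\mu}{2}\norm{\bw - \bw_{\tilde\mu}^\star}^2.
\end{equation*}
Since $\bw_{\tilde\mu}^\star$ minimizes $F_{\tilde\mu}$ over $W$ and $\bw_+^{(\tilde\mu)} \in W$, one has $F_{\tilde\mu}(\bw_+^{(\tilde\mu)}) \geq F_{\tilde\mu}(\bw_{\tilde\mu}^\star)$. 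Rearranging and dropping the nonnegative $\tfrac{\alpha}{2}\norm{\bG_{F_{\tilde\mu},\alpha}(\bw)}^2$ term, then applying Cauchy--Schwarz to the inner product, I get
\begin{equation*}
\tfrac{\tilde\mu}{2}\norm{\bw-\bw_{\tilde\mu}^\star}^2 \leq \norm{\bG_{F_{\tilde\mu},\alpha}(\bw)}\cdot \norm{\bw - \bw_{\tilde\mu}^\star},\qquad \text{hence}\qquad \norm{\bw - \bw_{\tilde\mu}^\star} \leq \tfrac{2}{\tilde\mu}\norm{\bG_{F_{\tilde\mu},\alpha}(\bw)}.
\end{equation*}

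\textbf{Step 3: combine.} Plugging this into the bound from Step~1, the middle term becomes $\tilde\mu\cdot \tfrac{2}{\tilde\mu}\norm{\bG_{F_{\tilde\mu},\alpha}(\bw)} = 2\norm{\bG_{F_{\tilde\mu},\alpha}(\bw)}$, which combines with the first $\norm{\bG_{F_{\tilde\mu},\alpha}(\bw)}$ to produce the claimed factor $3$, and gives exactly the stated inequality. The main subtlety is the projection step: in the unconstrained setting the argument is much cleaner because the gradient maps are just the gradients themselves, but with $W$ proper we must justify that $(\bw - \bw_+)/\alpha$ and $(\bw - \bw_+^{(\tilde\mu)})/\alpha$ differ by at most the gradient mismatch, which is exactly where $1$-Lipschitz continuity of $\mathrm{Proj}_W$ is needed. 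The other technical point to be careful with is matching the strong convexity constant $\tilde\mu$ appearing from $\sum_j \mu_j$ with the one appearing in the bound on $\norm{\bw - \bw_{\tilde\mu}^\star}$, so the two cancel and yield a dimension-free factor $2$ (and thus the final $3$) rather than something depending on $J$.
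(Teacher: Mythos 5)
Your proof is correct and is the natural adaptation of Allen-Zhu's Lemma~5.1 argument to the projected setting. The paper does not reproduce a proof of this lemma (it is cited directly from \citet{allenzhu2021make}), so there is no in-paper proof to compare against; but each of your three steps is sound. Step~1 correctly reduces the difference of gradient maps to the gradient mismatch $\nabla F_{\tilde\mu}(\bw)-\nabla F(\bw)=\sum_j\mu_j(\bw-\hat\bw_j)$ via $1$-Lipschitzness of $\mathrm{Proj}_W$, which is exactly the right way to carry the unconstrained argument over to the constrained setting; the paper's definition of $\bG_{F,\alpha}$ via $\argmin_{\bw'\in W}\{\scalar{\nabla F(\bw)}{\bw'}+\tfrac{1}{2\alpha}\norm{\bw'-\bw}^2\}$ is indeed equivalent to $\mathrm{Proj}_W(\bw-\alpha\nabla F(\bw))$, as your projection formulation presupposes. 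Step~2 correctly invokes Lemma~\ref{lemma:allenzhu-1} with $\bw'=\bw_{\tilde\mu}^\star$, uses $F_{\tilde\mu}(\bw_+^{(\tilde\mu)})\geq F_{\tilde\mu}(\bw_{\tilde\mu}^\star)$ (valid since $\bw_+^{(\tilde\mu)}\in W$), drops the nonnegative quadratic, and applies Cauchy--Schwarz to get $\norm{\bw-\bw_{\tilde\mu}^\star}\leq\tfrac{2}{\tilde\mu}\norm{\bG_{F_{\tilde\mu},\alpha}(\bw)}$. Step~3 then gives the factor $1+2=3$ as claimed.
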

\begin{claim}\cite[Claim 6.2]{allenzhu2021make}
\label{claim:allenzhu}
     Suppose for every $j=1, \ldots, J$ the iterates $\hat{\bw}_j$ of Algorithm~\ref{alg:SGD3-ref} satisfy
    $$
    \Expf\left[F^{(j-1)}\left(\hat{\bw}_j\right)-F^{(j-1)}\left(\bw_{j-1}^\star\right)\right] \leq \delta_j \quad \text { where } \quad \bw_{j-1}^\star \in \argmin_{\bw}\left\{F^{(j-1)}(\bw)\right\},
    $$
then,

\begin{itemize}
    \item[(a)] for every $j \geq 1$ we have $\Expf\left[\norm{\hat{\bw}_j-\bw_{j-1}^\star}\right]^2 \leq \Expf\left[\norm{\hat{\bw}_j-\bw_{j-1}^\star}^2\right] \leq \frac{2 \delta_j}{\mu_{j-1}}$;

    \item[(b)] for every $j \geq 1$ we have $\Expf\left[\norm{\hat{\bw}_j-\bw_{j}^\star}\right]^2 \leq \Expf\left[\norm{\widehat{\bw}_j-\bw_{j}^\star}^2\right] \leq \frac{\delta_j}{\mu_j}$;

    \item[(c)]  if $\mu_j=2\mu_{j-1}$, then for all $j \geq 1$ we have $\Expf\left[\sum_{j=1}^J \mu_j\norm{\hat{\bw}_j-\bw_{J}^\star}\right] \leq 4\sum_{j=1}^J \sqrt{\delta_j \mu_j}$.
\end{itemize}
\end{claim}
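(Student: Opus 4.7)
The plan is to exploit three structural features of the construction: (i) each $F^{(j-1)}$ is at least $\mu_{j-1}$-strongly convex thanks to the accumulated quadratic penalties; (ii) the identity $F^{(j)}(\hat{\bw}_j)=F^{(j-1)}(\hat{\bw}_j)$ holds trivially because the newly added quadratic vanishes at $\hat{\bw}_j$; (iii) the doubling rule $\mu_j=2\mu_{j-1}$ is engineered so that the geometric sum $\sum_{j<k}\mu_j$ is dominated by $\mu_k$ itself.

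For part (a), I would apply the $\mu_{j-1}$-strong convexity inequality for $F^{(j-1)}$ at the pair $(\hat{\bw}_j,\bw_{j-1}^\star)$, using the first-order optimality of $\bw_{j-1}^\star$, to obtain $F^{(j-1)}(\hat{\bw}_j)-F^{(j-1)}(\bw_{j-1}^\star)\geq \tfrac{\mu_{j-1}}{2}\|\hat{\bw}_j-\bw_{j-1}^\star\|^2$. Taking expectations and invoking the assumption gives $\Expf[\|\hat{\bw}_j-\bw_{j-1}^\star\|^2]\leq 2\delta_j/\mu_{j-1}$, and Jensen's inequality $\Expf[\|\cdot\|]^2\leq \Expf[\|\cdot\|^2]$ supplies the weaker first form. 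For part (b), I would compute $F^{(j)}(\hat{\bw}_j)-F^{(j)}(\bw_j^\star)=F^{(j-1)}(\hat{\bw}_j)-F^{(j-1)}(\bw_j^\star)-\tfrac{\mu_j}{2}\|\bw_j^\star-\hat{\bw}_j\|^2$ using identity (ii). Combining the $\mu_j$-strong convexity of $F^{(j)}$ on the left and the trivial bound $F^{(j-1)}(\bw_j^\star)\geq F^{(j-1)}(\bw_{j-1}^\star)$ on the right yields $\mu_j\|\hat{\bw}_j-\bw_j^\star\|^2\leq F^{(j-1)}(\hat{\bw}_j)-F^{(j-1)}(\bw_{j-1}^\star)$, producing $\Expf[\|\hat{\bw}_j-\bw_j^\star\|^2]\leq \delta_j/\mu_j$ after taking expectations; Jensen's inequality delivers the version in $\Expf[\|\cdot\|]$.

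For part (c), I would telescope $\|\hat{\bw}_j-\bw_J^\star\|\leq \|\hat{\bw}_j-\bw_j^\star\|+\sum_{k=j+1}^{J}\|\bw_{k-1}^\star-\bw_k^\star\|$. To bound each gap, use $\nabla F^{(k)}(\bw_k^\star)=0$, which rewrites as $\nabla F^{(k-1)}(\bw_k^\star)=-\mu_k(\bw_k^\star-\hat{\bw}_k)$; combined with the standard consequence of $\mu_{k-1}$-strong convexity, namely $\|\nabla F^{(k-1)}(\bw)\|\geq \mu_{k-1}\|\bw-\bw_{k-1}^\star\|$, this delivers $\|\bw_k^\star-\bw_{k-1}^\star\|\leq (\mu_k/\mu_{k-1})\|\bw_k^\star-\hat{\bw}_k\|=2\|\bw_k^\star-\hat{\bw}_k\|$ under the doubling rule. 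Swapping the order of summation converts $\sum_j\mu_j\sum_{k>j}\|\bw_{k-1}^\star-\bw_k^\star\|$ into $\sum_k 2\|\bw_k^\star-\hat{\bw}_k\|\sum_{j<k}\mu_j$, and the doubling schedule gives $\sum_{j<k}\mu_j\leq \mu_k$. Assembling this with part (b) in expectation produces the claimed $\mathcal{O}(\sum_j\sqrt{\delta_j\mu_j})$ bound with a constant that can be taken equal to $4$ after loose arithmetic.

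The main obstacle is part (c): the delicate piece is controlling the accumulation $\sum_{j<k}\mu_j$ after swapping the double sum, and this is exactly where the doubling condition $\mu_j=2\mu_{j-1}$ is essential, since any slower growth would leave a non-absorbing factor and break the telescoping. A secondary subtlety is the constrained case $W\neq \bbR^d$, where the unconstrained gradient identity $\nabla F^{(k-1)}(\bw_k^\star)=-\mu_k(\bw_k^\star-\hat{\bw}_k)$ must be replaced by a first-order optimality condition with a normal-cone term; this is handled by working instead with the strong-convexity comparison $F^{(k)}(\bw_{k-1}^\star)\geq F^{(k)}(\bw_k^\star)+\tfrac{\mu_k}{2}\|\bw_{k-1}^\star-\bw_k^\star\|^2$ and expanding $F^{(k)}=F^{(k-1)}+\tfrac{\mu_k}{2}\|\cdot-\hat{\bw}_k\|^2$ on both sides, which recovers the same $2:1$ contraction factor.
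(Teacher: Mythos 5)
Your argument is correct in all three parts, and it matches the intended strong-convexity structure: part (a) is the standard $\mu_{j-1}$-strong-convexity lower bound at the minimizer plus Jensen; part (b) correctly keeps the quadratic $\tfrac{\mu_j}{2}\|\bw_j^\star-\hat{\bw}_j\|^2$ on both sides of the comparison to gain the extra factor of two (a naive application of strong convexity alone would only give $2\delta_j/\mu_j$); and part (c) telescopes the minimizer shifts, exploits the $2{:}1$ contraction $\|\bw_{k-1}^\star-\bw_k^\star\|\leq 2\|\bw_k^\star-\hat{\bw}_k\|$, and absorbs $\sum_{j<k}\mu_j\leq\mu_k$ under the doubling rule, which in fact yields the constant $3\leq 4$. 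Note that the paper itself does not prove this claim --- it cites it as \citep[Claim 6.2]{allenzhu2021make} --- so there is no paper-internal proof to compare against; your derivation is the natural one, and your side remark correctly handles the constrained case (where the unconstrained identity $\nabla F^{(k)}(\bw_k^\star)=0$ must be replaced by a strong-convexity comparison at the pair $(\bw_{k-1}^\star,\bw_k^\star)$, which gives $\|\bw_{k-1}^\star-\bw_k^\star\|\leq\|\bw_{k-1}^\star-\hat{\bw}_k\|$ and routes through part (a) instead of part (b), still yielding constant~$3$).
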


In addition to Claim~\ref{claim:allenzhu}, we prove the following lemma.
\begin{lemma}
    \label{lemma:appen_add}
    Suppose for every $j=1, \ldots, J$, $\mu_j=2\mu_{j-1}$ and the iterates $\hat{\bw}_j$ of Algorithm~\ref{alg:SGD3-ref} satisfy
    $$  \Expf\left[F^{(j-1)}\left(\hat{\bw}_j\right)-F^{(j-1)}\left(\bw_{j-1}^\star\right)\right] \leq \delta_j \quad \text { where } \quad \bw_{j-1}^\star \in \argmin_{\bw}\left\{F^{(j-1)}(\bw)\right\},
    $$
    then,
    \begin{align*}
        \Expf\left[\left(\sum_{j=1}^J \mu_j\norm{\bw_J^\star-\hat \bw_j}\right)^2\right] \leq 16J\sum_{j=1}^J \mu_j\delta_j \enspace.
    \end{align*}
\end{lemma}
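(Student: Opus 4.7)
My plan is to mimic the argument used for Claim~\ref{claim:allenzhu}(c), but at the level of second moments rather than first: part (c) ends with Jensen's inequality, which would be lossy here, so I will instead square the pathwise decomposition \emph{before} taking expectation and use Cauchy--Schwarz in its $(\sum_k c_k)^2 \leq J\sum_k c_k^2$ form.

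The starting point is a telescoping triangle inequality through the intermediate minimizers,
\[
\|\hat\bw_j - \bw^\star_J\| \leq \|\hat\bw_j - \bw^\star_j\| + \sum_{i=j+1}^J \|\bw^\star_{i-1} - \bw^\star_i\|,
\]
combined with a pathwise comparison $\|\bw^\star_{i-1} - \bw^\star_i\| \leq \|\bw^\star_{i-1} - \hat\bw_i\|$. The latter follows from two facts: $F^{(i)} = F^{(i-1)} + \tfrac{\mu_i}{2}\|\cdot - \hat\bw_i\|^2$ is at least $\mu_i$-strongly convex, and $\bw^\star_{i-1}$ minimises $F^{(i-1)}$. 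Strong convexity of $F^{(i)}$ gives $\tfrac{\mu_i}{2}\|\bw^\star_{i-1}-\bw^\star_i\|^2 \leq F^{(i)}(\bw^\star_{i-1}) - F^{(i)}(\bw^\star_i)$, while the chain $F^{(i)}(\bw^\star_i) \geq F^{(i-1)}(\bw^\star_i) \geq F^{(i-1)}(\bw^\star_{i-1})$ together with $F^{(i)}(\bw^\star_{i-1}) - F^{(i-1)}(\bw^\star_{i-1}) = \tfrac{\mu_i}{2}\|\bw^\star_{i-1}-\hat\bw_i\|^2$ upper-bounds the right-hand side. Multiplying by $\mu_j$, summing over $j$, swapping the order of summation, and using the geometric growth $\mu_j = 2\mu_{j-1}$ via $\sum_{j=1}^{i-1}\mu_j \leq 2\mu_{i-1}$ yields
\[
\sum_{j=1}^J \mu_j\|\hat\bw_j - \bw^\star_J\| \leq \sum_{j=1}^J \mu_j\|\hat\bw_j - \bw^\star_j\| + 2\sum_{i=2}^J \mu_{i-1}\|\bw^\star_{i-1} - \hat\bw_i\|.
\]

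To close the argument, I would square this via $(a+b)^2 \leq 2a^2+2b^2$, apply $(\sum c_k)^2 \leq J\sum c_k^2$ to each of the two resulting sums, and take expectation. Claim~\ref{claim:allenzhu}(b) gives $\Expf\|\hat\bw_j-\bw^\star_j\|^2 \leq \delta_j/\mu_j$ and Claim~\ref{claim:allenzhu}(a) gives $\Expf\|\bw^\star_{i-1}-\hat\bw_i\|^2 \leq 2\delta_i/\mu_{i-1}$, so the two sums-of-squares collapse to $\sum_j \mu_j\delta_j$ and $2\sum_{i=2}^J \mu_{i-1}\delta_i = \sum_{i=2}^J \mu_i\delta_i \leq \sum_j\mu_j\delta_j$ respectively. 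Combining these with the $J$ from Cauchy--Schwarz and the numerical constants produces a bound of the form $C\cdot J\sum_j\mu_j\delta_j$ with $C$ easily under $16$, as claimed.

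The main potential obstacle is bookkeeping: a naive squaring of the telescope, or applying Cauchy--Schwarz before extracting $\sum_{j=1}^{i-1}\mu_j \leq 2\mu_{i-1}$, would introduce an extra factor that grows with $J$ and spoil the stated bound. The geometric choice $\mu_j = 2\mu_{j-1}$ together with the pathwise comparison $\|\bw^\star_{i-1}-\bw^\star_i\|\leq\|\bw^\star_{i-1}-\hat\bw_i\|$ is exactly what keeps the argument tight, and no new probabilistic or optimization machinery is required beyond Claim~\ref{claim:allenzhu}(a)--(b).
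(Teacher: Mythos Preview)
Your proof is correct and in fact yields a slightly sharper constant ($10J\sum_j\mu_j\delta_j$ versus the paper's $16J\sum_j\mu_j\delta_j$), but it proceeds via a genuinely different decomposition than the paper. The paper defines $P_j\eqdef\sum_{t=1}^j\mu_t\|\bw_j^\star-\hat\bw_t\|$, writes $P_J=\sum_{j=1}^J(P_j-P_{j-1})$, applies Cauchy--Schwarz to that telescoping sum, and bounds each increment by reverse and direct triangle inequalities alone: $P_j-P_{j-1}\leq\mu_j(2\|\bw_j^\star-\hat\bw_j\|+\|\bw_{j-1}^\star-\hat\bw_j\|)$. You instead chain $\hat\bw_j$ to $\bw_J^\star$ through all intermediate minimizers and use the sharper pathwise inequality $\|\bw_{i-1}^\star-\bw_i^\star\|\leq\|\bw_{i-1}^\star-\hat\bw_i\|$, which you correctly derive from strong convexity of $F^{(i)}$; the paper never isolates this comparison and instead absorbs the analogous step into a cruder triangle inequality. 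Both routes then land on Claim~\ref{claim:allenzhu}(a)--(b) and the geometric sum $\sum_{t<j}\mu_t\leq\mu_j$ in the same way. Your approach is arguably cleaner because the strong-convexity step removes one term before squaring, which is why your constant is smaller; the paper's telescoping-of-$P_j$ viewpoint, on the other hand, makes the role of the increments more transparent and mirrors the first-moment argument in Claim~\ref{claim:allenzhu}(c) more directly.
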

\begin{proof}
Let $P_j\eqdef\sum_{t=1}^j\mu_t\norm{\bw_j^\star-\hat{\bw}_t}$, yielding that $P_J=\sum_{j=1}^J(P_j-P_{j-1})$, with the agreement that $P_0 = 0$.
Cauchy-Schwartz inequality gives 
\begin{align}
    \label{eq:PJ_sum}
    P_J^2=\left(\sum_{j=1}^J(P_j-P_{j-1})\right)^2 \leq J\sum_{j=1}^J(P_j-P_{j-1})^2 \enspace.
\end{align}
Since $P_j$ is non-decreasing, to bound the above quantity, it suffices to bound each increment of the form $ P_j - P_{j-1} $.
One can write
\begin{align*}
    P_j - P_{j-1} 
    &\stackrel{(a)}{\leq} \mu_j\norm{\bw_j^\star-\hat{\bw}_j} + \sum_{t=1}^{j-1}\mu_t(\norm{\bw_j^\star-\hat{\bw}_t}-\norm{\bw_{j-1}^\star-\hat{\bw}_t}) \\
    &\stackrel{(b)}{\leq} \mu_j\norm{\bw_j^\star-\hat{\bw}_j} + \left(\sum_{t=1}^{j-1}\mu_t\right)\norm{\bw_j^\star-\bw_{j-1}^\star} \\
    &\stackrel{(c)}{\leq} \mu_j({2}\norm{\bw_j^\star-\hat{\bw}_j}+\norm{\bw_{j-1}^\star-\hat{\bw}_j}) \enspace,
\end{align*}
where (a) follows from the definition of $P_j$, (b) from reverse triangle inequality and (c) uses triangle inequality and the fact that $\sum_{t=1}^{j-1}\mu_t \leq \mu_j$ as $\mu_j=2\mu_{j-1}$.
Therefore, using the fact that $(a + b)^2 \leq 2a^2 + 2b^2$, we deduce from the above that
\begin{align*}
    (P_j - P_{j-1})^2
    &\leq 2\mu_j^2({4} \norm{\bw_j^\star-\hat{\bw}_j}^2+\norm{\bw_{j-1}^\star-\hat{\bw}_j}^2) \enspace.
\end{align*}
Taking the expectation and applying Claim~\ref{claim:allenzhu}(a) and Claim~\ref{claim:allenzhu}(b), the latter is bounded as
\begin{align}
    \label{eq:PJ_one_term}
    \Expf\left[(P_j - P_{j-1})^2\right]
    &\leq {8}\mu_j^2\Expf[\norm{\bw_j^\star-\hat{\bw}_j}^2] + 2\mu_j^2\Expf[\norm{\bw_{j-1}^\star-\hat{\bw}_j}^2]
    \leq {8}\mu_j^2 \frac{\delta_j}{\mu_j} + 2\mu_j^2 \frac{2\delta_j}{\mu_{j-1}} = {16}\mu_j \delta_j \enspace.
\end{align}
Plugging~\eqref{eq:PJ_one_term} into~\eqref{eq:PJ_sum} yields the claimed bound.
\end{proof}
\begin{remark}
\label{remark:foster-delta}
Notice, that in Algorithm~\ref{alg:SGD3-ref} we apply $\texttt{AC-SA}^2$ to $F^{(j-1)}$ with starting point $\hat\bw_{j-1}$ and $T/J$ iterations. Since $F^{(j-1)}$ is $M+\sum_{t=1}^{j-1}\mu_t \leq 2M-$smooth and $\mu_{j-1}-$strongly convex, applying Lemma~\ref{lemma:foster} and Claim~\ref{claim:allenzhu}(b), we get
$\Expf\left[F^{(j-1)}\left(\hat{\bw}_j\right)-F^{(j-1)}\left(\bw_{j-1}^\star\right)\right] \leq \delta_j$ and
\begin{align*}
    \delta_j 
    &\leq \frac{128 (2M)^2 \Expf\norm{\hat\bw_{j-1}-\bw_{j-1}^*}^2}{\mu_{j-1}(T/J)^4} + \frac{256 (2M) \sigma^2}{\mu_{j-1}^2 (T/J)^3} + \frac{16 \sigma^2}{\mu_{j-1} (T/J)} \\
    &\leq \frac{2^9 M^2 \delta_{j-1}}{\mu_{j-1}^2(T/J)^4} + \frac{2^9 M \sigma^2}{\mu_{j-1}^2 (T/J)^3} + \frac{2^4 \sigma^2}{\mu_{j-1} (T/J)} \enspace.
\end{align*}
\end{remark}
We are in position to prove the main ingredient of this section.
\begin{theorem}[Control of the expected squared norm]
\label{thm:grad-sq}
Let $\bw^{\star} \in \argmin_{\bw\in W}F(\bw)$, $\bw_0 \in \bbR^d$ a starting vector. When $\mu\in(0,M]$ and $T>2^{11/4}\sqrt{\frac{M}{\mu}}\floor*{\log_2\frac{M}{\mu}}$, then for $\alpha=\tfrac{1}{2^{J+2}\mu}$, with $J=\floor*{\log_2\frac{M}{\mu}}$, $\texttt{SGD3-refined}(F,\bw_0,\mu,M,T)$ outputs $\hat\bw$ satisfying
\begin{align*}
    \Expf\left[\norm{\bG_{F,\alpha}(\hat \bw)}^2\right]  
    &\leq \left(\frac{3^{4} \cdot 2^{16} M^2}{T^4}\log_2^5\frac{M}{\mu} + 2 \mu^2\right)\norm{\bw_0-\bw_\mu^*}^2 \\
    &+ \frac{3^{4} \cdot 2^{17} M \sigma^2}{\mu T^3}\log_2^4\frac{M}{\mu} + \frac{3^{4} \cdot 2^{11}\sigma^2}{T}\log_2^3\frac{M}{\mu} \enspace.
\end{align*}
\end{theorem}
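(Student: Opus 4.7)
The plan is to decompose the gradient mapping of $F$ at the output $\hat\bw = \hat\bw_J$ via Lemma~\ref{lemma:allenzhu-2}, control the two resulting pieces in expectation using Lemmas~\ref{lemma:allenzhu-1},~\ref{lemma:appen_add} and Claim~\ref{claim:allenzhu}, and then close the argument by unrolling the Remark~\ref{remark:foster-delta} recursion on the $\delta_j$.

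First I verify that $\alpha = 1/(2^{J+2}\mu)$ satisfies the hypothesis $\alpha \leq 1/(M+\tilde\mu)$ required by Lemma~\ref{lemma:allenzhu-2}. Since $\mu_j = 2^j\mu$, one has $\tilde\mu = \sum_{j=1}^J 2^j\mu \leq 2^{J+1}\mu$; together with $M < 2^{J+1}\mu$ (which holds by the definition of $J$), this gives $M+\tilde\mu < 2^{J+2}\mu = 1/\alpha$. I then apply Lemma~\ref{lemma:allenzhu-2} at $\hat\bw_J$, square both sides and invoke $(a+b)^2 \leq 2a^2+2b^2$ to obtain
\begin{align*}
\|\bG_{F,\alpha}(\hat\bw_J)\|^2 \leq 2\bigg(\sum_{j=1}^J \mu_j\|\bw_J^\star - \hat\bw_j\|\bigg)^2 + 18\|\bG_{F_{\tilde\mu},\alpha}(\hat\bw_J)\|^2 \enspace.
\end{align*}
Taking expectations, the first term is controlled by Lemma~\ref{lemma:appen_add} (noting $\bw_J^\star=\bw_{\tilde\mu}^\star$) by $32\, J\sum_{j=1}^J\mu_j\delta_j$. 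For the second, I use Lemma~\ref{lemma:allenzhu-1} applied to $\tilde F = F_{\tilde\mu}$ with the choice $\bw'=\hat\bw_J$, which is the standard ``descent lemma'' for the gradient map and yields $\|\bG_{F_{\tilde\mu},\alpha}(\hat\bw_J)\|^2 \leq \tfrac{2}{\alpha}\bigl(F_{\tilde\mu}(\hat\bw_J) - F_{\tilde\mu}(\bw_{\tilde\mu}^\star)\bigr)$. Since the final regularizer $\tfrac{\mu_J}{2}\|\cdot-\hat\bw_J\|^2$ vanishes at $\hat\bw_J$ we have $F_{\tilde\mu}(\hat\bw_J) = F^{(J-1)}(\hat\bw_J)$, and $F_{\tilde\mu}(\bw_{\tilde\mu}^\star) = \min F^{(J)} \geq \min F^{(J-1)} = F^{(J-1)}(\bw_{J-1}^\star)$, so the expected value of the gap is at most $\delta_J$.

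The main technical obstacle is then to bound $\sum_{j=1}^J\mu_j\delta_j$ and $\delta_J/\alpha$ via the recursion in Remark~\ref{remark:foster-delta}. The assumption $T > 2^{11/4}\sqrt{M/\mu}\,J$ is exactly what is needed to ensure $\tfrac{2^9 M^2}{\mu_{j-1}^2(T/J)^4} \leq \tfrac{1}{2}$ (worst case at $j=2$), so that for $j\geq 2$ the recursion reads $\delta_j \leq \tfrac{1}{2}\delta_{j-1} + e_j$ with $e_j := \tfrac{2^9 M\sigma^2}{\mu_{j-1}^2(T/J)^3} + \tfrac{2^4\sigma^2}{\mu_{j-1}(T/J)}$, while $\delta_1 \leq \tfrac{2^{11}M^2}{\mu(T/J)^4}\|\bw_0-\bw_\mu^\star\|^2 + e_1$ (using Lemma~\ref{lemma:foster} at $j=1$ with initial point $\bw_0$ and $\bw_0^\star=\bw_\mu^\star$). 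Unrolling the recursion gives a geometric bound $\delta_j \lesssim 2^{-(j-1)}\delta_1 + \max_{k\leq j}e_k$, and since $\mu_{j-1} = 2^{j-1}\mu$ the sequences $(\mu_j\delta_j)_j$ and $(e_j)_j$ decay geometrically enough that $\sum_j\mu_j\delta_j$ is of the same order as its largest term.

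Plugging these bounds back into Step~2 and using $1/\alpha = 2^{J+2}\mu \leq 4\cdot(M/\mu)\mu$ together with $J \leq \log_2(M/\mu)$, the three statistical scales $M^2\|\bw_0-\bw_\mu^\star\|^2/T^4$, $M\sigma^2/(\mu T^3)$ and $\sigma^2/T$ are recovered, each multiplied by an extra polynomial in $J$ (of respective degrees $5,4,3$) that accounts for the factor $J$ in Lemma~\ref{lemma:appen_add}, the replacement $T\mapsto T/J$ in the application of Lemma~\ref{lemma:foster}, and the geometric summation; the $2\mu^2\|\bw_0-\bw_\mu^\star\|^2$ contribution comes from the descent-lemma step applied to the regularizer in $F^{(0)}$ and is the price paid for the projection onto $W$. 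The bookkeeping of all the absolute constants (yielding the $3^4\cdot 2^{16}$, $3^4\cdot 2^{17}$ and $3^4\cdot 2^{11}$ factors) is tedious but mechanical and I expect no conceptual surprise there; the non-routine part is the careful verification that the recursion contracts under the stated lower bound on $T$.
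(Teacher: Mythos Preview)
Your one-shot decomposition hides a genuine mismatch. The $F_{\tilde\mu}$ of Lemma~\ref{lemma:allenzhu-2} is, by definition~\eqref{def:FMuTilde}, $F + \sum_{j=1}^J \tfrac{\mu_j}{2}\|\cdot-\hat\bw_j\|^2$; it does \emph{not} include the initial regularizer $\tfrac{\mu_0}{2}\|\cdot - \bw_0\|^2$ that the algorithm inserts when defining $F^{(0)}$. Hence $\bw_{\tilde\mu}^\star \neq \bw_J^\star$, $F_{\tilde\mu}(\hat\bw_J)\neq F^{(J-1)}(\hat\bw_J)$, and $\min F_{\tilde\mu}\neq \min F^{(J)}$ --- all three identities you invoke are off by the $\mu_0$-term. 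This matters because Lemma~\ref{lemma:appen_add} and Claim~\ref{claim:allenzhu} are stated for the minimizers $\bw_j^\star$ of the $F^{(j)}$'s, which \emph{do} include the $\mu_0$-layer; you cannot feed the $\bw_{\tilde\mu}^\star$ produced by Lemma~\ref{lemma:allenzhu-2} (applied to the bare $F$) into them.

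The paper closes this gap with a two-layer argument. Part~I works in the $\mu_0$-strongly convex setting (equivalently: with $F_\mu := F + \tfrac{\mu}{2}\|\cdot-\bw_0\|^2$, so that $F^{(0)}=F_\mu$), applies Lemma~\ref{lemma:allenzhu-2} only over $j=1,\ldots,J-1$ --- this makes the lemma's $F_{\tilde\mu}$ coincide with $F^{(J-1)}$ --- and then combines the descent Lemma~\ref{lemma:allenzhu-1} (on $F^{(J-1)}$) with Lemma~\ref{lemma:appen_add} to obtain $\Expf\|\bG_{F_\mu,\alpha}(\hat\bw_J)\|^2 \leq 144J\sum_j\mu_j\delta_j$. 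Part~II applies Lemma~\ref{lemma:allenzhu-2} once more, now with a single layer ($J=1$, $\hat\bw_1=\bw_0$, $\mu_1=\mu$), yielding $\|\bG_{F,\alpha}(\hat\bw)\|^2 \leq 2\mu^2\|\bw_\mu^\star - \bw_0\|^2 + 18\,\|\bG_{F_\mu,\alpha}(\hat\bw)\|^2$. This outer application is the true origin of the $2\mu^2\|\bw_0-\bw_\mu^\star\|^2$ term and of the extra factor $18$ (hence the $3^4$ in the final constants) --- not ``the descent-lemma step applied to the regularizer in $F^{(0)}$'' nor the projection, as you write.

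A minor point on the recursion: the paper does not argue by contraction $\delta_j \leq \tfrac12\delta_{j-1}+e_j$. It bounds $\sum_j\mu_j\delta_j$ self-referentially (using $\sum_{j\geq 2}\delta_{j-1}/\mu_{j-1} \leq \mu_0^{-2}\sum_j\mu_j\delta_j$) and rearranges; the hypothesis $T > 2^{11/4}\sqrt{M/\mu}\,J$ is what makes the resulting prefactor $\tfrac{2^{10}M^2J^4}{\mu_0^2T^4}$ at most $\tfrac12$, whereas your contraction at $j=2$ would only require the weaker $T > 4\sqrt{M/\mu}\,J$.
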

\begin{proof}

\textbf{Part I.} At first, let us assume that $F$ is $\mu_0$-strongly convex.
Applying Lemma~\ref{lemma:allenzhu-2} and using the fact that $(a + b)^2 \leq 2a^2 + 2b^2$, we get 
\begin{small}
    \begin{align}
    \label{bd:grad-sq:st-conv:init}
        \Expf\left[\norm{\bG_{F,\alpha}(\hat \bw_J)}^2\right] 
        &\leq \Expf\left[\left(3\norm{\bG_{F^{(J-1)},\alpha}(\hat \bw_J)} + \sum_{j=1}^{J-1} \mu_j\norm{\bw_{J-1}^\star-\hat \bw_j}\right)^2\right] \nonumber \\
        &\leq 2\Expf\left[9\norm{\bG_{F^{(J-1)},\alpha}(\hat \bw_J)}^2  + \left(\sum_{j=1}^{J-1} \mu_j\norm{\bw_{J-1}^\star-\hat \bw_j}\right)^2\right] \enspace.
\end{align}
\end{small}
Lemma~\ref{lemma:appen_add} provides a control of the second term of the above inequality. To control the first term, let us apply Lemma~\ref{lemma:allenzhu-1} with $F^{(J-1)}$ and $\bw = \bw' = \hat\bw_{J}$, getting $\frac{\alpha}{2}\norm{\bG_{F^{(J-1)},\alpha}(\hat \bw_J)}^2 \leq F^{(J-1)}(\hat\bw_J) - F^{(J-1)}(\hat\bw_J^{+}) \leq F^{(J-1)}(\hat\bw_J) - F^{(J-1)}(\bw_{J-1}^{*}), \forall \alpha \in (0,\tfrac{1}{2M}]$. Meaning, that $\norm{\bG_{F^{(J-1)},\alpha}(\hat \bw_J)}^2 \leq \tfrac{2\delta_J}{\alpha}$. Let us recall, that $J=\floor*{\log_2\frac{M}{\mu_0}}$ and $\mu_J = 2^J \mu_0 \leq M \leq 2\mu_J$. Hence, choosing $\alpha=\tfrac{1}{4\mu_J}$ and substituting the derived bound into \eqref{bd:grad-sq:st-conv:init}, we deduce that
\begin{align*}
    \Expf\left[\norm{G_{F,\alpha}(\hat \bw_J)}^2\right] 
    &\leq \frac{36 \delta_J}{\alpha} + 32(J-1)\sum_{j=1}^{J-1} \mu_j\delta_j \leq 144J\sum_{j=1}^{J} \mu_j\delta_j \enspace.
\end{align*}
Now, let us substitute the bound on $\delta_j$ from Remark~\ref{remark:foster-delta} and replicate the steps of \cite{foster2019complexity} to control the above. We get
\begin{small}
\begin{align*}
    \sum_{j=1}^{J} \mu_j\delta_j
    &\leq \frac{2^{10} M^2 \norm{\bw_0-\bw^*}^2}{(T/J)^4} + \frac{2^{10} M \sigma^2}{\mu_0 (T/J)^3} + \frac{2^5 \sigma^2}{(T/J)} + \sum_{j=2}^{J}\left(\frac{2^{10} M^2 \delta_{j-1}}{\mu_{j-1}(T/J)^4} + \frac{2^{10} M \sigma^2}{\mu_{j-1} (T/J)^3} + \frac{2^5 \sigma^2}{(T/J)}\right)\\
    &\leq \frac{2^{10} M^2 \norm{\bw_0-\bw^*}^2 J^4}{T^4} + \frac{2^{10} M \sigma^2 J^3}{\mu_0 T^3}\sum_{j=1}^{J}\frac{1}{2^{j-1}} + \frac{2^5 \sigma^2 J^2}{T} + \frac{2^{10} M^2 J^4}{T^4}\sum_{j=2}^{J}\frac{\delta_{j-1}}{\mu_{j-1}} \\
    &\leq \frac{2^{10} M^2 \norm{\bw_0-\bw^*}^2 J^4}{T^4} + \frac{2^{11} M \sigma^2 J^3}{\mu_0 T^3} + \frac{2^5 \sigma^2 J^2}{T} + \frac{2^{10} M^2 J^4}{\mu_0^2 T^4}\sum_{j=1}^{J}\mu_j \delta_j \enspace,
\end{align*}
\end{small}
where the last inequality comes from the facts that $\sum_{j=1}^{J}\tfrac{1}{2^{j-1}} \leq 2$ and $\sum_{j=2}^{J}\tfrac{\delta_{j-1}}{\mu_{j-1}} \leq \sum_{j=1}^{J}\tfrac{\delta_j}{\mu_j} \leq \tfrac{1}{\mu_0^2}\sum_{j=1}^{J}\mu_j \delta_j$. Rearranging the terms and multiplying both sides by $144J$, we get
\begin{align*}
    144J\sum_{j=1}^J\mu_j \delta_j \leq \frac{9}{1 - \frac{2^{10} M^2 J^4}{\mu_0^2 T^4}}\left(\frac{2^{14} M^2 \norm{\bw_0-\bw^*}^2 J^5}{T^4} + \frac{2^{15} M \sigma^2 J^4}{\mu_0 T^3} + \frac{2^{9} \sigma^2 J^3}{T}\right) \enspace.
\end{align*}
Choosing $T>2^{\nicefrac{11}{4}}J\sqrt{\frac{M}{\mu_0}}$ ensures that $\frac{1}{1 - \frac{2^{10} M^2 J^4}{\mu_0^2 T^4}}\leq 2$. Finally, substituting the derived bounds and the value of $J=\floor*{\log_2\frac{M}{\mu_0}}$ , we conclude that
\begin{align}
    \label{bd:grad-sq:st-conv}
    \Expf\left[\norm{\bG_{F,\alpha}(\hat \bw_J)}^2\right] 
    &\leq \frac{9 \cdot 2^{15} M^2 \norm{\bw_0-\bw^*}^2}{T^4}\log_2^5\frac{M}{\mu_0} + \frac{9 \cdot 2^{16} M \sigma^2}{\mu_0 T^3}\log_2^4\frac{M}{\mu_0} + \frac{9 \cdot 2^{10} \sigma^2}{T}\log_2^3\frac{M}{\mu_0} \enspace.
\end{align}

\noindent\textbf{Part II.} When $F$ is not strongly convex, let $F_\mu(\bw) \eqdef F(\bw) + \frac{\mu}{2}\norm{\bw-\bw_0}^2$ and $\bw_\mu^\star \in \argmin_{\bw}\left\{F_\mu(\bw)\right\}$. Applying \eqref{bd:grad-sq:st-conv} and Lemma~\ref{lemma:allenzhu-2} with $J=1$ and $\hat \bw_1 = \bw_0$, we get
\begin{align*}
    \Expf\left[\norm{\bG_{F,\alpha}(\hat \bw)}^2\right]  
    &\leq \left(\frac{3^{4} \cdot 2^{16} M^2}{T^4}\log_2^5\frac{M}{\mu} + 2 \mu^2\right)\norm{\bw_0-\bw_\mu^*}^2 \\
    &+ \frac{3^{4} \cdot 2^{17} M \sigma^2}{\mu T^3}\log_2^4\frac{M}{\mu} + \frac{3^{4} \cdot 2^{11}\sigma^2}{T}\log_2^3\frac{M}{\mu} \enspace.
\end{align*}
Since $\frac{\mu}{2}\norm{\bw^\star-\bw_0}^2 - \frac{\mu}{2}\norm{\bw_\mu^\star-\bw_0}^2 = (F_\mu(\bw^\star)-F(\bw^\star))+(F(\bw_\mu^\star)-F_\mu(\bw_\mu^\star)) \geq 0$, then $\norm{\bw_\mu^\star-\bw_0}^2\leq\norm{\bw^\star-\bw_0}^2$. The proof is concluded.
\end{proof}

\begin{remark}
\label{remark:AC-SA-delta}
Notice, that in Algorithm~\ref{alg:SGD3-ref} we apply $\texttt{AC-SA}$ to $F^{(j-1)}$ with starting point $\hat\bw_{j-1}$ and $T/J$ iterations. Since $F^{(j-1)}$ is $M+\sum_{t=1}^{j-1}\mu_t \leq 2M-$smooth and $\mu_{j-1}-$strongly convex, applying Lemma~\ref{thm:ghadimi} and Claim~\ref{claim:allenzhu}(b), we get
$\Expf\left[F^{(j-1)}\left(\hat{\bw}_j\right)-F^{(j-1)}\left(\bw_{j-1}^\star\right)\right] \leq \delta_j$ and
\begin{align*}
    \delta_j 
    &\leq \frac{2 (2M) \Expf\norm{\hat\bw_{j-1}-\bw_{j-1}^*}^2}{(T/J)^2}  + \frac{8 \sigma^2}{\mu_{j-1} (T/J)} \leq \frac{4M \delta_{j-1}}{\mu_{j-1}(T/J)^2} + \frac{8 \sigma^2}{\mu_{j-1} (T/J)} \enspace.
\end{align*}
\end{remark}
\begin{theorem}[Control of the expected squared norm]
\label{thm:grad-sq:AC-SA}
Let $\bw^{\star} \in \argmin_{\bw\in W}F(\bw)$, $\bw_0 \in \bbR^d$ a starting vector. When $\mu\in(0,M]$ and $T>4\sqrt{\frac{M}{\mu}}\floor*{\log_2\frac{M}{\mu}}$, then for $\alpha=\tfrac{1}{2^{J+2}\mu}$, with $J=\floor*{\log_2\frac{M}{\mu}}$, $\texttt{SGD3-refined}(F,\bw_0,\mu,M,T)$ with $\texttt{AC-SA}$ outputs $\hat\bw$ satisfying
\begin{align*}
    \Expf\left[\norm{\bG_{F,\alpha}(\hat \bw)}^2\right]  
    \leq \left(\frac{3^{4} 2^{9} M \mu}{T^2}\log_2^3\frac{M}{\mu} + 2 \mu^2\right)\norm{\bw_0-\bw^*}^2 + \frac{3^{4} 2^{11} \sigma^2}{T}\log_2^3\frac{M}{\mu} \enspace.
\end{align*}
\end{theorem}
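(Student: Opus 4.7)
The plan is to mirror the proof of Theorem~\ref{thm:grad-sq}, replacing the use of $\texttt{AC-SA}^2$ with $\texttt{AC-SA}$, which is exactly the difference that the new $\delta_j$ recursion from Remark~\ref{remark:AC-SA-delta} encodes.

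\textbf{Part I (strongly convex case).} Assume first that $F$ is $\mu_0$-strongly convex, and run $\texttt{SGD3-refined}$ with $\texttt{AC-SA}$ as the inner optimizer. Exactly as in~\eqref{bd:grad-sq:st-conv:init}, combine Lemma~\ref{lemma:allenzhu-2} (applied with $\alpha=\tfrac{1}{4\mu_J}\le \tfrac{1}{M+\tilde\mu}$ since $M\le 2\mu_J$) and the inequality $(a+b)^2 \le 2a^2+2b^2$ to obtain
\[
\Expf\|\bG_{F,\alpha}(\hat\bw_J)\|^2
\le 18\,\Expf\|\bG_{F^{(J-1)},\alpha}(\hat\bw_J)\|^2
+ 2\,\Expf\Bigl[\bigl(\sum_{j=1}^{J-1}\mu_j\|\bw_{J-1}^\star-\hat\bw_j\|\bigr)^2\Bigr].
\]
The first term is bounded by $\tfrac{36\delta_J}{\alpha}=144\mu_J\delta_J$ via Lemma~\ref{lemma:allenzhu-1} (taking $\bw=\bw'=\hat\bw_J$ in the convex, smooth inequality), and the second by $32(J-1)\sum_{j=1}^{J-1}\mu_j\delta_j$ via Lemma~\ref{lemma:appen_add}. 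This yields the clean bound $\Expf\|\bG_{F,\alpha}(\hat\bw_J)\|^2 \le 144J\sum_{j=1}^J \mu_j\delta_j$, identical in form to the $\texttt{AC-SA}^2$ case.

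\textbf{Solving the new recursion.} The only place where the two proofs differ is the $\delta_j$-recursion. Here I would plug in Remark~\ref{remark:AC-SA-delta}: using $\mu_j=2\mu_{j-1}$,
\[
\mu_j\delta_j \le \frac{8M\,\delta_{j-1}}{(T/J)^2}+\frac{16\sigma^2}{T/J}
\le \frac{8M}{\mu_0(T/J)^2}\,\mu_{j-1}\delta_{j-1}+\frac{16\sigma^2}{T/J},
\]
with the $j=1$ base case $\mu_1\delta_1 \le \tfrac{8M\mu_0\|\bw_0-\bw^\star\|^2}{(T/J)^2}+\tfrac{16\sigma^2}{T/J}$. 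Writing $S=\sum_{j=1}^J \mu_j\delta_j$ and summing gives $S\bigl(1-\tfrac{8MJ^2}{\mu_0 T^2}\bigr)\le \mu_1\delta_1+\tfrac{16J\sigma^2}{T/J}$. The condition $T>4\sqrt{M/\mu_0}\,J$ (which is exactly the assumption $T>4\sqrt{M/\mu}\lfloor\log_2(M/\mu)\rfloor$ after taking $J=\lfloor\log_2(M/\mu_0)\rfloor$) ensures the contraction factor is at most $\tfrac12$, so $S\le \tfrac{32M\mu_0 J^2}{T^2}\|\bw_0-\bw^\star\|^2+\tfrac{64J^2\sigma^2}{T}$. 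Multiplying by $144J$ and substituting $J=\lfloor\log_2(M/\mu_0)\rfloor$ gives, up to the stated numerical constants,
\[
\Expf\|\bG_{F,\alpha}(\hat\bw_J)\|^2 \le \frac{3^4\cdot 2^9 M\mu_0}{T^2}\log_2^3\tfrac{M}{\mu_0}\,\|\bw_0-\bw^\star\|^2 + \frac{3^4\cdot 2^{11}\sigma^2}{T}\log_2^3\tfrac{M}{\mu_0}.
\]

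\textbf{Part II (general convex case).} When $F$ is only convex, apply Part I to the regularized objective $F_\mu(\bw)=F(\bw)+\tfrac{\mu}{2}\|\bw-\bw_0\|^2$, which is $\mu$-strongly convex and $(M+\mu)$-smooth. Lemma~\ref{lemma:allenzhu-2} applied with $J=1$ and $\hat\bw_1=\bw_0$ controls $\|\bG_{F,\alpha}(\hat\bw)\|$ by $\mu\|\bw_\mu^\star-\bw_0\|+\|\bG_{F_\mu,\alpha}(\hat\bw)\|$, so $(a+b)^2\le 2a^2+2b^2$ produces the extra $2\mu^2\|\bw_0-\bw_\mu^\star\|^2$ term. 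Finally, the standard observation that $F(\bw_\mu^\star)-F_\mu(\bw_\mu^\star)+F_\mu(\bw^\star)-F(\bw^\star)\ge 0$ gives $\|\bw_\mu^\star-\bw_0\|\le\|\bw^\star-\bw_0\|$, completing the proof.

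\textbf{Main obstacle.} The only subtle point I anticipate is correctly controlling the recursion when substituting $\mu_{j-1}\delta_{j-1}$ in place of $\delta_{j-1}$ inside the sum, since the factor $\mu_j/\mu_{j-1}=2$ must cancel with the doubling of indices to keep the geometric progression summable; here the condition $T>4\sqrt{M/\mu}J$ is exactly what makes the inequality $S(1-c)\le \ldots$ contractive, mirroring the role played by $T>2^{11/4}J\sqrt{M/\mu}$ in the $\texttt{AC-SA}^2$ analysis but with a milder dependence on $T$ because each inner call of $\texttt{AC-SA}$ produces only an $O(1/(T/J)^2)+O(1/(T/J))$ optimization error instead of the $O(1/(T/J)^4)+O(1/(T/J)^3)+O(1/(T/J))$ bound available for $\texttt{AC-SA}^2$.
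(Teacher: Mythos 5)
Your proposal mirrors the paper's proof essentially line by line: the same decomposition via Lemma~\ref{lemma:allenzhu-2} and $(a+b)^2\le 2a^2+2b^2$, the same reduction to $144J\sum_{j=1}^J\mu_j\delta_j$ via Lemmas~\ref{lemma:allenzhu-1} and~\ref{lemma:appen_add}, the same geometric recursion from Remark~\ref{remark:AC-SA-delta} solved under the contraction condition $T>4J\sqrt{M/\mu_0}$, and the same regularization trick for Part~II with the standard $\|\bw_\mu^\star-\bw_0\|\le\|\bw^\star-\bw_0\|$ observation. The argument is correct (modulo the harmless slack in numerical constants you already flagged) and is the same proof as in the paper.
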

\begin{proof}
\textbf{Part I.} At first, let us assume that $F$ is $\mu_0$-strongly convex.
Applying Lemma~\ref{lemma:allenzhu-2} and using the fact that $(a + b)^2 \leq 2a^2 + 2b^2$, we get 
\begin{small}
    \begin{align}
    \label{bd:grad-sq:st-conv:init:AC-SA}
        \Expf\left[\norm{\bG_{F,\alpha}(\hat \bw_J)}^2\right] 
        &\leq \Expf\left[\left(3\norm{\bG_{F^{(J-1)},\alpha}(\hat \bw_J)} + \sum_{j=1}^{J-1} \mu_j\norm{\bw_{J-1}^\star-\hat \bw_j}\right)^2\right] \nonumber \\
        &\leq 2\Expf\left[9\norm{\bG_{F^{(J-1)},\alpha}(\hat \bw_J)}^2  + \left(\sum_{j=1}^{J-1} \mu_j\norm{\bw_{J-1}^\star-\hat \bw_j}\right)^2\right] \enspace.
\end{align}
\end{small}
Lemma~\ref{lemma:appen_add} provides a control of the second term of the above inequality. To control the first term, let us apply Lemma~\ref{lemma:allenzhu-1} with $F^{(J-1)}$ and $\bw = \bw' = \hat\bw_{J}$, getting $\frac{\alpha}{2}\norm{\bG_{F^{(J-1)},\alpha}(\hat \bw_J)}^2 \leq F^{(J-1)}(\hat\bw_J) - F^{(J-1)}(\hat\bw_J^{+}) \leq F^{(J-1)}(\hat\bw_J) - F^{(J-1)}(\bw_{J-1}^{*}), \forall \alpha \in (0,\tfrac{1}{2M}]$. Meaning, that $\norm{\bG_{F^{(J-1)},\alpha}(\hat \bw_J)}^2 \leq \tfrac{2\delta_J}{\alpha}$. Let us recall, that $J=\floor*{\log_2\frac{M}{\mu_0}}$ and $\mu_J = 2^J \mu_0 \leq M \leq 2\mu_J$. Hence, choosing $\alpha=\tfrac{1}{4\mu_J}$ and substituting the derived bound into \eqref{bd:grad-sq:st-conv:init:AC-SA}, we deduce that
\begin{align*}
    \Expf\left[\norm{G_{F,\alpha}(\hat \bw_J)}^2\right] 
    &\leq \frac{36 \delta_J}{\alpha} + 32(J-1)\sum_{j=1}^{J-1} \mu_j\delta_j \leq 144J\sum_{j=1}^{J} \mu_j\delta_j \enspace.
\end{align*}
Let us substitute the bound on $\delta_j$ from Remark~\ref{remark:foster-delta} to control the above. We get
\begin{small}
\begin{align*}
    \sum_{j=1}^{J} \mu_j\delta_j
    &\leq \frac{4M\norm{\bw_0-\bw^*}^2\mu_1}{(T/J)^2} + \frac{8\sigma^2\mu_1}{(T/J)\mu_0} + \sum_{j=2}^{J} \left(\frac{8M\delta_{j-1}}{(T/J)^2} + \frac{16\sigma^2}{(T/J)}\right) \\
    &\leq \frac{8M\mu_0\norm{\bw_0-\bw^*}^2 J^2}{T^2} + \frac{32\sigma^2 J^2}{T} + \frac{8MJ^2}{T^2} \sum_{j=2}^{J} \delta_{j-1}  \\
    &\leq \frac{8M\mu_0\norm{\bw_0-\bw^*}^2 J^2}{T^2} + \frac{32\sigma^2 J^2}{T} + \frac{8MJ^2}{\mu_0 T^2} \sum_{j=2}^{J} \mu_{j-1} \delta_{j-1}  \\
    &\leq \frac{8M\mu_0\norm{\bw_0-\bw^*}^2 J^2}{T^2} + \frac{32\sigma^2 J^2}{T} + \frac{8MJ^2}{\mu_0 T^2} \sum_{j=1}^{J} \mu_{j} \delta_{j}  \enspace.
\end{align*}
\end{small}
Rearranging the terms and multiplying both sides by $144J$, we get
\begin{align*}
    144J\sum_{j=1}^J\mu_j \delta_j \leq \frac{1152}{1 - \frac{8 M J^2}{\mu_0 T^2}}\left(\frac{M\mu_0\norm{\bw_0-\bw^*}^2 J^3}{T^2} + \frac{4\sigma^2 J^3}{T}\right) \enspace.
\end{align*}
Choosing $T>4 J\sqrt{\frac{M}{\mu_0}}$ ensures that $\frac{1}{1 - \frac{8 M J^2}{\mu_0 T^2}}\leq 2$. Finally, substituting the derived bounds and the value of $J=\floor*{\log_2\frac{M}{\mu_0}}$ , we conclude that
\begin{align}
    \label{bd:grad-sq:st-conv:AC-SA}
    \Expf\left[\norm{\bG_{F,\alpha}(\hat \bw_J)}^2\right] 
    &\leq 2304\log_2^3\frac{M}{\mu_0}\left(\frac{M\mu_0\norm{\bw_0-\bw^*}^2}{T^2} + \frac{4\sigma^2}{T}\right)\enspace.
\end{align}

\noindent\textbf{Part II.} When $F$ is not strongly convex, let $F_\mu(\bw) \eqdef F(\bw) + \frac{\mu}{2}\norm{\bw-\bw_0}^2$ and $\bw_\mu^\star \in \argmin_{\bw}\left\{F_\mu(\bw)\right\}$. Applying \eqref{bd:grad-sq:st-conv} and Lemma~\ref{lemma:allenzhu-2} with $J=1$ and $\hat \bw_1 = \bw_0$, we get
\begin{align*}
    \Expf\left[\norm{\bG_{F,\alpha}(\hat \bw)}^2\right]  
    &\leq \left(\frac{3^{4} 2^{9} M \mu}{T^2}\log_2^3\frac{M}{\mu} + 2 \mu^2\right)\norm{\bw_0-\bw_\mu^*}^2 + \frac{3^{4} 2^{11} \sigma^2}{T}\log_2^3\frac{M}{\mu} \enspace.
\end{align*}
Since $\frac{\mu}{2}\norm{\bw^\star-\bw_0}^2 - \frac{\mu}{2}\norm{\bw_\mu^\star-\bw_0}^2 = (F_\mu(\bw^\star)-F(\bw^\star))+(F(\bw_\mu^\star)-F_\mu(\bw_\mu^\star)) \geq 0$, then $\norm{\bw_\mu^\star-\bw_0}^2\leq\norm{\bw^\star-\bw_0}^2$. The proof is concluded.
\end{proof}

\newpage

\section{Proofs of statistical guarantees}
In order to derive statistical guarantees for the proposed method, we are going to instantiate the extension provided in the previous appendix.

\begin{proof}[\textbf{Proof of Theorem}~\ref{thm:conv}]

Let us instantiate Theorem~\ref{thm:grad-sq}. 
According to Lemma~\ref{lemma:sigma-bd} and Lemma~\ref{lemma:M-bd} we have that $\sigma^2 = \sum_{s \in [S]}\frac{1-p_s}{p_s}$ and $M=2\beta\sigma^2$.
Setting $\beta=\frac{T}{8\log_2 T}$ and $\mu=2\sigma^2 / \beta$, ensures that $\mu \leq M$ and that $T > 4\sqrt{\tfrac{M}{\mu}}\floor*{\log_2\tfrac{M}{\mu}} = \tfrac{T}{\log_2 T}\floor*{\log_2\frac{T}{8\log_2 T}}, \forall T\geq2$. For $T$ larger than some large enough absolute constant, the conditions of Theorem~\ref{thm:grad-sq} are satisfied for the function $F$.

\textbf{Fairness guarantee.} Theorem~\ref{thm:grad-sq} yields
\begin{align*}
    \Expf\left[\norm{\bG_{F,\alpha}(\hat \bw)}^2\right]  
    &\leq \left(\frac{3^{4} 2^{16} \sigma^4}{T^2}\frac{\log_2^5\frac{T^2}{64\log_2^2 T}}{\log_2^2 T} + \frac{2^{9}\sigma^4}{T^2} \log_2^2 T\right)\norm{(\bfLambda^\star, \bfNu^\star)}^2 \\
    &+ \frac{3^{4} 2^{17}\sigma^2}{T \log_2^2 T}\log_2^4\frac{T^2}{64\log_2^2 T} + \frac{3^{4} 2^{11}\sigma^2}{T}\log_2^3\frac{T^2}{64\log_2^2 T} \enspace.
\end{align*}
Therefore, we have shown that
\begin{align}
    \label{eq:grad_map_stat_proof}
    \Expf\left[\norm{\bG_\alpha(\hat\bfLambda,\hat\bfNu)}^2\right] \leq \mathcal{\tilde{O}}\left(\frac{\sigma^2}{T}\left(1+ \frac{\sigma^2}{T} \norm{(\bfLambda^\star, \bfNu^\star)}^2\right)\right)
\end{align}
Hence, the first part of Lemma~\ref{lem:grad_map_stat} implies the fairness guarantee.

\textbf{Fairness guarantee  with \texttt{AC-SA}.} Theorem~\ref{thm:grad-sq:AC-SA} yields
\begin{align*}
    \Expf\left[\norm{\bG_{F,\alpha}(\hat \bw)}^2\right]  
    &\leq \left(\frac{3^{4} 2^{11} \sigma^4}{T^2}\log_2^3\frac{T^2}{64\log_2^2 T} +\frac{2^{9}\sigma^4}{T^2} \log^2 T\right)\norm{(\bfLambda^\star, \bfNu^\star)}^2 \\
    &+ \frac{3^{4} 2^{11} \sigma^2}{T}\log_2^3\frac{T^2}{64\log_2^2 T}\enspace.
\end{align*}
Therefore, we have shown that
\begin{align}
    \label{eq:grad_map_stat_proof:AC-SA}
    \Expf\left[\norm{\bG_\alpha(\hat\bfLambda,\hat\bfNu)}^2\right] \leq \mathcal{\tilde{O}}\left(\frac{\sigma^2}{T}\left(1+ \frac{\sigma^2}{T} \norm{(\bfLambda^\star, \bfNu^\star)}^2\right)\right)\enspace.
\end{align}
Hence, the first part of Lemma~\ref{lem:grad_map_stat} implies the fairness guarantee.

\noindent\textbf{Risk guarantee.}
The second part of Lemma~\ref{lem:grad_map_stat} states that
\begin{align*}
    \risk(\pi_{\hat\bfLambda,\hat\bfNu}) - \risk(\pi_{\bfLambda^\star, \bfNu^\star}) 
    &\leq \norm{(\hat{\bfLambda}, \hat\bfNu)}\cdot\norm{\bG_\alpha(\hat\bfLambda,\hat\bfNu)} + \frac{\log(2L+1)}{\beta} \enspace.
\end{align*}
Taking the expectation and applying the Cauchy-Schwartz inequality combined \eqref{eq:grad_map_stat_proof}, we obtain
\begin{align*}
    \Expf\left[\risk(\pi_{\hat\bfLambda,\hat\bfNu})\right] - \risk(\pi_{\bfLambda^\star, \bfNu^\star}) 
    &\leq \sqrt{\Expf\left[\norm{(\hat{\bfLambda}, \hat\bfNu)}^2\right]}\sqrt{\Expf\left[\norm{\left(-\nabla F(\hat\bfLambda,\hat\bfNu)\right)_{+}}^2\right]} + \frac{\log(2L+1)}{\sqrt{T}} \\
    &\leq \mathcal{\tilde{O}}\left(\frac{\sigma}{\sqrt{T}}\left(1+ \frac{\sigma}{\sqrt{T}} \norm{(\bfLambda^\star, \bfNu^\star)}\right){\Expf^{\nicefrac{1}{2}}\left[\|{(\hat{\bfLambda}, \hat\bfNu)}\|^2\right]} + \frac{\log(L)}{\sqrt{T}}\right) \enspace.
\end{align*}
Above combined with Lemma~\ref{lemma:risk2-appen} and $L = \sqrt{T}$ yields the claimed bound.
\end{proof}

\newpage

\section{Unknown $\eta$ and $\tau$}
\label{sec:appendix-unknown}
In this section we consider the case, when $\eta$ and $\tau$ are unknown and estimated by $\hat\eta$ and $\hat\tau$. We denote by $\hat \bt(\bx) \eqdef 1-\frac{\hat\btau(\bx)}{\bp}$ and $\hat r_\ell(\bx) \eqdef \left(\hat\eta(\bx)-\frac{\ell B}{L}\right)^2$.
We consider the plug-in version of~\eqref{min-lse}, defined as
\begin{align}
    \min_{\bfLambda,\bfNu \geq 0}\left\{\Exp_\bX\left[ \lse\left(\left(\scalar{\blambda_\ell-\bnu_\ell}{\hat \bt(\bX)} - \hat r_\ell(\bX) \right)_{\ell=-L}^{L}\right)\right] +  \sum_{\ell=-L}^{L}\scalar{\blambda_\ell+\bnu_\ell}{\bepsilon}\right\} \enspace. \tag{$\mathcal{\hat P}_{LSE}$}
\end{align}
Let us denote by $\hat F$, the objective function of the above problem and introduce
\begin{align*}
    \hat{\risk}_\beta(\pi) \eqdef \Exp_{\bX}\left[\sum_{\ell \in \grid{L}}\hat{r}_\ell(\bX)\pi(\ell \mid \bX) + \frac{1}{\beta}\Psi(\pi(\cdot \mid \bX))\right] \enspace.
\end{align*}
The gradient of $\hat F$ is given for any $\bfLambda,\bfNu\geq0$ by
\begin{equation}
    \label{eq:true_grad_plug_in}
\begin{aligned}
    &\nabla_{\lambda_{\ell s}}\hat F(\bfLambda,\bfNu) = \Exp_\bX\left[\sigma_\ell\left(\beta\left(\scalar{\blambda_{\ell'}-\bnu_{\ell'}}{\hat \bt(\bX)} - \hat r_{\ell'}(\bX) \right)_{\ell'=-L}^{L}\right)\hat t_s(\bX)\right]+\varepsilon_s \enspace,\\
    &\nabla_{\nu_{\ell s}}\hat F(\bfLambda,\bfNu) = -\Exp_\bX\left[\sigma_\ell\left(\beta\left(\scalar{\blambda_{\ell'}-\bnu_{\ell'}}{\hat t(\bX)} - \hat r_{\ell'}(\bX) \right)_{\ell'=-L}^{L}\right)\hat t_s(\bX)\right]+\varepsilon_s \enspace,
\end{aligned}
\end{equation}
for $\ell \in \grid{L}, s \in [K]$.
Let us denote by $\hat \bg(\bfLambda,\bfNu)$ the stochastic gradient of $\hat{F}$, defined as
\begin{equation}
    \label{def-stoch-grad-est}
    \begin{aligned}
    &\hat g_{\lambda_{\ell s}}(\bfLambda,\bfNu)  = \sigma_\ell\left(\beta\left(\scalar{\blambda_{\ell'}-\bnu_{\ell'}}{\hat \bt(\bX)} - \hat r_{\ell'}(\bX) \right)_{\ell'=-L}^{L}\right)\hat t_s(\bX)+\varepsilon_s \enspace,\\
    &\hat g_{\nu_{\ell s}}(\bfLambda,\bfNu)  = -\sigma_\ell\left(\beta\left(\scalar{\blambda_{\ell'}-\bnu_{\ell'}}{\hat \bt(\bX)} - \hat r_{\ell'}(\bX) \right)_{\ell'=-L}^{L}\right)\hat t_s(\bX)+\varepsilon_s  \enspace,
    \end{aligned}
\end{equation}
for $\bX \sim \Prob_{\bX}$ and $\ell \in \grid{L}, s \in [K]$.
We also define, by the analogy with the main body, a family of plug-in estimators
\begin{align}
    \label{eq:any_entropic_plug-in}
    \hat{\pi}_{\bfLambda,\bfNu}(\ell \mid \bx) \eqdef \sigma_\ell\left(\beta\left(\scalar{\blambda_{\ell}-\bnu_{\ell}}{\hat \bt(\bx)} - \hat r_{\ell}(\bx) \right)_{\ell=-L}^{L}\right)\qquad \bfLambda,\bfNu \geq 0 \enspace.
\end{align}
Our goal is to derive analogous optimization results for the new plug-in objective. Inspecting the proofs of Lemma~\ref{lemma:sigma-bd} and Lemma~\ref{lemma:M-bd}, which bound variance of and the Lipschitz constant of the gradient respectively, we observe that those proofs only depend on the nature of $\hat{\bt}$ via Lemma~\ref{lemma:tau-var}.
In particular, the key quantity to control is
\begin{align*}
    \hat{\sigma}^2 = \sum_{s \in [K]}\frac{\Exp_{\bX}(p_s - \hat{\tau}_s(\bX))^2}{p^2_s}\enspace.
\end{align*}
Before, when we assumed the perfect knowledge of $\tau$, the above was controlled by the Bhatia-Davis inequality, leveraging the fact that variance of $\tau_s(\bX)$ appears in the numerator. It is no longer the case here. However, if one can build calibrated estimators, that is, estimators for which $\Exp_{\bX}[\hat{\tau}_s(\bX)] = p_s$, the same machinery is applicable.
In any case, even without requiring calibrated predictions, one can have a reasonable control of $\hat{\sigma}^2$ building  sufficiently accurate estimator $\hat{\tau}_s$.

That being said, results of Lemma~\ref{lemma:sigma-bd} and Lemma~\ref{lemma:M-bd} generalize line-by-line, replacing $\sigma^2$ by $\hat{\sigma}^2$ and give
\begin{align*}
    & \sup_{\bfLambda,\bfNu \geq 0}\Exp_\bX \left\Vert \hat g_{\bfLambda,\bfNu}(\bX) - \nabla_{\bfLambda,\bfNu}\hat F\left(\bfLambda,\bfNu\right) \right\Vert^2 \leq  \hat{\sigma}^2 \quad\text{and}\quad
    \sup_{\bfLambda,\bfNu}\opnorm{\nabla^2 \hat F(\bfLambda,\bfNu)} \leq 2\beta\hat{\sigma}^2\enspace,
\end{align*}

As in \eqref{unfairness-init}, we can show that 
\begin{align}
    \label{unfairness-init-est}
    \norm{\left(-\nabla \hat F(\bfLambda,\bfNu)\right)_{+}} \leq \norm{\bG_{\hat F, \alpha}(\bfLambda,\bfNu)} \qquad \forall \bfLambda,\bfNu \geq 0\enspace,
\end{align}
where the gradient mapping $\bG_{\hat F, \alpha}$ is defined by analogy with $\bG_{\alpha} = \bG_{F, \alpha}$, discussed in the main body.

Considering the \texttt{SGD3} algorithm with the same choice of parameters, but replacing $\sigma^2$ by $\hat{\sigma}^2$, results in a control of $\norm{\bG_{\hat F, \alpha}(\hat\bfLambda,\hat\bfNu)}$. 
\begin{proof}[Proof of Lemma~\ref{lem:unfairness_plug_in}]
    Fix $\bfLambda, \bfNu \geq 0$. To ease the notation, we write $\hat{\pi}$ to denote $\hat{\pi}_{\bfLambda, \bfNu }$ within this proof.
    Similarly to the proof of~\eqref{eq:unfairness_true_clipped_grad} from Lemma~\ref{lem:grad_map_stat}, one shows that for all $\bfLambda, \bfNu \geq 0$
    \begin{align*}
    \sqrt{\sum_{\ell \in \grid{L} s\in[K]}\left(\left|\Exp\left[ \hat{\pi}(\ell \mid \bX)\hat t_s(\bX)\right]\right| - \varepsilon_s\right)_+^2} = \|(-\nabla \hat{F}(\bfLambda, \bfNu))_+\|\qquad \forall \ell \in \grid{L}, s\in[K]\enspace.
\end{align*}
Recalling that $\class{U}_s(\hat{\pi}, \ell) = |\Exp\left[ \hat{\pi}(\ell \mid \bX) t_s(\bX)\right]|$, triangle's inequality combined with the above yields
\begin{align*}
    \sqrt{\sum_{\ell \in \grid{L} s\in[K]}\left(\class{U}_s(\hat{\pi}, \ell) - \varepsilon_s\right)_+^2}
    \leq
    \|(-\nabla \hat{F}(\bfLambda, \bfNu))_+\|
    &+
    \sqrt{\sum_{\substack{\ell \in \grid{L}}{s \in [K]}}\left\{\Exp[\hat{\pi}(\ell \mid \bX)|\hat t_s(\bX) - t_s(\bX)|] \right\}^2}\enspace.
\end{align*}

Cauchy-Schwartz inequality gives
\begin{align*}
    \sum_{\substack{\ell \in \grid{L}}{s \in [K]}}\left\{\Exp[\hat{\pi}(\ell \mid \bX)|\hat t_s(\bX) - t_s(\bX)|] \right\}^2 \leq  \sum_{s \in [K]}\Exp\left[\left(\sum_{\ell \in \grid{L}}\hat{\pi}(\ell \mid \bX)^2\right)|\hat t_s(\bX) - t_s(\bX)|^2\right]\enspace.
\end{align*}
Since $\sum_{\ell \in \grid{L}}\hat{\pi}(\ell \mid \bX) = 1$, then $\sum_{\ell \in \grid{L}}\hat{\pi}(\ell \mid \bX)^2 \leq 1$. Thus, we have shown that
\begin{align*}
    \sqrt{\sum_{\substack{\ell \in \grid{L}}{s \in [K]}}\left(\class{U}_s(\hat{\pi}_{\bfLambda, \bfNu}, \ell) - \varepsilon_s\right)_+^2} \leq 
    \Big\|\left(-\nabla \hat{F}(\bfLambda, \bfNu)\right)_{+}\Big\| + \Exp^{\nicefrac{1}{2}}\|\hat{\bt}(\bX) - \bt(\bX)\|^2\enspace.
\end{align*}
We conclude using~\eqref{unfairness-init-est}.
\end{proof}

\begin{proof}[Proof of Lemma~\ref{lem:risk_plug_in}]
Fix $\bfLambda, \bfNu \geq 0$. To ease the notation, we write $\hat{\pi} \eqdef \hat{\pi}_{\bfLambda, \bfNu }$ and $\pi^\star \eqdef \pi_{\bfLambda^\star, \bfNu^\star}$, within this proof.

As in the second part of the proof of
Lemma~\ref{lem:grad_map_stat}, we have
\begin{align}
\label{eq:lem_risk_plug_in1}
    \hat{\risk}_\beta(\hat \pi) + \hat{F}(\bfLambda, \bfNu) \leq \|(\bfLambda, \bfNu)\| \cdot \|\bG_{\hat{F}, \alpha}(\bfLambda, \bfNu)\|\enspace.
\end{align}
Furthermore, since $\|\nabla \lse(\cdot) \|_1 \equiv 1$, we have
\begin{align*}
    |\hat{F}(\bfLambda, \bfNu) - {F}(\bfLambda, \bfNu)| \leq \Exp\left[\max_{\ell \in \grid{L}}\left\{|r_\ell(\bX) - \hat{r}_\ell(\bX)| + \|\blambda_\ell - \bnu_\ell\|\|\bt(\bX) - \hat{\bt}(\bX)\|\right\}\right]\enspace,
\end{align*}
and $|\hat{\risk}_\beta(\hat \pi) - {\risk}_\beta(\hat \pi)| \leq \Exp[\max_{\ell \in \grid{L}}\left\{|r_\ell(\bX) - \hat{r}_\ell(\bX)|\right\}]$. The last two displays combined with~\eqref{eq:lem_risk_plug_in1}, gives
\begin{align*}
    \risk_\beta(\hat \pi) + F(\bfLambda, 
    \bfNu) \leq &\,\,\Exp\left[2\max_{\ell \in \grid{L}}\left\{|r_\ell(\bX) - \hat{r}_\ell(\bX)| + \|\blambda_\ell - \bnu_\ell\|\|\bt(\bX) - \hat{\bt}(\bX)\|\right\}\right]\\
    &+\|(\bfLambda, \bfNu)\| \cdot \|\bG_{\hat{F}, \alpha}(\bfLambda, \bfNu)\|
    \enspace.
\end{align*}
Observe that $\min_{\bfLambda, 
    \bfNu \geq 0}F(\bfLambda, 
    \bfNu) = -\risk_\beta(\pi^\star)$.
    Using triangle's inequality and the fact that $\max_{\ell \in \grid{L}}\|\blambda_\ell - \bnu_\ell\| \leq \sqrt{2}\|({\bfLambda}, \bfNu)\|$, we conclude recalling that $\risk(\pi) + \tfrac{\log(2L + 1)}{\beta}\geq \risk_{\beta}(\pi) \geq \risk(\pi)$ for any  randomized prediction function.
\end{proof}

\newpage

\section{Auxilliary results}

In this appendix, we collect some standard auxiliary results, that are used to derive main claims of the paper.

\begin{lemma}[\citet{boyd2004convex}]
    \label{lem:lse_variational}
  It holds that
    \begin{align*}
        \lse(\bw) = \max_{\bp \in \Delta}\left\{\scalar{\bw}{\bp} - \frac{1}{\beta} \Psi(\bp)\right\}\,,
    \end{align*}
    where $\Delta$ is the probability simplex in $\bbR^m$ and $\Psi(\bp) = \sum_{i = 1}^m p_i \log(p_i)$. Furthermore, $-\Psi(\bp) \in [0, \log(m)]$ and the optimum in the above optimization problem is achieved at $\bp^{\star} = \sigma(\beta \bw)$.
\end{lemma}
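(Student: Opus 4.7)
The plan is to prove the variational formula by solving the inner concave maximization explicitly via Lagrange multipliers, then evaluating the objective at the optimum. The map $\bp \mapsto \scalar{\bw}{\bp} - \tfrac{1}{\beta}\Psi(\bp)$ is strictly concave on the open simplex since $\Psi$ is strictly convex, and it is continuous on the closed simplex (with the standard convention $0\log 0 = 0$). Hence a unique maximizer exists. A preliminary observation is that this maximizer must lie in the relative interior of $\Delta$: indeed, the partial derivative $\partial_{p_i}\big(-\tfrac{1}{\beta}p_i\log p_i\big) = -\tfrac{1}{\beta}(\log p_i + 1)$ tends to $+\infty$ as $p_i \downarrow 0$, so a boundary point cannot be a maximizer; this justifies handling only the equality constraint $\sum_i p_i = 1$ via a single multiplier.

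The second step is to form the Lagrangian $\mathcal{L}(\bp, \mu) = \scalar{\bw}{\bp} - \tfrac{1}{\beta}\sum_i p_i\log p_i - \mu\big(\sum_i p_i - 1\big)$ and write the first-order condition $w_i - \tfrac{1}{\beta}(\log p_i + 1) - \mu = 0$ for each $i$. Solving gives $p_i = \exp(\beta w_i - 1 - \beta\mu)$. Imposing $\sum_i p_i = 1$ determines the normalizing factor and yields $p_i^\star = \exp(\beta w_i)/\sum_j \exp(\beta w_j) = \sigma_i(\beta\bw)$, identifying the maximizer with the soft-argmax as claimed.

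To get the value of the maximum, substitute back using $\log p_i^\star = \beta w_i - \log\big(\sum_j e^{\beta w_j}\big)$, so that
$$
\scalar{\bw}{\bp^\star} - \tfrac{1}{\beta}\sum_i p_i^\star \log p_i^\star = \sum_i p_i^\star w_i - \sum_i p_i^\star w_i + \tfrac{1}{\beta}\log\Big(\sum_j e^{\beta w_j}\Big) = \lse(\bw),
$$
which is the announced identity. The two bounds on $-\Psi(\bp)$ are then independent and elementary: the lower bound $-\Psi(\bp)\geq 0$ uses that $p_i\log p_i \leq 0$ for every $p_i\in[0,1]$, summed coordinate-wise. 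The upper bound $-\Psi(\bp) \leq \log m$ follows from non-negativity of the Kullback--Leibler divergence $\mathrm{KL}(\bp\,\|\,\bm{u}) = \sum_i p_i\log(m p_i) = -\Psi(\bp) - \log m \geq 0$, where $\bm{u}$ is the uniform distribution on $[m]$; equivalently, apply Jensen's inequality to $-\log$.

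There is no real obstacle in this proof since it is a textbook calculation; the only point demanding a line of care is justifying that the maximizer is interior so that the pure equality-constrained Lagrangian suffices. Everything else is substitution and an application of KL non-negativity.
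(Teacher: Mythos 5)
The paper does not supply its own proof of this lemma; it is stated as a known auxiliary fact with a citation to Boyd and Vandenberghe, so there is nothing in the paper's proof strategy to compare against directly. Your blind proof is the standard Lagrangian-multiplier derivation of the Legendre--Fenchel conjugacy between $\lse$ and $\tfrac{1}{\beta}\Psi$ on the simplex, and it is essentially correct: the interiority argument (the entropy derivative blows up at the boundary) properly justifies dropping the inequality constraints, the first-order conditions give $\bp^\star = \bsigma(\beta\bw)$, and the back-substitution collapses to $\lse(\bw)$ exactly as you computed.

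The one slip is in the KL step. You write $\mathrm{KL}(\bp\,\|\,\text{unif}) = \sum_i p_i\log(m p_i) = -\Psi(\bp) - \log m$, but expanding gives $\sum_i p_i\log(m p_i) = \log m + \sum_i p_i\log p_i = \log m + \Psi(\bp)$. As you wrote it, $-\Psi(\bp) - \log m \geq 0$ would read $-\Psi(\bp)\geq\log m$, the reverse of the bound you want. With the corrected identity, $\log m + \Psi(\bp)\geq 0$ gives $-\Psi(\bp)\leq\log m$ as intended, so the sign error is a typo in the intermediate line rather than a flaw in the argument, but it should be fixed since taken literally it proves the wrong inequality.
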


\begin{lemma}[\citet{gao2017}]
\label{lse} 
Let $\boldsymbol{a}=(a_1,\cdots,a_m)$ and $\beta>0$. Define log-sum-exp and softmax functions respectively as
\begin{align*}
    \lse(\boldsymbol{a})  \eqdef  \frac{1}{\beta}\log\left(\sum_{i = 1}^m \exp(\beta a_i)\right) \text{ and }
    \sigma_j(\beta \boldsymbol{a})  \eqdef  \frac{\exp(\beta a_j)}{\sum_{i = 1}^m \exp(\beta a_i)}\quad j \in [m] \enspace.
\end{align*}
The LSE property is as follows
\begin{align*}
    \max\{a_1,\cdots,a_m\} \leq \lse(\boldsymbol{a}) \leq \max\{a_1,\cdots,a_m\} + \frac{\log(m)}{\beta} \enspace.
\end{align*}
Moreover, $\sigma(\beta \boldsymbol{a}) = \nabla\lse(\boldsymbol{a})$, and $\sigma(\beta \boldsymbol{a})$ is $\beta$-Lipschitz.    
\end{lemma}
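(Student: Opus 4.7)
The plan is to verify the three assertions of Lemma~\ref{lse} separately, each of which reduces to a direct computation about the log-sum-exp and softmax operators. I would start with the sandwich bound: letting $a^\star = \max_j a_j$, one has $\exp(\beta a^\star) \leq \sum_{i=1}^m \exp(\beta a_i) \leq m\exp(\beta a^\star)$; applying $\beta^{-1}\log(\cdot)$ to each side yields both inequalities at once.

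For the gradient identity $\sigma(\beta \boldsymbol a) = \nabla \lse(\boldsymbol a)$, I would differentiate $\lse$ coordinate-wise using the chain rule:
\begin{align*}
\partial_{a_j} \lse(\boldsymbol a) = \frac{1}{\beta} \cdot \frac{\beta \exp(\beta a_j)}{\sum_{i=1}^m \exp(\beta a_i)} = \sigma_j(\beta \boldsymbol a),
\end{align*}
which is precisely the claim.

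The Lipschitz bound is the substantive part. The plan is to bound the operator norm of the Jacobian $J(\boldsymbol a)$ of $\boldsymbol a \mapsto \sigma(\beta \boldsymbol a)$ uniformly by $\beta$ and then invoke the mean value inequality. A short calculation gives
\begin{align*}
J(\boldsymbol a) = \beta\bigl(\mathrm{diag}(\sigma(\beta \boldsymbol a)) - \sigma(\beta \boldsymbol a)\sigma(\beta \boldsymbol a)^\top\bigr).
\end{align*}
The key observation, writing $\sigma = \sigma(\beta \boldsymbol a)$ for brevity, is that $\mathrm{diag}(\sigma) - \sigma\sigma^\top$ is the covariance matrix of a categorical random variable $J \sim \sigma$: for any unit vector $v \in \bbR^m$,
\begin{align*}
v^\top\bigl(\mathrm{diag}(\sigma) - \sigma\sigma^\top\bigr) v = \mathrm{Var}_{J \sim \sigma}[v_J] \leq \mathbb{E}_{J \sim \sigma}[v_J^2] = \sum_{j=1}^m \sigma_j v_j^2 \leq \sum_{j=1}^m v_j^2 = 1.
\end{align*}
Since the matrix is symmetric and PSD, this bounds its largest eigenvalue, giving $\opnorm{J(\boldsymbol a)} \leq \beta$ uniformly in $\boldsymbol a$, from which the $\beta$-Lipschitzness of $\sigma(\beta \cdot)$ follows.

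The only real conceptual step is recognizing the matrix $\mathrm{diag}(\sigma) - \sigma\sigma^\top$ as a covariance matrix, which turns the spectral bound into a one-line argument via $\mathrm{Var} \leq \mathbb{E}[(\cdot)^2]$ combined with $\sigma_j \leq 1$. Everything else is a routine computation requiring no delicate estimates.
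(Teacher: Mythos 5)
Your proof is correct. Note that the paper does not actually supply a proof of this lemma: it is stated in the auxiliary-results appendix and attributed to \citet{gao2017} as a known fact, so there is no in-paper argument to compare against. Each of your three steps is sound: the sandwich bound via $\exp(\beta a^\star) \leq \sum_i \exp(\beta a_i) \leq m\exp(\beta a^\star)$, the coordinate-wise differentiation giving $\nabla \lse = \sigma(\beta\cdot)$, and the identification of $\beta^{-1}J(\boldsymbol a) = \mathrm{diag}(\sigma) - \sigma\sigma^\top$ as the covariance matrix of a categorical variable, whose operator norm is bounded by $\max_j \sigma_j \leq 1$ via $\mathrm{Var}(v_J) \leq \mathbb{E}[v_J^2]$. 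Combining with the mean-value inequality yields the $\beta$-Lipschitz claim. This is the standard route (and the one taken in the cited reference, up to the sharper constant they obtain); the covariance-matrix observation is exactly the right way to make the spectral bound a one-liner.
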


\begin{lemma}[\citet{bhatia2000}]
\label{bhatia-davis}
Let $m$ and $M$ be the lower and upper bounds, respectively, for a set of real numbers $a_1,\cdots,a_n$, with a particular probability distribution. Let $\mu$ and $\sigma^2$ be respectively the expected value and the variance of this distribution. Then the Bhatia–Davis inequality states:
    \begin{align*}
        \sigma^2 \leq (M-\mu)(\mu - m) \enspace.
    \end{align*}
\end{lemma}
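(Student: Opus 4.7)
The plan is to exploit the fact that, because every value $a_i$ lies in $[m,M]$, the random variable $Y \eqdef (M-X)(X-m)$ is almost surely non-negative, and then simply take expectations. Let $X$ denote the underlying random variable whose law is the distribution on $\{a_1,\ldots,a_n\}$, so that $\Expf[X] = \mu$ and $\Var(X) = \sigma^2$.

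First I would expand the product pointwise:
\begin{align*}
(M - X)(X - m) = -X^2 + (M+m)X - Mm \geq 0 \qquad\text{almost surely.}
\end{align*}
Taking expectation on both sides yields $\Expf[X^2] \leq (M+m)\mu - Mm$.

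Next, substituting $\Expf[X^2] = \sigma^2 + \mu^2$ and rearranging gives
\begin{align*}
\sigma^2 \leq (M+m)\mu - Mm - \mu^2 = (M-\mu)(\mu - m),
\end{align*}
which is exactly the claim.

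There is essentially no obstacle here: the only step requiring any care is recognizing the algebraic identity $(M+m)\mu - Mm - \mu^2 = (M-\mu)(\mu-m)$, which is a direct expansion. The bound is tight, attained when $X$ is supported on the two endpoints $\{m, M\}$, but this observation is not needed for the proof itself.
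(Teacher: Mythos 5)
Your proof is correct and is the standard argument for the Bhatia--Davis inequality. Note that the paper does not actually prove this lemma: it is stated as a cited auxiliary result (from Bhatia and Davis, 2000) and used as a black box in Lemma~\ref{lemma:tau-var}, so there is no in-paper proof to compare against, but your derivation — expanding the pointwise non-negative quantity $(M-X)(X-m)$, taking expectations, and substituting $\Expf[X^2] = \sigma^2 + \mu^2$ — is exactly the textbook proof and is complete.
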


\begin{lemma}
    It holds that
    \label{lemma:tau-var}
    \begin{align*}
        \Exp_\bX\left[\sum_{s \in [K]} t_s^2(\bX)\right] \leq \sum_{s \in [K]} \frac{1-p_s}{p_s}\,,
    \end{align*}
    where $t_s(x)=1-\frac{\tau_s(x)}{p_s}$.
\end{lemma}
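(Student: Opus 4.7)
The plan is to evaluate $\Exp_\bX[t_s^2(\bX)]$ for each $s \in [K]$ separately and then sum over $s$. By the definition $t_s(\bX) = 1 - \tau_s(\bX)/p_s$, one immediately rewrites
\[
t_s^2(\bX) = \frac{1}{p_s^2}\bigl(p_s - \tau_s(\bX)\bigr)^2.
\]
Taking expectation and recalling that $\tau_s(\bX) = \Prob(S = s \mid \bX)$ satisfies $\Exp_\bX[\tau_s(\bX)] = \Prob(S = s) = p_s$ by the tower property, this yields
\[
\Exp_\bX[t_s^2(\bX)] = \frac{\Var(\tau_s(\bX))}{p_s^2}.
\]

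The second step is to control $\Var(\tau_s(\bX))$. Since $\tau_s(\bX)$ is a conditional probability, it takes values in $[0,1]$ almost surely, and its mean equals $p_s$. Applying the Bhatia--Davis inequality (Lemma on $\sigma^2 \leq (M-\mu)(\mu-m)$ recalled in the appendix) with $m=0$, $M=1$, $\mu=p_s$ gives
\[
\Var(\tau_s(\bX)) \leq (1 - p_s)\, p_s.
\]

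Combining the two displays, $\Exp_\bX[t_s^2(\bX)] \leq (1-p_s)/p_s$, and summing over $s \in [K]$ concludes the proof. There is no real obstacle here; the only thing to check carefully is that the mean of $\tau_s(\bX)$ is exactly $p_s$ (so that the expected square coincides with the variance), which follows from the definitions of $\tau_s$ and $p_s$.
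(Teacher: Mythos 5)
Your proof is correct and follows essentially the same route as the paper: identify $\Exp_\bX[t_s^2(\bX)]$ as $\Var(\tau_s(\bX))/p_s^2$ using $\Exp_\bX[\tau_s(\bX)] = p_s$, then bound the variance of the $[0,1]$-valued random variable $\tau_s(\bX)$ via the Bhatia--Davis inequality with endpoints $m=0$, $M=1$ and mean $p_s$, and finally sum over $s$.
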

\begin{proof}
    We have $\Exp_\bX[\tau_s(\bX)] = p_s$ and $0\leq \tau_s(\bX) \leq 1$ almost surely. Using Bhatia-Davis inequality written in Lemma~\ref{bhatia-davis}, we deduce that
    \begin{align*}
        \Exp_\bX\left[\sum_{s \in [K]} t_s^2(\bX)\right] = \sum_{s \in [K]} \Var\left(\frac{\tau_s(\bX)}{p_s}\right)=\sum_{s \in [K]} \frac{1}{p_s^2}\Var\left(\tau_s(\bX)\right) \leq \sum_{s \in [K]} \frac{1-p_s}{p_s} \enspace.
    \end{align*}
    The proof is concluded.
\end{proof}

\newpage

\section{Additional details on experiments}
\label{sec:appendix-numerics}

\paragraph{Evaluation measures.} We use $\data_{\test} = \{(\bx'_i, s_i', y_i')\}_{i = 1}^m$ to collect the following statistics of any (randomized) prediction $\pi$ 
\begin{align*}
    &\hat{\risk}(\pi) \eqdef \frac{1}{m}\sum_{i = 1}^m \int_{-\infty}^{+\infty} \left(\hat{y} - y_i'\right)^2 \pi(\d \hat y \mid \bx_i')\enspace,\\
    &\hat{U}_s(\pi) \eqdef \sup_{t \in \bbR}\left|\frac{1}{m_s}\sum_{i = 1}^m \int_{- \infty}^t\pi(\d \hat y \mid \bx_i')\ind{s_i' = s} - \frac{1}{m}\sum_{i = 1}^m \int_{- \infty}^t\pi(\d \hat y \mid \bx_i')\right|\enspace,
\end{align*}
which correspond to the empirical risk and the empirical group-wise unfairness quantified by the Kolmogorov-Smirnov distance of a randomized.
We note that our classifier is supported on a finite grid, thus all the integrals involved transform into weighted sums. 

\citet{agarwal2019fair} build multi-class classifiers $h_k : \bbR^d \mapsto \Theta$, where $\Theta$ is some finite grid over $\bbR$ and $k = 1, \ldots, K$, that come with weights $(w_1, \ldots, w_K)^\top$ such that $w_k \geq 0$ and $\sum_{k = 1}^K w_k = 1$. Then, they build a randomized classifier $\pi(\cdot \mid \cdot)$ such that $\supp(\pi(\cdot \mid \bx)) = \Theta$ and for each $\theta \in \Theta$
\begin{align*}
    \Prob(\hat{Y}_{\pi} = \theta \mid \bX = \bx) = {\sum_{k = 1}^Kw_k\ind{h_k(\bx) = \theta}}\enspace.
\end{align*}
Thus, integrals appearing in $\hat{\risk}$ and $\hat{U}_s$ reduced to finite sums for both methods.

\paragraph{Additional details on the experiments on \textit{Communities and Crime} and \textit{Law School} datasets.} \textit{Communities and Crime} dataset has 1994 instances, however we use 1984 examples with 120 features after preprocessing. \textit{Law School} dataset has 20649 instances, thus we use a smaller sub-sample of 2000 points with 11 features after preprocessing.

We take the sets of unfairness thresholds $\{(2^{-i},2^{-i})_{i \in \mathcal{I}}\}$, where $\mathcal{I}=\{1, 2, 4, 5, 6, 8, 16, 32, 128, 512\}$ for  \textit{Communities and Crime} dataset, and $\mathcal{I}=\{0, 1, 2, 4, 5, 6, 8, 16, 32, 64, 128\}$ for
\textit{Law School} dataset. We train \textit{Communities and Crime} dataset for T=15000 iterations and \textit{Law School} dataset for T=5000 iterations for each pair of epsilons. We use parameters $L=\sqrt{N}$, $\beta=\sqrt{N}\log{\sqrt{N}}$ and $B=1$ for both datasets. We repeat the aforementioned pipeline 10 times to ensure more reliable statistical summary.

\paragraph{Discussion on other algorithms.} We conduct additional experiments to observe the behaviors of the more straightforward algorithms discussed in Appendix~\ref{app:AC-SA2}. We illustrate the comparison in Figure~\ref{fig:compare_with_5}. In conclusion, all of the algorithms perform similarly in the middle to high unfairness regime, while those based on \texttt{SGD3} are more stable in the low unfairness (high fairness) regime.
\begin{figure}[!htb]
\minipage{0.48\textwidth}
\centering
  \includegraphics[width=\linewidth]{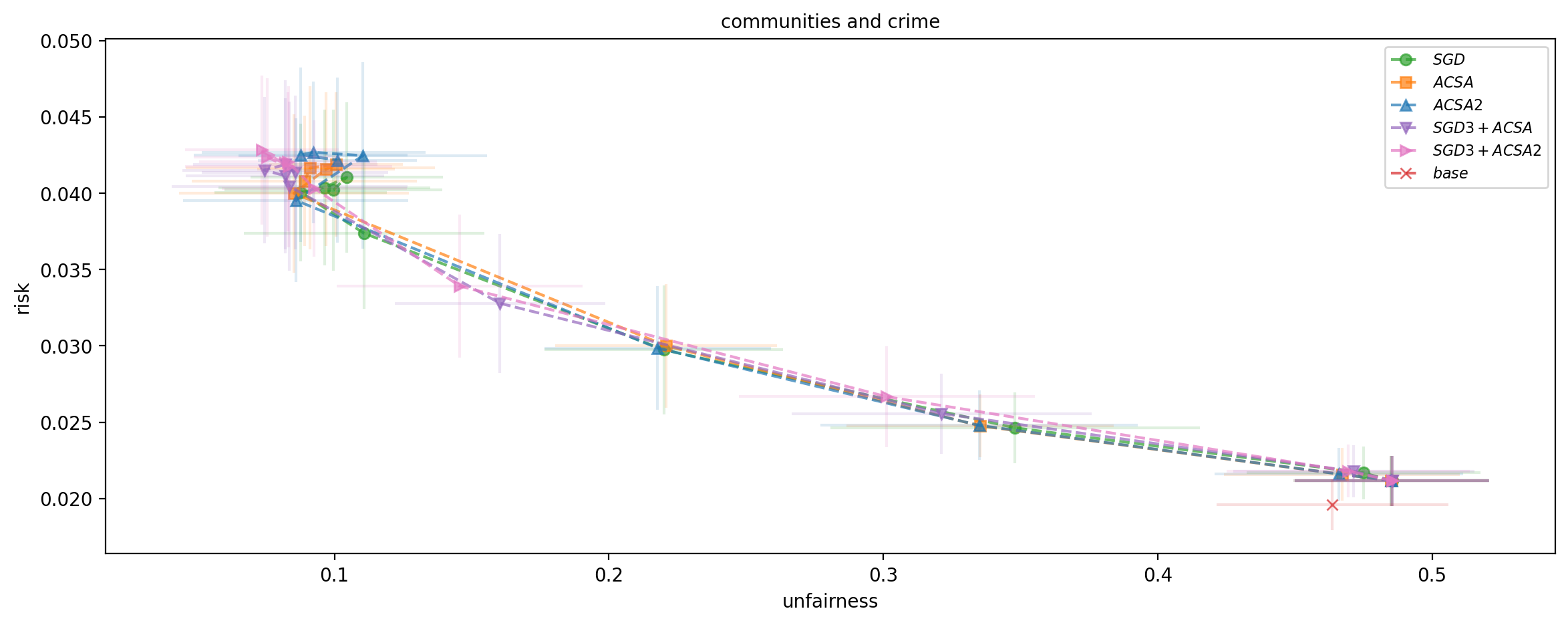}
\endminipage 
\minipage{0.48\textwidth}
\centering
  \includegraphics[width=\linewidth]{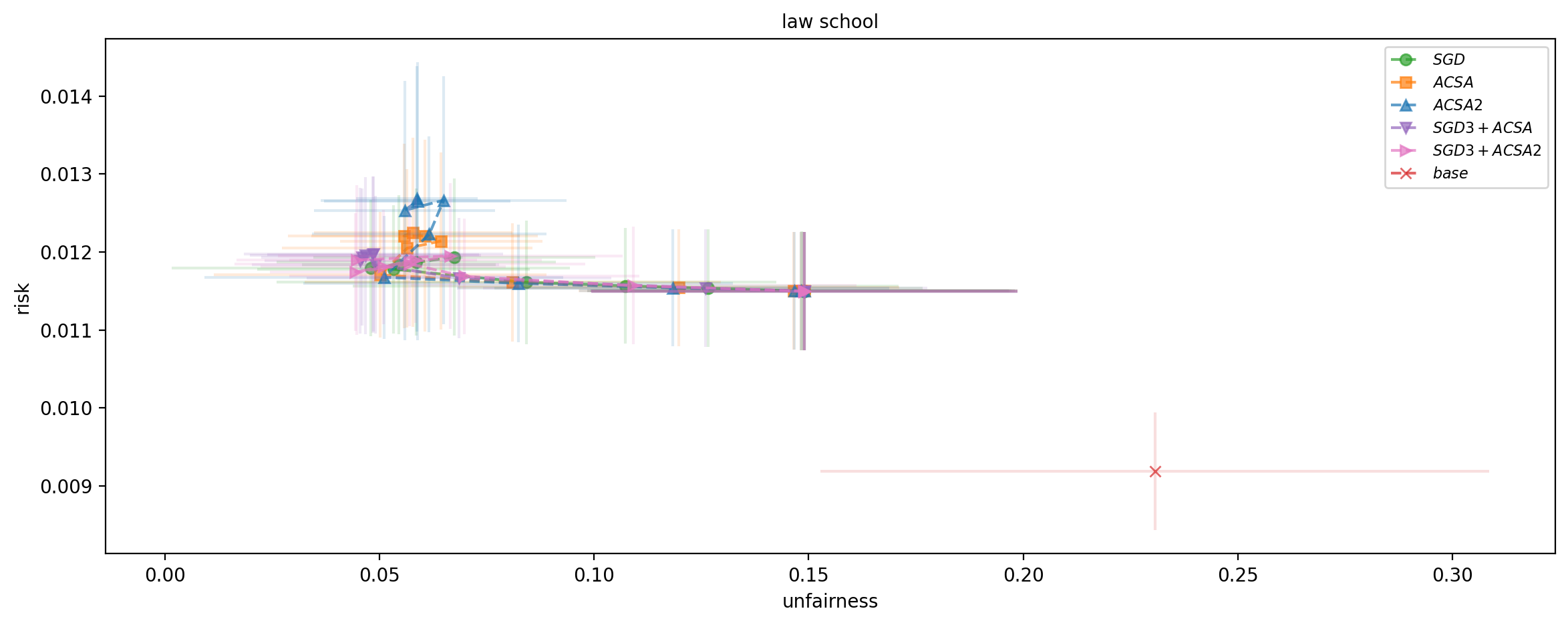}
\endminipage
\caption{Comparison of \texttt{SDG}, \texttt{ACSA}, \texttt{ACSA2}, \texttt{SDG3+ACSA} and \texttt{SDG3+ACSA2} algorithms on \textit{Communitites and Crime} and \textit{Law School} datasets.}
\label{fig:compare_with_5}
\end{figure}
\paragraph{Additional experiments on \textit{Adult} dataset.} We conduct further experiments on \textit{Adult} dataset (\citet{lichman2013}). Classically, \textit{Adult} is used for classification, however  we use it to predict individual's age on a scale of $0$ to $100$, normalized to $[0,1]$. We factor in sex as a sensitive attribute, distinguishing between male and female individuals. \textit{Adult} dataset has $48842$ instances, however we clean and preprocess it, and use a smaller sub-sample of 2000 points with 8 features throughout our experiments. 

The pipleline of the experiments is the same as the one for \textit{Law School} and \textit{Communities and Crime} datasets in the main body. We randomly split the data into training, unlabeled and testing  sets with proportions of $0.4\times0.4\times0.2$.  We use $\data_{\trainlabeled} = \{(\bx_i, s_i ,y_i)_{i = 1}^n\}$ to train a base (unfair) regressor to estimate $\eta$ and to train a classifier to estimate $\tau$. We use simple \textit{LinearRegression} and \textit{LogisticRegression} from \textit{scikit-learn} for training the regressor and the classifier, and give them to  Algorithm~\ref{alg:main} with $\data_{\trainunlab} = (\bx)_{i = n+1}^{n+T}$ for $N=10000$ iterations. We use $\data_{\test} = \{(\bx'_i, s_i', y_i')\}_{i = 1}^m$ to collect statistics. 
We take the sets of unfairness thresholds $\{(2^{-i},2^{-i})_{i \in \mathcal{I}}\}$, where $\mathcal{I}=\{0, 1, 2, 4, 5, 6, 8, 16, 32, 64, 128\}$ for. As in the experiments in the main body, we set $L=\sqrt{T}$, $\beta=\sqrt{T}/\log{\sqrt{T}}$ and $B=1$. We repeat the pipeline 10 times.

We compare our method with the ADW method. We train ADW 2 times: we use $\data_{\trainlabeled}$ and $\data_{\trainunlab}$ as training set for ADW-1, whereas for ADW-2 we use only $\data_{\trainlabeled}$. We take the set $\{(2^{-i},2^{-i})_{i \in \mathcal{I}}\}$, where $\mathcal{I}=\{1, 2, 4, 8, 16\}$ as unfairness thresholds for training both datasets. We train ADW-1 and ADW-2 for each pair of epsilons for 10 times.

In Figure~\ref{fig:adult} we illustrate the convergence of the risk and the unfairness for convergence for $\varepsilon = (2^{-8}, 2^{-8})$ unfairness threshold. We also illustrate the comparison of our model with ADW-1, ADW-2 and base models.
\begin{figure}[!htb]
\minipage{0.33\textwidth}
\centering
  \includegraphics[width=\linewidth]{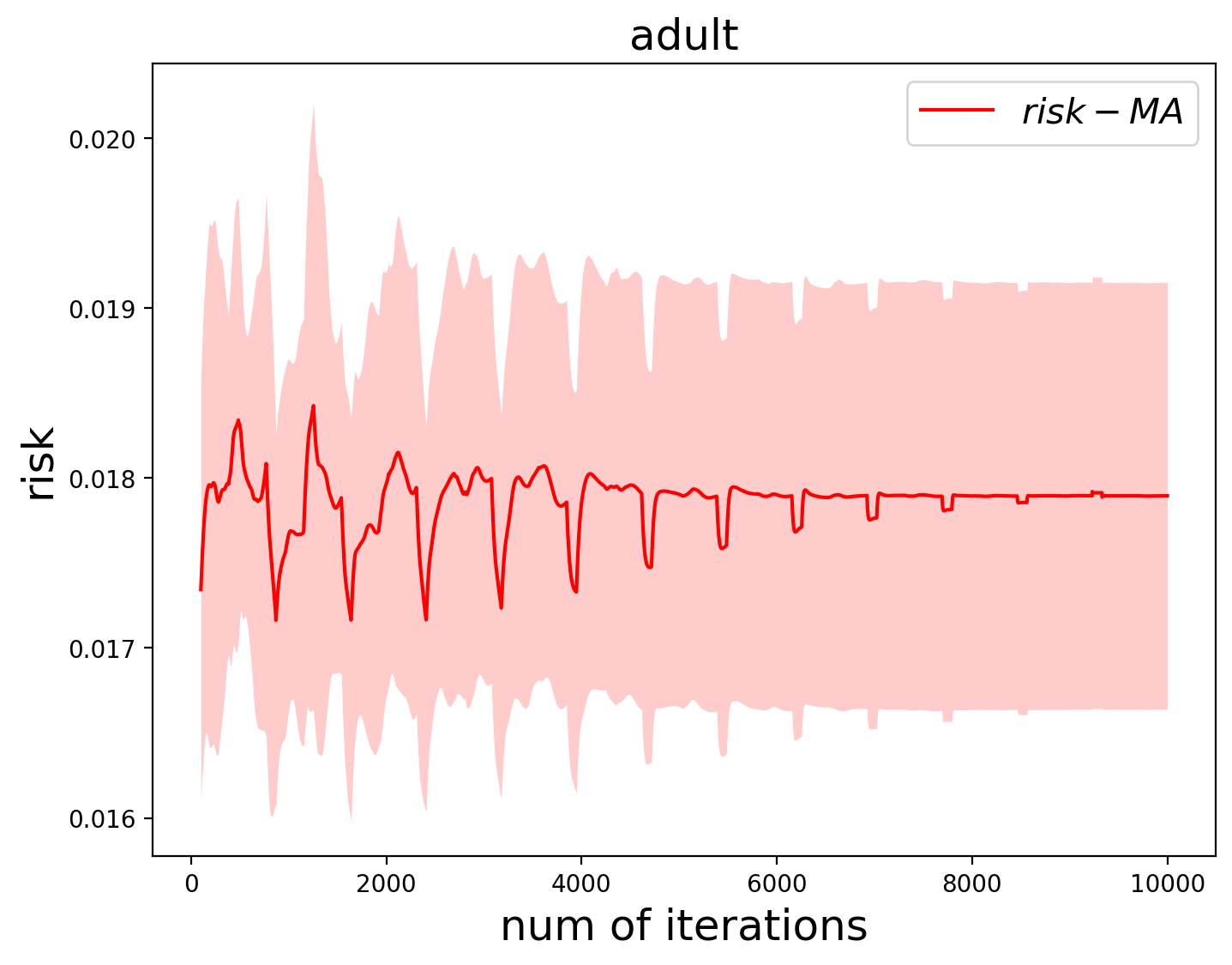}
\endminipage
\minipage{0.33\textwidth}
\centering
  \includegraphics[width=\linewidth]{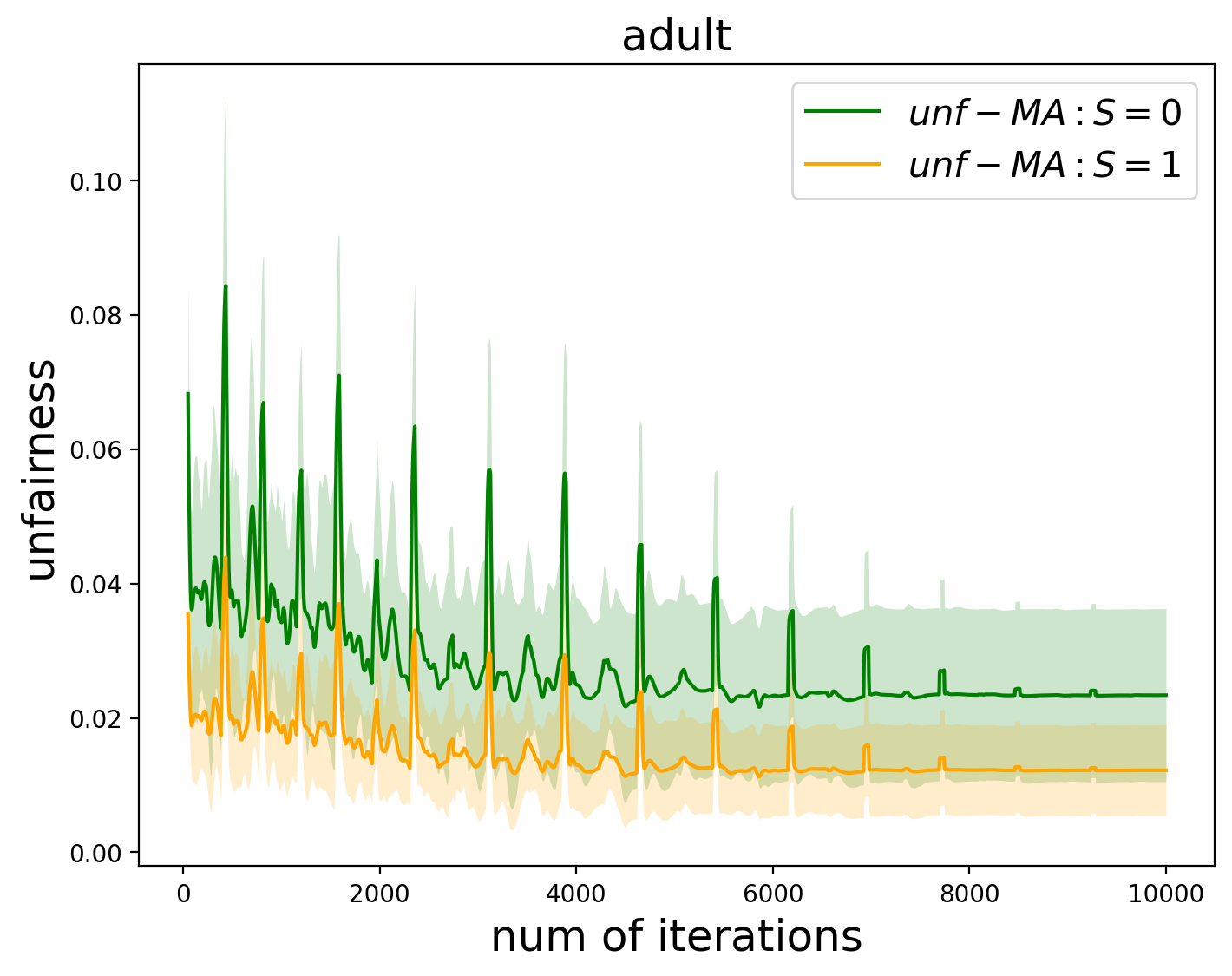}
\endminipage
\minipage{0.33\textwidth}
\centering
  \includegraphics[width=\linewidth]{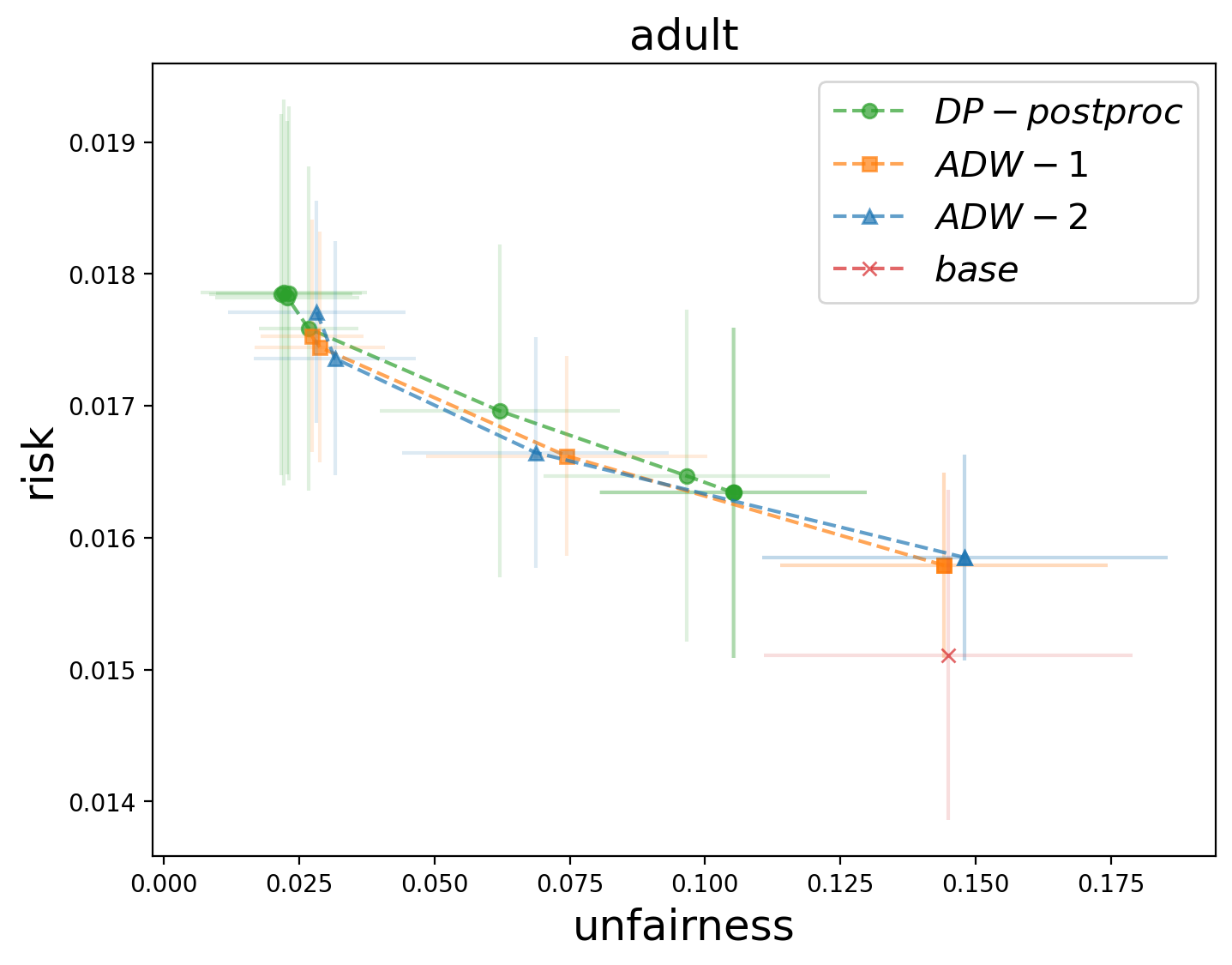}
\endminipage
\caption{Experiment on \textit{Adult} dataset: risk convergence, unfairness convergence and comparison with ADW.}
\label{fig:adult}
\end{figure}
\paragraph{Running time.} Additional details about training time for \textit{Communities and Crime}, \textit{Law School} and \textit{Adult} datasets are presented in Table~\ref{table:training_time}.
\begin{table}[]
\centering
\begin{tabular}{@{}llll@{}}
\toprule
& DP-postproc  & ADW-1 & ADW-2 \\ \midrule
communities & $5.89\pm0.47$ & $378.39\pm263.77$   & $199.05\pm161.18$ \\
law school  & $0.78 \pm 0.08$ & $240.53 \pm 178.68$ & $136.3 \pm 96.73$  \\ 
adult & $3.7\pm 0.34$ & $174.57\pm91.7$   & $96.78\pm61.35$ \\
\bottomrule
\end{tabular}
\caption{The average training time (in seconds) for one $\varepsilon$ threshold.}
\label{table:training_time}
\end{table}

\paragraph{Additional experiments on a synthetic dataset.} We conduct an additional experiment to demonstrate the results of Algorithm~\ref{alg:main} in the case of multiple sensitive attributes. We generate a synthetic dataset $\data_N = (\bX_i,S_i,y_i)_{i=1}^{N}$ of $N=2000$ points, where $(\bX_i)_i^N=(X_{i1},X_{i2},X_{i2})_i^N \sim \mathcal{N}(0,1)$. We choose $S_i=0$ if $X_{i1}\leq -0.7$, $S_i=1$ if $X_{i1} < 0$, $S_i=2$ if $X_{i1} < 0.7$ and $S_i=4$ if $X_{i1}\geq -0.7$. For $i\in[N]$, we generate $y_i=4\sum_{j=1}^3 X_{ij} + X_{i1} + \xi_i$, where $\boldsymbol{\xi}=(\xi_i)_i^N \sim \mathcal{N}(0,1)$. We split $\mathcal{D}_N$ into \textit{train}, \textit{unlabeled} and \textit{test} datasets with proportions of $0.4 \times 0.4 \times 0.2$. We use $(\bX_{train},\by_{train})$ to train the base estimator, $(\bX_{train},\boldsymbol{S}_{train})$ to train the classifier and $\bX_{unlab}$ to train the fair regression model. We evaluate our model on \textit{test} dataset. In Figure~\ref{fig:synthetic} we illustrate the distributions of the predictions (scaled to $[-1,1]$) of the fair and base models. 

\begin{figure}[!htb]
\centering
  \includegraphics[width=\linewidth]{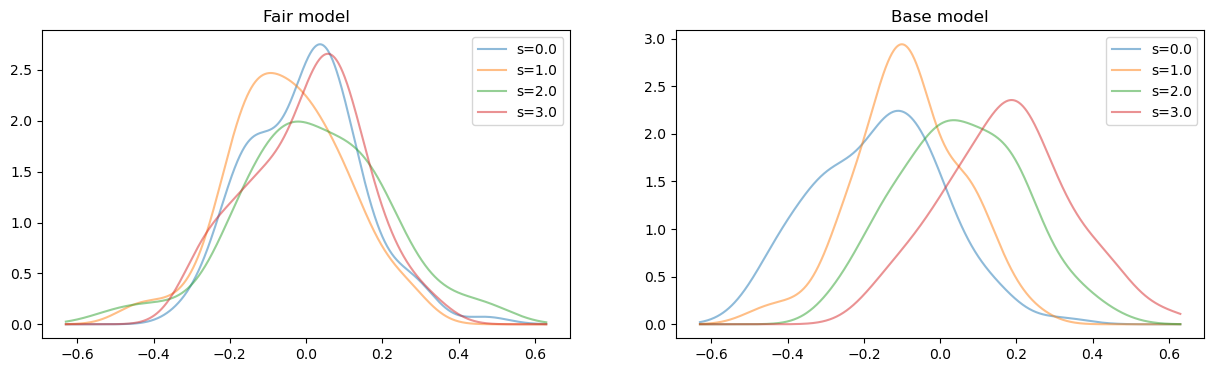}
\caption{The distributions of the (scaled) predictions of the fair and base models.}
\label{fig:synthetic}
\end{figure}

This experiment is for visual representation of the Algorithm~\ref{alg:main} in the case of multiple sensitive attributes, thus we do not collect further statistics.

\end{document}